\DeclareMathOperator*{\argmax}{argmax}
\newcommand{\doi}{{\it do}}
\def\##1\#{\begin{align}#1\end{align}}
\def\$#1\${\begin{align*}#1\end{align*}}
\newcommand{\cA}{\mathcal{A}}
\newcommand{\cB}{\mathcal{B}}
\newcommand{\cC}{\mathcal{C}}
\newcommand{\cE}{\mathcal{E}}
\newcommand{\cS}{\mathcal{S}}
\newcommand{\cU}{\mathcal{U}}
\newcommand{\cP}{{\mathcal{P}}}
\newcommand{\cO}{\mathcal{O}}
\newcommand{\rT}{\mathscr{T}}
\newcommand{\DD}{\mathbb{D}}
\newcommand{\EE}{\mathbb{E}}
\newcommand{\II}{\mathbb{I}}
\newcommand{\PP}{\mathbb{P}}
\theoremstyle{plain}
\newtheorem{theorem}{Theorem}[section]
\newtheorem{lemma}[theorem]{Lemma}
\theoremstyle{definition}
\newtheorem{definition}[theorem]{Definition}
\newtheorem{assumption}[theorem]{Assumption}
\theoremstyle{remark}
\newcommand{\compilehidecomments}{false}
	\newcommand{\longbo}[1]{}
	\newcommand{\yihan}[1]{}
	\newcommand{\nuoya}[1]{}
	\newcommand{\longbo}[1]{{\color{purple}  [\text{Longbo:} #1]}}
	\newcommand{\yihan}[1]{{\color{brown} [\text{Yihan:} #1]}}
	\newcommand{\nuoya}[1]{{\color{red} [\text{Nuoya:} #1]}}
\newcommand{\compilefullversion}{true}
	\newcommand{\OnlyInFull}[1]{}
	\newcommand{\OnlyInShort}[1]{#1}
	\newcommand{\OnlyInFull}[1]{#1}%
	\newcommand{\OnlyInShort}[1]{}%
\title{Provably Safe Reinforcement Learning with Step-wise Violation Constraints}
\author{Nuoya Xiong\\
 IIIS, 
 Tsinghua University\\
 xiongny20@mails.tsinghua.edu.cn \\
 \and
            Yihan Du\\
IIIS, 
 Tsinghua University\\
 duyh18@mails.tsinghua.edu.cn \\
 \and
 Longbo Huang\\
 IIIS, 
 Tsinghua University\\
 longbohuang@tsinghua.edu.cn
}
\begin{document}

\maketitle

\begin{abstract}
We investigate a novel safe reinforcement learning problem with step-wise violation constraints. Our problem differs from existing works in that we focus on stricter step-wise violation constraints and do not assume the existence of safe actions, making our formulation more suitable for safety-critical applications that need to ensure safety in all decision steps but may not always possess safe actions, e.g., robot control and autonomous driving.
We propose an efficient algorithm SUCBVI, which guarantees $\widetilde{\mathcal{O}}(\sqrt{ST})$ or gap-dependent $\widetilde{\mathcal{O}}(S/\cC_{\mathrm{gap}} + S^2AH^2)$ step-wise violation and $\widetilde{\mathcal{O}}(\sqrt{H^3SAT})$ regret. Lower bounds are provided to validate the optimality in both violation and  regret performance with respect to the number of states $S$ and the total number of steps $T$. 
Moreover, we further study an innovative safe reward-free exploration problem with step-wise violation constraints. For this problem, we design algorithm SRF-UCRL to find a near-optimal safe policy, which achieves nearly state-of-the-art  sample complexity $\widetilde{\mathcal{O}}((\frac{S^2AH^2}{\varepsilon}+\frac{H^4SA}{\varepsilon^2})(\log(\frac{1}{\delta})+S))$, and guarantees $\widetilde{\mathcal{O}}(\sqrt{ST})$ violation during exploration.  Experimental results demonstrate the  superiority of our algorithms in safety performance and corroborate our theoretical results. 
\end{abstract}

\section{Introduction}

In recent years, reinforcement learning (RL)~\citep{sutton2018reinforcement} has become a powerful framework for  decision making and learning in unknown environments. 
Despite the ground-breaking success of RL in games~\citep{lanctot2019openspielgame}, 
recommendation systems~\citep{afsar2022reinforcementrecommand} 
 and complex tasks in simulation environments~\citep{zhao2020sim}, 
most existing RL algorithms focus on optimizing the cumulative reward and do not take into consideration the risk aspect, e.g., the agent runs into catastrophic situations during control. 
The lack of strong safety guarantees hinders the application of RL to broader safety-critical scenarios such as autonomous driving, robotics and healthcare.
For example, 
for robotic control in complex environments, it is crucial to prevent the robot from getting into  dangerous situations, e.g., hitting walls or falling into water pools, at all time. 

To handle the safety requirement, a common 
approach
is to formulate safety as a long-term expected violation constraint in each episode. 
This approach focuses on seeking a policy whose cumulative expected violation in each episode is below a certain threshold.
However, for applications where an agent needs to avoid disastrous situations throughout the decision process, e.g., 
a robot needs to avoid hitting obstacles at each step, merely reducing the long-term expected violation is not sufficient to guarantee safety.

Motivated by this fact, we investigate safe reinforcement learning with a more fine-grained constraint, called \emph{step-wise} violation constraint, which aggregates all nonnegative violations at each step 
(no offset between positive and negative violations permitted).
We name this problem Safe-RL-SW.  
Our step-wise violation constraint differs from prior expected violation constraint~\citep{wachi20CMDP, exploration2020CMDP, kalagar2021aaaiCMDP} 
 in two aspects: (i) Minimizing the step-wise violation enables the agent to learn an optimal policy that avoids unsafe regions deterministically, while reducing the expected violation only guarantees to find a policy with low expected violation, instead of a per-step zero-violation policy. (ii) Reducing the aggregated nonnegative violation allows us to have a risk control for each step, while a small cumulative expected violation can still result in a large cost at some individual step and cause danger, if other steps with smaller costs offset the huge cost. 

Our problem faces two unique challenges. First, the step-wise violation requires us to guarantee a small violation at each step, which demands very different algorithm design and analysis from that for the expected violation~\citep{wachi20CMDP,exploration2020CMDP, kalagar2021aaaiCMDP}. 
Second, in safety-critical scenarios, the agent needs to identify not only unsafe states but also potentially unsafe states, which are states that may appear to be safe but will ultimately lead to unsafe regions with a non-zero probability. 
For example, a self-driving car needs to learn to slow down or change directions early, foreseeing the potential danger in advance in order to ensure safe driving \citep{nips2021luoimaginethefuture}. 
Existing safe RL works focus mainly on the expected violation~\citep{wachi20CMDP, liu2021zeroviolation, wei2022triple}, or assume no potentially unsafe states by imposing the prior knowledge of a safe action for each state~\citep{anami2021safeLFA}. Hence, techniques in previous works cannot be applied to handle step-wise violations. 

    


To systematically handle these two challenges, we formulate safety as an unknown cost function for each state without assuming safe actions, and consider minimizing the step-wise violation instead of the expected violation. We propose a general algorithmic framework called \textbf{S}afe \textbf{UCBVI} 
  (SUCBVI). 
Specifically, in each episode, we first estimate the transition kernel and cost function in an optimistic manner, tending to regard a state as safe at the beginning of learning. After that, we introduce novel dynamic programming  to identify potentially unsafe states and determine safe actions, based on our  estimated  transition and costs. Finally, we employ the identified safe actions to conduct  value iteration. 
This mechanism can adaptively update  dangerous regions, and help the agent plan for the future, which keeps her away from all states that may lead to unsafe states.
As our estimation becomes more accurate over time, the safety violation becomes smaller and eventually converges to zero. 
Note that without the assumption of safe actions, the agent knows nothing about the environment at the beginning. Thus, it is impossible for her to achieve an absolute zero violation. Nevertheless, we show that SUCBVI can achieve sub-linear $\widetilde{\cO}(\sqrt{ST})$ cumulative violation or an 
$\widetilde{\cO}(S/\cC_{\mathrm{gap}} + S^2AH^2)$ gap-dependent  violation that is independent of $T$. This violation implies that as the RL game proceeds, the agent eventually learns how to avoid unsafe states. 
We also provide a matching lower bound to demonstrate the optimality of SUCBVI in both violation and regret.

Furthermore, we apply our step-wise safe RL framework to the  reward-free exploration \citep{jinchi20rewardfree} setting.  
In this novel safe reward-free exploration,
the agent needs to guarantee small step-wise violations during exploration, and also output a near-optimal safe policy.  Our algorithm achieves $\varepsilon$ cumulative step-wise violation during exploration, and also identifies a $\varepsilon$-optimal and safe policy. See the definition of $\varepsilon$-optimal and $\varepsilon$-safe policy in Section \ref{sec:formulation of safe-rfe}.
Another interesting application of our framework is safe zero-sum Markov games, which we discuss in Appendix \OnlyInFull{\ref{appendix:safe markov games}}\OnlyInShort{D}.

The main contributions of our paper are as follows.

\begin{itemize}[leftmargin=*]
    \item We formulate the safe RL with step-wise violation constraint problem (Safe-RL-SW), which models safety as a cost function for states and aims to minimize the cumulative step-wise violation.
    Our formulation is particularly useful for safety-critical applications where avoiding disastrous situations at each decision step is desirable, e.g., autonomous driving and robotics.
    \item We provide a general algorithmic framework SUCBVI, which is equipped with an innovative dynamic programming to identify  potentially unsafe states and distinguish safe actions. 
    We establish $\widetilde{\cO}(\sqrt{H^3SAT})$ regret and $\widetilde{\cO}(\sqrt{ST})$ or  $\widetilde{\cO}(S/\cC_{\mathrm{gap}} + S^2AH^2)$ gap-dependent violation guarantees, which exhibits the capability of SUCBVI in attaining high rewards while maintaining small violation. 

    \item We further establish  $\Omega(\sqrt{HST})$ regret and $\Omega(\sqrt{ST})$ violation  lower bounds for Safe-RL-SW.  The lower bounds demonstrate the optimality of algorithm SUCBVI  in  both regret minimization and safety guarantee, with respect to factors $S$ and $T$. 

    \item 
    We consider step-wise safety constraints in the reward-free exploration setting, which calls the Safe-RFE-SW problem. In this setting, we design an efficient algorithm SRF-UCRL,
    which ensures $\varepsilon$ step-wise violations  during exploration and plans a $\varepsilon$-optimal and $\varepsilon$-safe policy for any reward functions with probability at least $1-\delta$. 
    We obtain an $\widetilde{\cO}((\frac{S^2AH^2}{\varepsilon}+\frac{H^4SA}{\varepsilon^2})(\log(\frac{1}{\delta})+S))$ sample complexity and an  $\widetilde{\cO}(\sqrt{ST})$  violation guarantee for SRF-UCRL, which shows the efficiency of SRF-UCRL in sampling and danger avoidance even without reward signals. To the best of our knowledge, this work is the first to study the step-wise violation constraint in the RFE setting.
\end{itemize}

\section{Related Work}
\paragraph{Safe RL.} Safety is an important topic in RL, which has been extensively studied. 
The constrained Markov decision process (CMDP)-based approaches handle safety via cost functions, and aim to minimize the expected episode-wise violation, e.g., \citep{mingyu2019nipspolicyopt, wachi20CMDP, qiu2020upper,exploration2020CMDP, turchetta2020safe,ding2020natural,singh2020learning,kalagar2021aaaiCMDP,simao2021alwayssafe,ding2021provably}, or achieve zero episode-wise violation, e.g., \citep{liu2021zeroviolation, Bura2021nipszeroviolation, wei2022triple,  sootla2022enhancing}. 
 Apart from CMDP-based approaches, there are also other works that tackle safe RL by control-based approaches \citep{felix2017control,chow2018nipslycontrol,dalal2018safe,zhuoran2022enforcinghardconstraints}, policy optimization \citep{uchibe2007constrained,achiam2017constrained,tessler2018reward,liu2020ipo,stooke2020responsive} and safety shields \citep{alshiekh2018safe}. In recent years, there are also some works studying step-wise violation with  additional assumptions such as safe action \citep{anami2021safeLFA} or known state-action subgraph \citep{shi2023near}. More detailed comparisons are provided in \OnlyInFull{\ref{appendix:related}}\OnlyInShort{B}.
\vspace{-1em}
\paragraph{Reward-free Exploration with Safety.}
Motivated by sparse reward signals in realistic applications, reward-free exploration (RFE) has received extensive attention \citep{jinchi20rewardfree, ACT2020adaptiveRFE, Pierre2020Fastactive}. 
Recently,
\cite{miryoosefi2022simple,ruiquan2022safeRFE} also study RFE with safety constraints. However, \cite{miryoosefi2022simple} only considers the safety constraint after the exploration phase.   \cite{ruiquan2022safeRFE} considers the episode-wise constraint during the exploration phase, requiring an assumption of a known baseline policy. The detailed comparison is provided in Appendix \OnlyInFull{\ref{appendix:related}}\OnlyInShort{B}.

\section{The Safe MDP Model}
\vspace{-0.2em}
\textbf{Episodic MDP.} 
In this paper, we consider the finite-horizon episodic Markov Decision Process (MDP),  represented by a tuple $(\cS, \cA, H, \PP, r)$. Here $\cS$ is the state space, $\cA$ is the action space, and $H$ is the length of each episode. 
$\PP = \{\PP_h:\cS \times \cA \mapsto \triangle_{\cS}\}_{h \in [H]}$ is the transition kernel, and $\PP_h(s'|s,a)$ gives the transition probability from $(s,a)$ to $s'$ at step $h$. $r = \{r_h:\cS \times \cA \mapsto [0,1]\}_{h \in [H]}$ is the reward function, and $r_h(s,a)$ gives the reward of taking action $a$ in state $s$ at step $h$. 
A policy $\pi = \{\pi_h:\cS \mapsto \cA\}_{h \in [H]}$ consists of $H$ mappings from the state space to action space.
In each episode $k$,  the agent first chooses a policy $\pi^k$. At each step $h \in [H]$, the agent observes a state $s_h^k$, takes an action $a_h^k$, and then goes to a next state $s_{h+1}^k$ with probability $\PP_h(s_{h+1}^k\mid s_h^k,a_h^k)$. The algorithm executes $T = HK$ steps. Moreover,
 the state value function $V^{\pi}_h(s,a)$ and state-action value function $Q_h^\pi(s,a)$ for a policy $\pi$ can be defined as
\begin{small}
\begin{align*}
V^\pi_h(s) &:= \EE_{\pi} \Bigg[\sum_{h' = h}^H r_{h'}(s_{h'}, \pi_{h'}(s_{h'})) \Bigg|s_{h}=s \Bigg],\\
Q^\pi_h(s,a) &:= \EE_{\pi} \Bigg[\sum_{h' = h}^H r_{h'}(s_{h'}, \pi_{h'}(s_{h'})) \Bigg|s_{h}=s, a_{h}=a \Bigg].
\end{align*}
\end{small}

\textbf{Safety Constraint.} 
To model unsafe regions in the environment, similar to \cite{wachi20CMDP, yu2022reachability},  we
define a safety cost function $c:\cS \mapsto [0,1]$.
Let $\tau \in [0,1]$ denote the safety threshold. A state is called {\em safe} if $c(s)\le \tau$, and called {\em unsafe} if $c(s)>\tau$. Similar to \cite{exploration2020CMDP, anami2021safeLFA}, when the agent arrives in a state $s$, she will receive a cost signal $z(s) = c(s) + \zeta$, where $\zeta$ is an independent, zero-mean and 1-sub-Gaussian noise. 
Denote $(x)_{+} = \max\{x,0\}$.
The violation in state $s$ is defined as $(c(s)-\tau)_{+}$, and the cumulative step-wise violation till episode $K$ is 
\begin{equation}\label{eq:violation definition}
    C(K) = \sum_{k=1}^K\sum_{h=1}^H  (c(s_h^k)-\tau)_{+} .
\end{equation}
Eq.~\eqref{eq:violation definition} represents the accumulated step-wise violation during training. When the agent arrives in state $s_h^k$ at step $h$ in episode $k$, she will suffer violation $(c(s_h^k)-\tau)_{+}$.  
This violation setting is significantly different from the previous CMDP setting \citep{ qiu2020upper,ding2020natural,exploration2020CMDP,ding2021provably,wachi20CMDP,liu2021optpess,kalagar2021aaaiCMDP}. They study the episode-wise expected violation   $C'(K) = \sum_{k=1}^K (\EE[\sum_{h=1}^H c(s_h^k, a_h^k)]-\mu)$.
There are two main differences between the step-wise violation and episode-wise expected violation: (i) First, the expected violation constraint allows the agent to get into unsafe states occasionally. Instead, the step-wise violation constraint enforces the agent to stay in safe regions at all time. (ii) Second, in the episode-wise constraint, the violation $c(s_h,a_h)-\tau$ at each step is allowed to be positive or negative, and they can cancel out in one episode. Instead, we consider a \emph{nonnegative} function $(c(s)-\tau)_+$ for each step in our step-wise violation, which imposes a stricter constraint.



Define $\cU:=\{s \in \cS\mid c(s)>\tau\}$ 
as the set of all \emph{unsafe states}. Let $\iota = \{s_1,a_1,\cdots,s_H,a_H\}$ denote a trajectory. Since a feasible policy needs to satisfy the constraint at each step,  we define the set of feasible policies as
    $\Pi = \{\pi\mid \Pr\{\ \exists \ h \in [H], s_h \in \cU\mid \iota \sim \pi\}=0\}.$
The feasible policy set $\Pi$ consists of all policies under which one never reaches any unsafe state in an episode.

\textbf{Learning Objective.} 
In this paper, we consider the regret minimization objective.
Specifically, define $\pi^* = \argmax_{\pi \in \Pi} V_1^{\pi}$, $V^* = V^{\pi^*}$ and $Q^* = Q^{\pi^*}$. 
The regret till $K$ episodes is then defined as 
    \begin{equation*}
         R(K) = \sum_{t= 1}^K  (V^*_1(s_1) - V^{\pi^k}_1(s_1) ) ,
    \end{equation*}
    where $\pi^k$ is the policy taken in episode $k$.
    Our objective is to minimize $R(K)$ to achieve a good performance, and minimize the violation $C(K)$ to guarantee the safety at the same time. 

\section{Safe RL with Step-wise Violation Constraints}\label{sec:model and analysis of safe RL}
\vspace{-0.2em}
\subsection{Assumptions and Problem Features}\label{Section:Assumptions and Problem Features}
Before introducing our algorithms, we first state the  important assumptions and problem features for Safe-RL-SW. 

Suppose $\pi$ is a feasible policy. Then, if we arrive at $s_{H-1}$ at step $H-1$,
$\pi$ needs to select an action that guarantees $s_H\notin \cU$.
Define the \textit{transition set} 
$\Delta_h(s,a) = \{s'\mid \PP_h(s' \!\mid\! s,a)>0\}$ for any $\forall (s,a,h) \!\in\! \cS \!\times\! \cA\times [H]$,
which represents the set of possible next states after taking action $a$ in state $s$ at step $h$. 
Then, at the former step $H-1$, the states $s$ is potentially unsafe if it satisfies that  $\Delta_{H-1}(s,a)\cap \cU\neq \emptyset$ for all $a \in \cA$. (i.e., no matter taking what action, there is a positive probability of transitioning to an unsafe next state).
Therefore, we can recursively define the set of  \emph{potentially unsafe states at step $h$} as 
\begin{equation}
\begin{footnotesize}
\label{eq:defCh}
    \cU_h = \cU_{h+1}\cup \{s\mid\ \forall\  a \in \cA, \Delta_{h}(s,a)\cap \cU_{h+1}\neq \emptyset\},
\end{footnotesize}
\end{equation}
where $\cU_H=\cU.$ Intuitively, if we are in a state $s_h\in \cU_h$ at step $h$, no action  can be taken to completely avoid reaching potentially unsafe states $s_{h+1}\in \cU_{h+1}$ at step $h+1$. Thus, in order to completely prevent from getting into unsafe states $\cU$ throughout all steps, one needs to avoid potentially unsafe states in $\cU_h$ at step $h$.
From the above argument, we have that  $s_1\notin \cU_1$ is equivalent to the existence of feasible policies. The detailed proof is provided in Appendix \OnlyInFull{\ref{appendix:detailed analysis of assum}}\OnlyInShort{F}. 
Thus, we make the following necessary  assumption.

\begin{assumption}[Existence of feasible policies] \label{assum:existence of feasible pol}
The initial state $s_1$ satisfies
$s_1 \notin \cU_1$.
\end{assumption}
\vspace{-0.5em}
For any $s \in \cS$ and $h \in [H-1]$, we define the set of safe actions for state $s$ at step $h$ as
\begin{align}\label{eq:def of Asafe(s)}
    A_h^{safe}(s) = \{a \in \cA\mid \Delta_h(s,a)\cap \cU_{h+1}=\emptyset\},
\end{align}
and let $A_H^{safe}(s) = \cA$.
$A_h^{safe}(s)$ stands for the set of all actions at step $h$ which will not lead to potentially unsafe states in $\cU_{h+1}$.
 Here, $\{\cU_h\}_{h \in [H]}$ 
 and $\{A_h^{safe}(s)\}_{h \in  [H]}$ are defined by dynamic programming: If we know all possible next state sets $\{\Delta_h(s,a)\}_{h\in[H]}$ and unsafe state set $\cU=\cU_H$, we can calculate all potentially unsafe state sets $\{\cU_h\}_{h \in [H]}$ and safe action sets $\{A_h^{safe}(s)\}_{h\in[H]}$, and choose feasible policies to completely avoid  unsafe states.

\subsection{Algorithm SUCBVI}\label{sec: Alg for safe RL}

Now we present our main algorithm \textbf{S}afe \textbf{UCBVI} (SUCBVI), which is based on previous classic RL algorithm UCBVI \citep{UCBVI2017}, and equipped with a novel dynamic programming to identify potentially unsafe states and safe actions.  
The pseudo-code is shown in Algorithm~\ref{alg:safe RL}. 
First, we provide some intuitions about how SUCBVI works. At each episode, we first estimate all the unsafe states based on the historical data. Then, we perform a dynamic programming procedure introduced in Section~\ref{Section:Assumptions and Problem Features} and calculate the safe action set $A_h^{safe}(s)$ for each state $s$. Then, we perform value iteration in the estimated safe action set. As the estimation becomes more accurate, SUCBVI will eventually avoid potentially unsafe states
 and achieve both sublinear regrets and violations.
 
Now we begin to introduce our algorithm. In the beginning, we initialize $\Delta_h(s,a) = \emptyset$ for all $(h, s, a) \in [H]\times \cS\times \cA$. It implies that the agent considers all actions to be safe at first, because no action will lead to unsafe states from the agent's perspective. 
In each episode, we first estimate the empirical cost $\hat{c}(s)$ based on historical cost feedback $z(s)$, and regard state $s$ as safe if $\bar{c}(s) = \hat{c}(s)-\beta >\tau$ for some bonus term $\beta$, which aims to guarantee $\bar{c}(s)\le c(s)$ (Line~\ref{line:update emp cost}). 
Then, we calculate the estimated unsafe state set $\cU_H^k$, which is a subset of the true unsafe state set $\cU=\cU_H$ with a high probability by optimism. With $\cU_{H}^k$ and $\Delta_h^k(s,a)$, we can estimate potentially unsafe state sets $\cU_h^k$ for all $h \in [H]$ by Eq.~\eqref{eq:defCh} recursively.

Then, we perform value iteration to compute the optimistically estimated optimal policy. 
Specifically, for any hypothesized safe state $s \notin \cU_h^k$, we update the greedy policy on the estimated safe actions, i.e., $\pi_h^k(s) = \max_{a \in A_h^{k,safe}(s)}Q_h^k(s,a)$ (Line \ref{line:update policy}).
On the other hand, for any  hypothesized unsafe state $s_h \in \cU_h$, since there is no action that can completely avoid unsafe states, we ignore safety costs and simply update the policy by $\pi_h^k(s) = \max_{a \in \cA}Q(s,a)$ (Line~\ref{line:update policy 2}).
 After that, we calculate the estimated optimal policy $\pi^k$ for episode $k$, and the agent follows $\pi^k$ and collects a trajectory. Then, we update $\Delta_h^{k+1}(s,a)$ by incorporating the observed state $s_{h+1}^k$ into the set $\Delta_h^k(s_h^k,a_h^k)$. Under this updating rule, it holds that $\Delta_h^k(s,a)\subseteq \Delta_h(s,a)$ for all $(s,a) \in \cS\times \cA$.
\begin{algorithm}[t]
\begin{algorithmic}[1]
\caption{SUCBVI}
	\label{alg:safe RL}
 \STATE Initialize: $\Delta_h^1(s,a) = \emptyset$, $N_h^{1}(s,a)=N_h^1(s,a,s') = 0$ for all $s \in \cS$, $a \in \cA$, $s' \in \cS$ and $h \in [H]$. 
 
 \FOR{$k=1,2,\cdots,K$} 
     \STATE \textcolor{blue}{Update the optimistic estimates of cost.}
    \STATE Update the empirical cost $\hat{c}(s)$ and calculate $\bar{c}(s) = \hat{c}(s) - \beta(N^k(s), \delta)$ for all $s \in \cS$. $\triangleright$ \label{line:update emp cost}
    \STATE Define $\cU_{H}^k = \{s\mid \bar{c}(s) > \tau\}$ \label{line:decide unsafe state} and $\cU_h^k = \cU_{h+1}^k\cup \{s\mid \forall a \in A, \Delta_h^k(s,a)\cap \cU_{h+1}^k\neq\emptyset\}$ for all $h\in[H]$ recursively. Calculate $\{A_h^{k,safe}(s)\}_{h \in [H]}$ by Eq.~\eqref{eq:def of Asafe(s)} with $\{\cU_h^k\}_{h \in [H]}$.
     \\$\triangleright$ \textcolor{blue}{Perform value iteration with previous estimates.}
     \FOR{$h=H,H-1,\cdots,1$}
        \FOR{$s \in \cS$}
            \FOR{$a \in \cA$}
            \STATE Compute $\hat{\PP}^k_h(s'\mid s,a) = \frac{N_h^k(s,a,s')}{N_h^k(s,a)}$. 
            \STATE $Q_h^k(s,a) =\min\{H, r_h(s,a) + \sum_{s'}\hat{\PP}_h^k(s'\mid s,a)V_{h+1}^k(s')+ \alpha(N_h^k(s,a))\}.$
            \ENDFOR
                
            \IF{$s \notin \cU_h^{k}$}
            \STATE $V_h^k(s) = \max_{a \in A_h^{k,safe}(s)} Q_h^k(s,a), \pi_h^k(s)$ = $\arg\max_{a \in A_h^{k,safe}(s)}Q_h^k(s,a)$ \label{line:update policy}.
            \ELSE 
            \STATE  $V_h^k(s) = \max_{a \in \cA} Q_h^k(s,a),\pi_h^k(s)$ = $\arg\max_{a \in \cA}Q_h^k(s,a)$.\label{line:update policy 2}
            \ENDIF
        \ENDFOR
     \ENDFOR
     \FOR{$h = 1,2,\cdots,H$}
        \STATE Take action $a_h^k = \pi_h^k(s_h^k)$ and observe state $s_{h+1}^k.$
        \\$\triangleright$ \textcolor{blue}{Update the estimates of $\Delta(s,a)$.}
        \STATE $\Delta_h^{k+1}(s_h^k,a_h^k) = \Delta_h^k(s_h^k,a_h^k) \cup \{s_{h+1}^k\}$\label{eq:update delta}. Increase $N_h^{k+1}(s_h^k,a_h^k), N_h^{k+1}(s_h^k,a_h^k,s_{h+1}^k)$ by $1$.
     \ENDFOR
     \ENDFOR
\end{algorithmic}
\end{algorithm}

%

The performance of Algorithm~\ref{alg:safe RL} is summarized below in  Theorem~\ref{thm:safe RL}.

\begin{theorem}\label{thm:safe RL}
Let $\alpha(n,\delta) = 7H\sqrt{\frac{\ln(5SAHK/\delta)}{n}}$ and $\beta(n,\delta) = \sqrt{\frac{2}{n}\log (SK/\delta)}$. 
With probability at least $1-\delta$, the regret and step-wise violation of Algorithm~\ref{alg:safe RL} are bounded by 
\begin{align*}
    R(K) = \widetilde{\cO}(\sqrt{H^2SAT}),\qquad 
    C(K) = \widetilde{\cO}(\sqrt{ST}+S^2AH^2).
\end{align*}
Moreover, if $\cC_{\mathrm{gap}} \triangleq\min_{s \in \cU} (c(s)-\tau)_+ >0$, we have $C(K) = \widetilde{\cO}(S/\cC_{\mathrm{gap}}+S^2AH^2)$.
\end{theorem}
Theorem~\ref{thm:safe RL} shows that SUCBVI achieves both sublinear regret and violation. Moreover, when all the unsafe states have a large cost compared to the safety threshold $\tau$, i.e., $\cC_{\mathrm{gap}} = \min_{s \in \cU}(c(s)-\tau)_+>0$ is a constant, we can distinguish the unsafe states easily and get a constant violation. In particular, when $\tau = 1$, Safe-RL-SW degenerates to the unconstrained MDP and Theorem \ref{thm:safe RL} maintains the same regret $\widetilde{\cO}(\sqrt{H^2SAT})$ as UCBVI-CH \citep{UCBVI2017}, while CMDP algorithms \citep{exploration2020CMDP, liu2021optpess} suffer a larger $\widetilde{\cO}(\sqrt{H^3S^3AT})$ regret. 

We provide the analysis idea of Theorem~\ref{thm:safe RL} here. 
First, by the updating rule of $\Delta_h^k(s,a)$, we show that $\cU_h^k$ and $A_h^{k,safe}(s)$ have the following crucial properties:
$
     \cU_h^k\subseteq \cU_h,\ \   A_h^{safe}(s) \subseteq A_h^{k,safe}(s).
$
 Based on this property, we can prove that $Q^*_h(s,a)\le Q_h^k(s,a)$ and $V^*_h(s)\le V_h^k(s)$ for all $(s,a)$, and then apply the regret decomposition techniques to derive the regret bound. 

 Recall that $\pi^k$ is a feasible policy with respect to the estimated unsafe state set $\cU_H^k$ and transition set $\{\Delta_h^k(s,a)\}_{h \in [H]}$.  If the agent takes policy $\pi^k$ and the transition follows $\Delta_h^k(s,a)$, i.e., $s_{h+1}^k\in \Delta_h^k(s,a)$, the agent never arrive any estimated unsafe state $s \in \cU_H^k$ in this episode.  Hence the agent will suffer at most $\widetilde{\cO}(\sqrt{ST})$  step-wise violation or $\widetilde{\cO}(S/\cC_{\mathrm{gap}} + S^2AH^2)$ gap-dependent bounded violation.
 Yet,  the situation $s_{h+1}\in \Delta_h^k(s,a)$ does not always hold for all $h \in [H]$. 
 In the case when $s_{h+1}^k\notin \Delta_h^k(s,a)$, we add the newly observed state $s_{h+1}^k$ into $\Delta_h^{k+1}(s_h^k,a_h^k)$ (Line~\ref{eq:update delta}).
 We can show that this case  appears at most $S^2AH$ times, and thus incurs  $O(S^2AH^2)$ additional violation. 
 Combining the above two cases,  
 the total violation can be   upper bounded.

 %
 

\section{Lower Bounds for Safe-RL-SW} \label{sec:lower-bound}
In this section, we provide a matching lower bound for Safe-RL-SW in Section~\ref{sec:model and analysis of safe RL}. 
The lower bound shows that if an algorithm always achieves a sublinear regret in Safe-RL-SW, it must incur an $\Omega(\sqrt{ST})$ violation. This result matches our upper bound in Theorem~\ref{thm:safe RL}, showing that SUCBVI achieves the optimal violation performance. 
\begin{theorem}\label{thm:lower bound of violation}
    If an algorithm has an expected regret $\mathbb{E}_{\pi}[R(K)]\le \frac{HK}{24}$ for all MDP instances, there exists an MDP instance in which the algorithm suffers expected violation $\mathbb{E}_{\pi}[C(K)] = \Omega(\sqrt{ST})$.
\end{theorem}
 
Now we validate the optimality in terms of regret.
Note that if we do not consider safety constraints, the  lower bound for classic RL~\citep{Obsnad2016lowerboundRL} 
can be applied to our setting. Thus,  we also have a $\Omega(\sqrt{T})$ regret lower bound. To understand the essential hardness brought by safety constraints, we further investigate whether safety constraints will lead to $\Omega(\sqrt{T})$ regret, given that we can achieve a $o(\sqrt{T})$ 
regret on some good instances without the safety constraints.
\begin{theorem}\label{thm:lower bound regret}
      For any $\alpha\in (0,1)$,  there exists a parameter $n$ and $n$ MDPs $M_1,\dots,M_n$ satisfying that: 

    1. If we do not consider any constraint, there is an algorithm which achieves $\widetilde{\cO}(T^{(1-\alpha)/2})$ regret compared to the unconstrained optimal policy on all MDPs.

    2. If we consider the safety constraint, any algorithm with $O(T^{1-\alpha})$ expected violation will achieve $\Omega(\sqrt{HST})$ regret compared to the  constrained optimal  policy on one of MDPs.
\end{theorem}
Intuitively, Theorem~\ref{thm:lower bound regret} shows that if one achieves sublinear violation, she  must suffer at least $\Omega(\sqrt{T})$ regret even if she can achieve $o(\sqrt{T})$ regret without considering constraints. 
This theorem demonstrates the hardness particularly brought by the step-wise constraint, and corroborates the optimality of our results. Combining with Theorem \ref{thm:lower bound of violation}, the two lower bounds show an essential trade-off between the violation and performance.
\section{Safe Reward-Free Exploration with Step-wise Violation Constraints}
\subsection{Formulation of Safe-RFE-SW}\label{sec:formulation of safe-rfe}
In this section, we consider Safe RL in the reward-free exploration (RFE) setting~\citep{jinchi20rewardfree,ACT2020adaptiveRFE,Pierre2020Fastactive} called Safe-RFE-SW, to show the generality of our proposed framework. 
In the RFE setting, the agent does not have access to reward signals and only receives random safety cost feedback $z(s) = c(s) + \zeta$.
To impose safety requirements, Safe-RFE-SW requests the agent to keep small safety violations during exploration, and outputs a near-optimal safe policy after receiving the reward function.

\begin{definition}[$(\varepsilon,\delta)$-optimal safe algorithm for Safe-RFE-SW]
    An algorithm is $(\varepsilon,\delta)$-optimal safe for Safe-RFE-SW if it outputs the triple $(\hat{\PP}, \hat{\Delta}(s,a), \hat{\cU}_H)$ 
    such that for any reward function $r$, with probability at least $1-\delta$,
    \begin{equation}
    \begin{small}\label{eq:definition of safe RFE}
        V_1^*(s_1;r) - V_1^{\hat{\pi}^*} (s_1;r) \le \varepsilon,\qquad 
        \EE_\pi \left[\sum_{h=1}^H (c(s_h)-\tau)_{+}\right]\le \varepsilon ,
    \end{small}
    \end{equation}
    where $\hat{\pi}^*$ is the optimal feasible policy with respect to $(\hat{\PP}, \hat{\Delta}(s,a), \hat{\cU}_H, r)$, and $V(s;r)$ is the value function under reward function $r$. We say that a policy is \textit{$\varepsilon$-optimal} if it satisfies the left inequality in Eq.~\eqref{eq:definition of safe RFE} and \textit{$\varepsilon$-safe} if it satisfies the right inequality in Eq.~\eqref{eq:definition of safe RFE}. We measure the performance by the number of episodes used before the algorithm terminates, i.e., {\em sample complexity}. Moreover, the cumulative step-wise violation till episode $K$ is defined as $C(K)=\sum_{k=1}^K \sum_{h=1}^H (c(s_h^k)-\tau)_{+}$.
    In Safe-RFE-SW, our goal is to design an $(\varepsilon,\delta)$-optimal safe algorithm, and minimize both  sample complexity and violation.
\end{definition}

\subsection{Algorithm SRF-UCRL}
The Safe-RFE-SW problem requires us to consider extra safety constraints for both the exploration phase and final output policy, which needs new algorithm design and techniques compared to previous RFE algorithms.
Also, the techniques for  Safe-RL-SW in Section \ref{sec: Alg for safe RL} are not sufficient for guaranteeing the safety of output policy, because SUCBVI only guarantees a step-wise violation during exploration. 


We design an efficient algorithm \textbf{S}afe \textbf{RF-UCRL} (SRF-UCRL), which builds upon previous RFE algorithm RF-UCRL~\citep{ACT2020adaptiveRFE}. SRF-UCRL distinguishes potentially unsafe states and safe actions by backward iteration, and establishes a new uncertainty function to  guarantee the safety of output policy. 
Algorithm~\ref{alg:safe UCRL} illustrates the procedure of SRF-UCRL.
Specifically, in each episode $k$, we first execute a policy $\pi_h^k$ computed from previous episodes, and then update the estimated next state set $\{\Delta^k_h(s,a)\}_{h \in [H]}$ and unsafe state set $\cU_{H}^k$ by optimistic estimation. Then, we use Eq.~\eqref{eq:defCh} to calculate the unsafe state set $\cU_h$ for all steps $h \in [H]$. 
After that, we update the uncertainty function $\overline{W}^k$ defined in Eq.~\eqref{eq:defW} below and compute the policy $\pi^{k+1}$ that maximizes the uncertainty to encourage more exploration in the next episode.

Now we provide the definition of the \textit{uncertainty function}, which measures the estimation error between the empirical MDP and true MDP.
For any safe state-action pair $s \notin \cU_{h}, a \in A_h^{k, safe}(s)$, 
we define 
\begin{small}
\begin{align}
\overline{W}_h^k(s,a)= \min\left\{H,M(N_h^k(s,a), \delta) +  \sum_{s'}\hat{\PP}_h^k(s'\mid s,a) \max_{b \in A^{k,safe}_{h+1}(s')}\overline{W}_{h+1}^k(s',b)\right\}.\label{eq:defW}
\end{align}
\end{small}
where 
$M(N_h^k(s,a), \delta) = 2H\sqrt{\frac{2\gamma(N_h^k(s,a),\delta)}{N_h^k(s,a)}}+\frac{SH\gamma(N_h^k(s,a),\delta)}{N_h^k(s,a)},$ and $\gamma(n,\delta)$ is a logarithmic term that is formally defined in Theorem \ref{thm:safe UCRL}.
For other state-action pairs $(s,a)$, we  relax the restriction of $b \in \cA_{h+1}^{k,safe}(s')$ to $b \in \cA$ in the $\max$ function.
Our algorithm stops when $\overline{W}^K_1(s_1,\pi^{K}(s_1))$ 
shrinks to within $\varepsilon/2$. 
Compared to previous RFE works (\cite{ACT2020adaptiveRFE, Pierre2020Fastactive}), our uncertainty function has two distinctions. First, for any safe state-action pair $(s,a) \in (\cS\setminus\cU_h, A_h^{k,safe}(s))$, Eq.~\eqref{eq:defW} considers only safe actions $A_{h+1}^{k,safe}(s)$, which guarantees that the agent focuses on safe policies. 
Second, Eq.~\eqref{eq:defW} incorporates another term $(SH\gamma(N_h^k(s,a),\delta)/N_h^k(s,a))$ to control the expected violation for feasible policies.
Now we present our result for Safe-RFE-SW. 
\begin{theorem}\label{thm:safe UCRL}
    Let $\gamma(n,\delta) = 2(\log (2SAH/\delta)+(S-1)\log(e(1+n/(S-1)))$, Algorithm~\ref{alg:safe UCRL} is a $(\varepsilon,\delta)$-PAC algorithm with sample complexity at most\footnote{Here $\widetilde{\cO}(\cdot)$ ignores all $\log S, \log A, \log H, \log(1/\varepsilon)$ and $\log(\log(1/\delta))$ terms.}
  \begin{equation*}
  \begin{small}
      K = \widetilde{\cO}\left(\left(\frac{S^2AH^2}{\varepsilon}+\frac{H^4SA}{\varepsilon^2}\right)\left(\log\left(\frac{1}{\delta}\right)+S\right)\right),
  \end{small}
  \end{equation*}
    The step-wise violation of Algorithm~\ref{alg:safe UCRL} during exploration is  $C(K) = \widetilde{\cO}(S^2AH^2+\sqrt{ST}) .$
\end{theorem}
Compared to previous work~\citep{ACT2020adaptiveRFE} with  $\widetilde{\cO}((H^4SA/\varepsilon^2)(\log(1/\delta)+S))$  sample complexity, our result has an additional term $\widetilde{\cO}((S^2AH^2/\varepsilon)(\log(1/\delta)+S))$. This term is due to the additional safety requirement for the final output policy, which was not considered in previous RFE algorithms ~\citep{ACT2020adaptiveRFE, Pierre2020Fastactive}. 
When $\varepsilon$ and $\delta$ are sufficiently small, the leading term is $\widetilde{\cO}((H^4SA/\varepsilon^2)\log(1/\delta))$, which implies that our algorithm satisfies the safety constraint without suffering additional regret.\footnote{\cite{Pierre2020Fastactive} improve the result by a factor $H$ and replace $\log(1/\delta)+S$ by $\log(1/\delta)$ via the Bernstein-type inequality.
While they do not consider the safety constraints,
we believe that similar improvement can also be applied to our framework, without significant changes in our analysis. However, since our paper mainly focuses on tackling the safety constraint for reward-free exploration, we use the Hoeffding-type inequality to keep the succinctness of our statements.} 

\begin{algorithm}[t]
\begin{algorithmic}[1]
	\caption{SRF-UCRL}
	\label{alg:safe UCRL}

	\STATE Initialize: $k = 1$, $\overline{W}_h^0(s,a)=H$, $\pi_h^1(s) = a_1$ for all $s \in \cS$, $a \in \cA$ and $h \in [H]$.
    \WHILE{$\overline{W}_h^{k-1}(s_1,\pi_1^{k}(s_1))\le \varepsilon/2$}
    \FOR{$h=1,\cdots,H$\label{line:beginfor}}
    \STATE Observe state $s_h^k$. Take action $a_h^k=\pi_h^k (s_h^k)$ and observe $s_{h+1}^k.$ \\
    $\triangleright$ \textcolor{blue}{Update the optimistic estimate of cost.}
    \STATE Calculate empirical cost $\hat{c}(s)$ and $\bar{c}(s) = \hat{c}(s) - \beta(N^k(s), \delta)$ for all $s\in \cS$.
    \\$\triangleright$ \textcolor{blue}{Update the estimates of $\Delta_h(s,a)$ and $\cU_h$.}
    \STATE Update $\Delta_h^{k}(s_h^k,a_h^k) = \Delta_h^{k-1}(s_h^k,a_h^k) \cup \{s_{h+1}^k\}$ and  $\{\cU_h^k\}_{h \in [H]}$ by Eq.~\eqref{eq:defCh}. 
    \STATE Calculate $A_h^{k,safe}(s)$ for all $s \in \cS$ by Eq.~\eqref{eq:def of Asafe(s)}.
    \ENDFOR\label{line:endfor}
    \STATE Update $N_h^k(s,a,s'), N_h^k(s,a)$ and 
    $\hat{\PP}_h^k$ for all $s \in \cS, a \in \cA, s' \in \cS$ and $h \in [H]$.
    \STATE Compute $\overline{W}^{k}$ according to Eq.~\eqref{eq:defW}.  \\$\triangleright$ \textcolor{blue}{Calculate the greedy policy.} 
    \FOR{$h \in [H], s \in \cS$}
    \IF{$A_h^{k,safe}(s)\neq \emptyset$}
    \STATE $\pi_h^{k+1}(s) = \arg\max_{a \in A_h^{k, safe}(s)} \overline{W}_h^{k} (s,a).$
    
    \ELSE \STATE $\pi_h^{k+1}(s) =\arg\max_{a \in A} \overline{W}_h^{k} (s,a).$
        
    \ENDIF
    \ENDFOR
    
    \STATE Set $k = k+1$.
    \ENDWHILE
    \STATE {\bfseries return} the tuple $(\hat{\PP}_h^{k-1}, \Delta_h^{k-1}(s,a), \cU_H^{k-1})$.
    \end{algorithmic}
\end{algorithm}
\subsection{Analysis for Algorithm SRF-UCRL}
 For the analysis of step-wise violation (Eq. \eqref{eq:violation definition}), 
 similar to  algorithm SUCBVI, algorithm SRF-UCRL estimates the next state set $\Delta_h(s,a)$ and potentially unsafe state set $\cU_h$, which guarantees a $\widetilde{\cO}(\sqrt{ST})$ step-wise violation.
 Now we give a proof sketch for the $\varepsilon$-safe property of output policy (Eq. \eqref{eq:definition of safe RFE}).
First,  if $\pi$ is a feasible policy for $(\hat{\PP}^k, \Delta^k(s,a), \{\cU_h^k\}_{h \in [H]})$ and
$s_{h+1} \in \Delta_h^K(s_h,a_h)$ for all $h \in [H]$, 
the agent who follows policy $\pi$ will only visit the estimated safe states.  Since each estimated safe state only suffers $O(1/\sqrt{N_t(s_h, \pi_h(s_h))})$ violation, the violation led by this situation is bounded by $\overline{W}_1^k(s,\pi_1(s_1))$.
Next, we bound the probability that $s_{h+1}\notin \Delta^K_h(s_h,a_h)$ for some step $h \in [H]$. For any state-action pair $(s,a)$, if there is a probability $\PP(s'\mid s,a)\ge \widetilde{\cO}(\log(S/\delta)/N_h^K(s,a))$ that the agent transitions to next state $s'$ from $(s,a)$ at step $h$, the state $s'$ is put into $\Delta_h^K(s,a)$ with probability at least $1-\delta/S$. Then, 
we can expect that 
all such states are put into our estimated next state set $\Delta_h^K(s,a)$. Thus, the probability that $s_{h+1}\notin \Delta_h^K(s_h,a_h)$ is no larger than $\widetilde{\cO}(S\log(S/\delta)/N_h^K(s,a))$ by a union bound over all possible next states $s'$. Based on this argument, $\overline{W}_h^k(s_h,a_h)$ is an upper bound for the total probability that $s_{h+1}\notin \Delta^K_h(s_h,a_h)$ for some step $h$. This will lead to additional $\overline{W}_1^K(s_1,\pi_1(s_1))$ expected violation.
Hence the expected violation of 
 output policy is upper bounded by  $2\overline{W}_1^{K}(s_1,\pi_1^{K}(s_1)) \le \varepsilon$.  The complete proof is provided in Appendix \OnlyInFull{\ref{appendix:proof}}\OnlyInShort{C}.


\section{Experiments}

\vspace{-0.2em}
In this section, we provide experiments for Safe-RL-SW and Safe-RFE-SW to validate our theoretical results. For Safe-RL-SW, we compare our algorithm SUCBVI with a classical RL algorithm UCBVI~\citep{UCBVI2017} and three state-of-the-art CMDP algorithms OptCMDP-bonus~\citep{exploration2020CMDP} , Optpess~\citep{liu2021optpess} and Triple-Q \citep{wei2022triple}. 
For Safe-RFE-SW, 
 we report the average reward in each episode and cumulative step-wise violation.
For Safe-RFE-SW, we compare our algorithm SRF-UCRL with a state-of-the-art RFE algorithm RF-UCRL~\citep{ACT2020adaptiveRFE}. We do not plot the regret because unconstrained MDP or CMDP algorithms do not guarantee step-wise violation. Applying them to the step-wise constrained setting can lead to negative or large
regret and large violations, making the results meaningless. Detailed experiment setup is in Appendix \OnlyInFull{\ref{appendix:experiment}}\OnlyInShort{G}.

\begin{figure*}[t]
	\centering
	\begin{minipage}[b]{0.245\linewidth}
		\centering
		\includegraphics[width=1\linewidth]{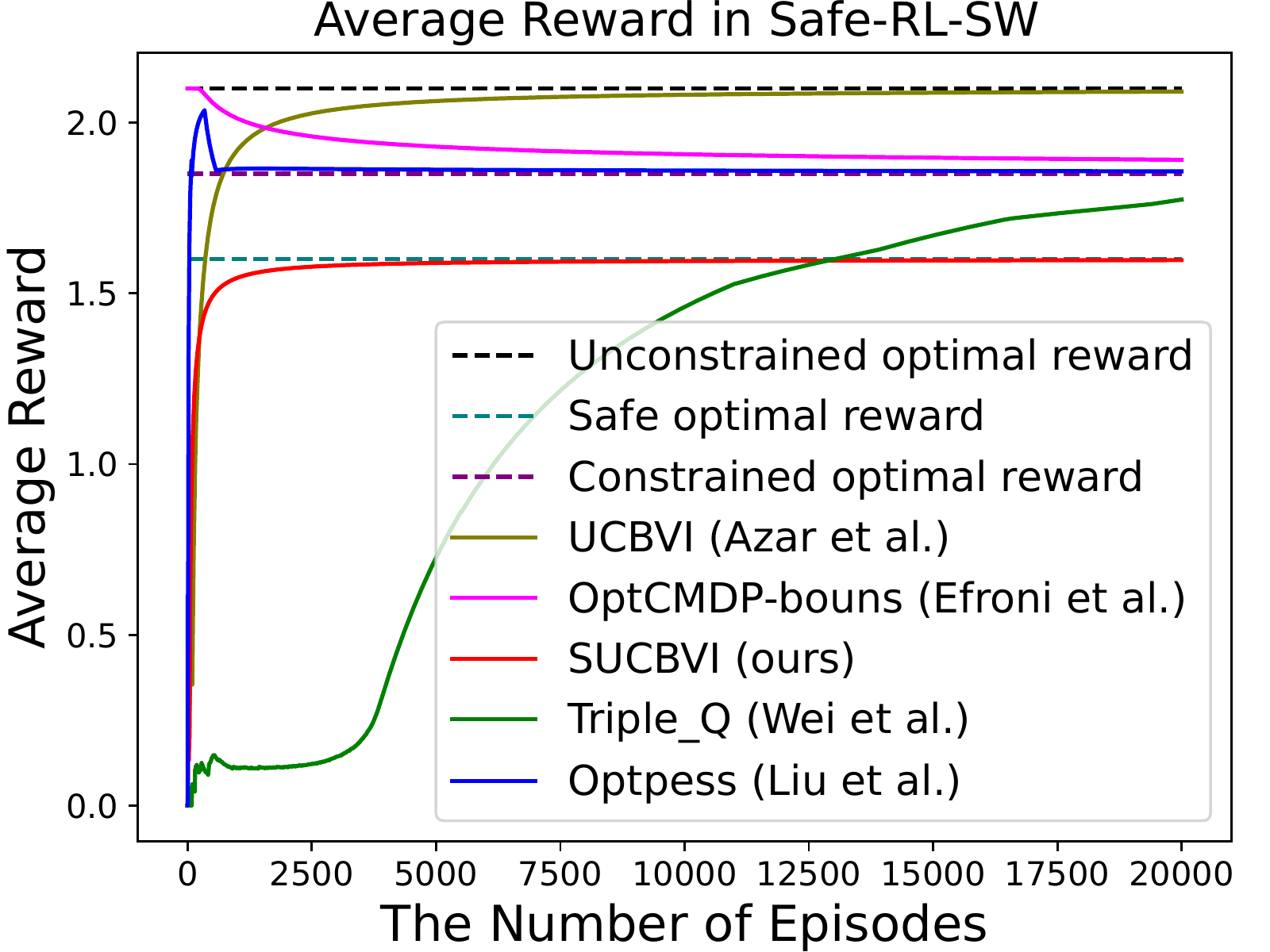}
		\label{Rewards of UCBVI}
	\end{minipage}
	\begin{minipage}[b]{0.245\linewidth}
		\centering
		\includegraphics[width=1\linewidth]{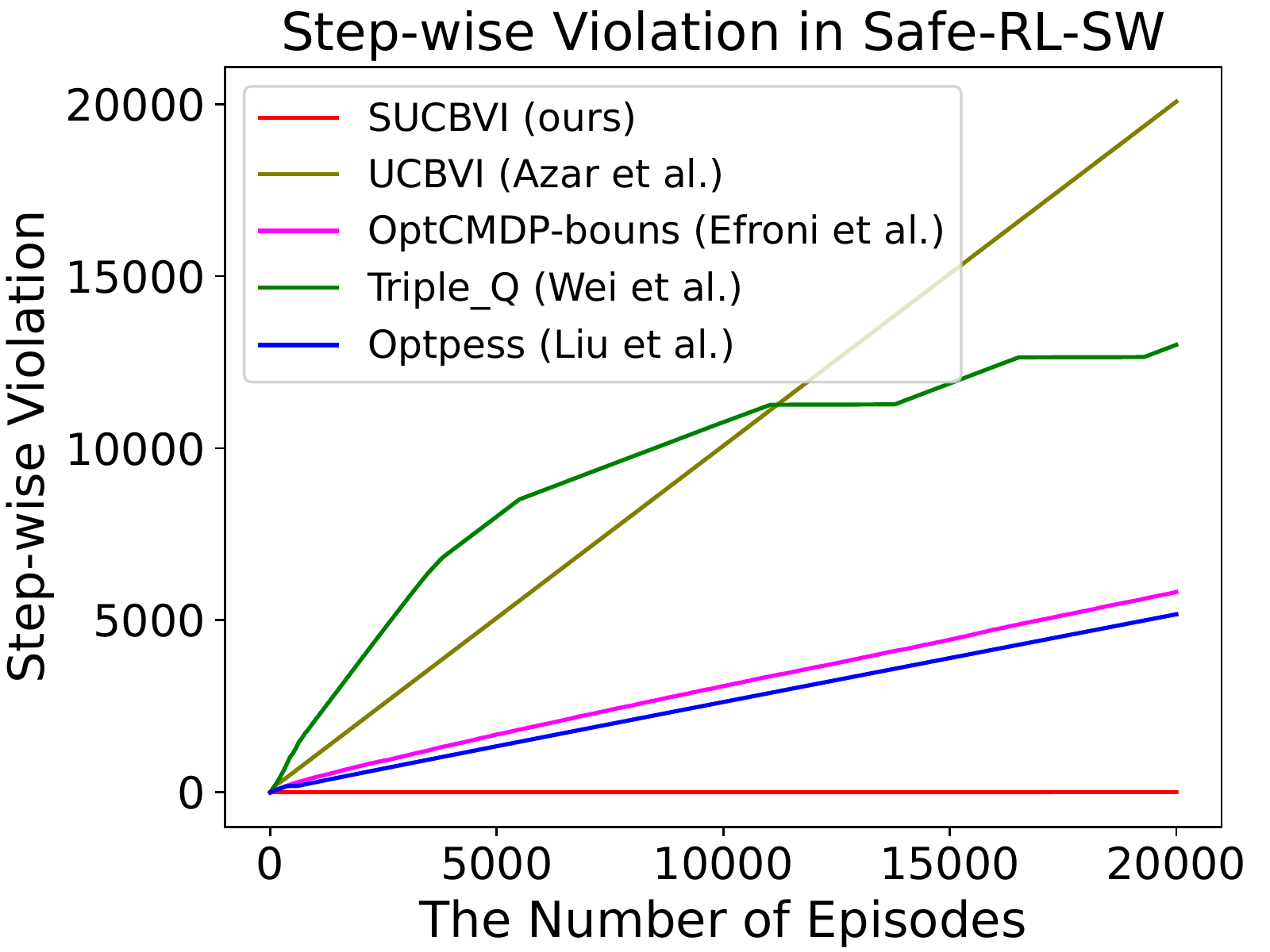}
		\label{violations of UCBVI}
	\end{minipage}
	\begin{minipage}[b]{0.245\linewidth}
		\centering
		\includegraphics[width=1\linewidth]{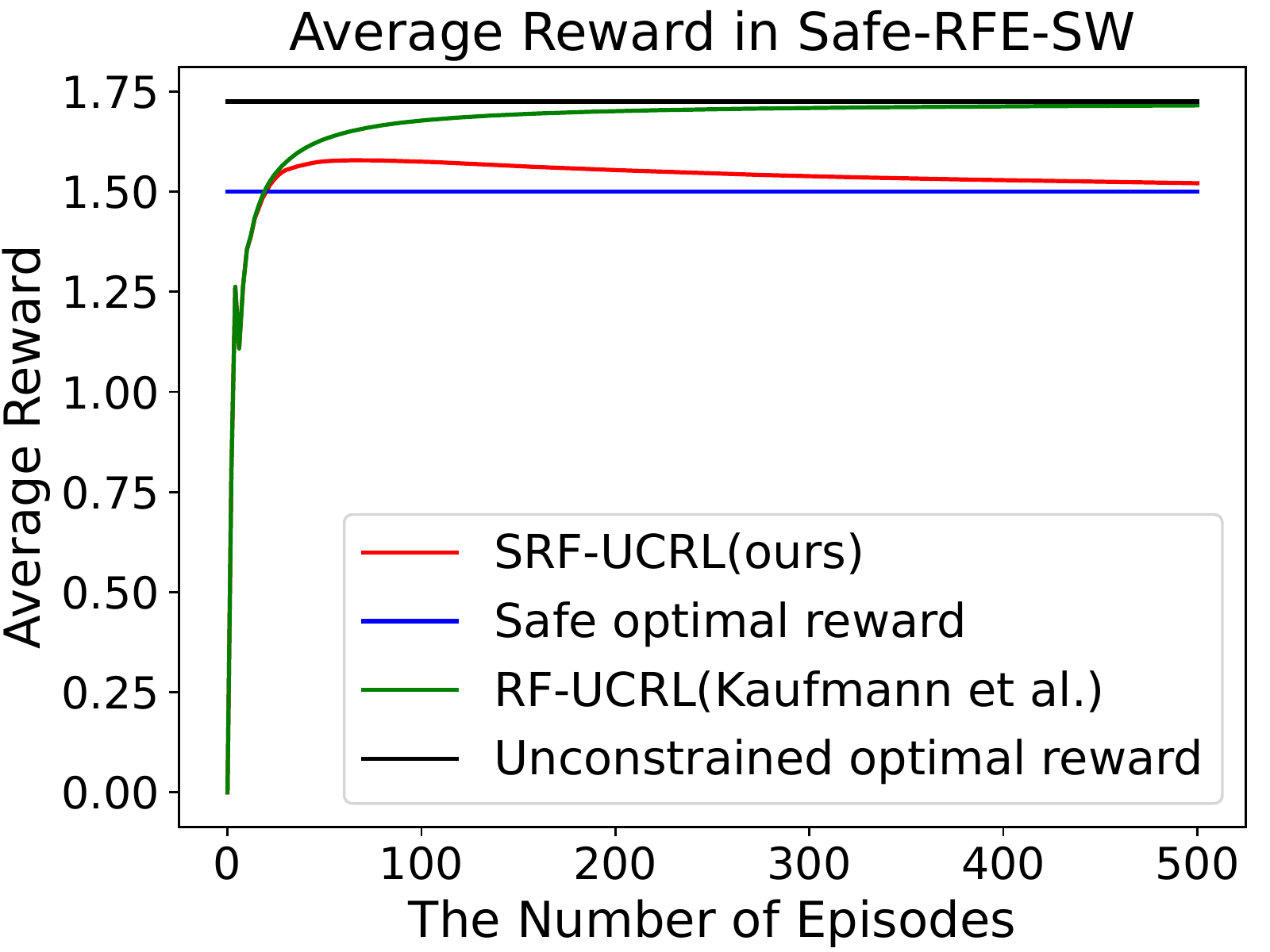}
		\label{rewards of SUCRL}
	\end{minipage}
	\begin{minipage}[b]{0.245\linewidth}
		\centering
		\includegraphics[width=1\linewidth]{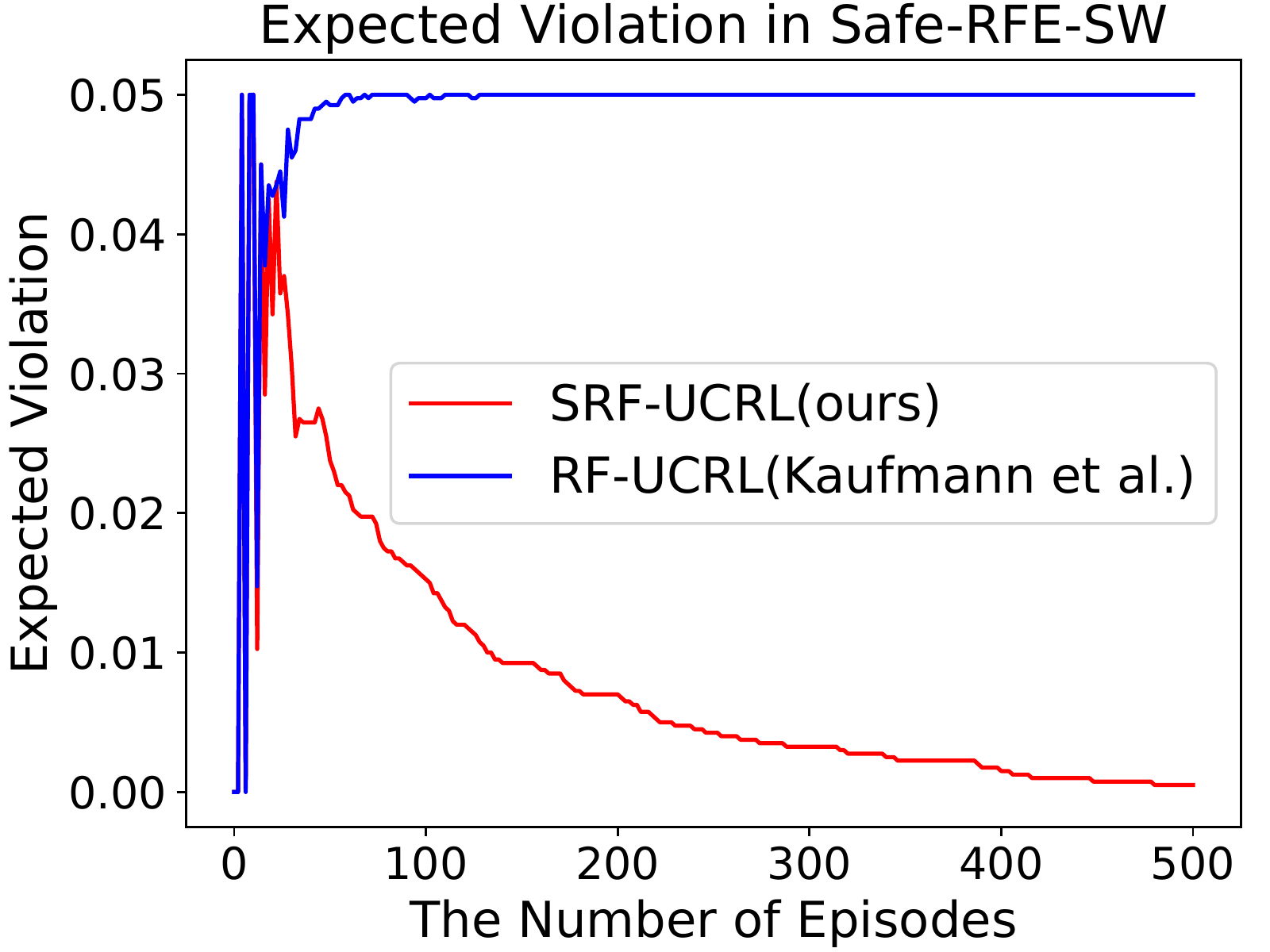}
		\label{Violations of SUCRL}
	\end{minipage}
    
    \caption{
    Experimental results for Safe-RL-SW and Safe-RFE-SW.  
    The left two figures show the average rewards and step-wise violations of algorithms SUCBVI,  UCBVI~\citep{UCBVI2017}, OptCMDP-bonus~\citep{exploration2020CMDP}, Triple-Q~\citep{wei2022triple} and Optpess~\citep{liu2021optpess}. The right two figures show the reward and expected violation of the policies outputted by algorithms SRF-UCRL and RF-UCRL~\citep{ACT2020adaptiveRFE}.  
    }
    \label{fig:whole}
\end{figure*}

As shown in Figure~\ref{fig:whole}, the rewards of SUCBVI and SRF-UCRL converge to the optimal rewards under safety constraints (denoted by "safe optimal reward"). In contrast, the rewards of UCBVI and UCRL converge to the optimal rewards without safety constraints (denoted by  "unconstrained optimal reward"), and those of CMDP algorithms converge to a policy with low expected violation (denoted by "constrained optimal reward"). 
For Safe-RFE-SW, the expected violation of output policy of SRF-UCRL converges to zero while that of RF-UCRL does not. 
This corroborates the ability of SRF-UCRL in finding policies that simultaneously achieve safety and high rewards.

\section{Conclusion}
In this paper, we investigate a novel safe reinforcement learning problem with step-wise safety constraints. We first provide an algorithmic framework SUCBVI to achieve both $\widetilde{\cO}(\sqrt{H^3SAT})$ regret and $\widetilde{\cO}(\sqrt{ST})$ step-wise or $\widetilde{\cO}(S/\cC_{\mathrm{gap}} + S^2AH^2)$ gap-dependent bounded violation that is independent of $T$. Then, we provide two lower bounds to validate the optimality of SUCBVI in both violation and regret in terms of $S$ and $T$.
Further, we extend our framework to the safe RFE with a step-wise violation and provide an algorithm SRF-UCRL 
that identifies a near-optimal safe policy given any reward function $r$ and guarantees $\widetilde{\cO}(\sqrt{ST})$ violation during exploration. 


\clearpage

\OnlyInFull{
\onecolumn
\appendix
\newpage

\ \\
\centerline{\begin{Large} \textbf{Appendix}\end{Large}}

\begin{large} \startcontents
\printcontents{}{1}{}\end{large}

\vspace{.2in}


\section{Related Works}\label{appendix:related}
\paragraph{Safe RL except for CMDP-approaches}
Apart from the CMDP approaches,  (\cite{mingyu2019nipspolicyopt, qiu2020upper,exploration2020CMDP, ding2021provably,kalagar2021aaaiCMDP,simao2021alwayssafe,liu2021zeroviolation, Bura2021nipszeroviolation,wei2022triple}), there are several other ways to achieve safety in reinforcement learning.
The control-based approaches (\cite{felix2017control,chow2018nipslycontrol, liu2020ipo,zhuoran2022enforcinghardconstraints}) adopt barrier functions to keep the agent away from unsafe regions and impose hard constraints. The policy-optimization style approaches are also commonly studied to solve constrained RL problems. (\cite{uchibe2007constrained,achiam2017constrained,tessler2018reward,liu2020ipo,stooke2020responsive})

There are some other papers investigating the safety of RL problems.
\cite{alshiekh2018safe} represents the safe state by the reactive system, and uses shielding to calculate and restrict the agent within a safe trajectory completely. The main difference between their work and our work is that we need to dynamically update the estimated safe state, while they require to know the mechanism and state to calculate the shield.
\cite{dalal2018safe} considers restricting the safe action by projecting the action into the closest safe actions. They achieve this goal by solving a convex optimization problem on the continuous action set. However, in their paper, they do not consider the situation where a state can have no safe actions. To be more specific, they do not consider the situation when the convex optimization problem has no solutions. 
\cite{le2019batch} considers the decision-making problem with a pre-collected dataset. 
 \cite{turchetta2020safe} and \cite{sootla2022enhancing} both consider cumulative cost constraints rather than step-wise constraints. The former uses a teacher for intervention to keep the agent away from the unsafe region, while the latter encourages safe exploration by augmenting a safety state to measure safety during training. 

\paragraph{RFE with Safety} 
Motivated by sparse reward signals in realistic applications, reward-free exploration (RFE) has been proposed  in \cite{jinchi20rewardfree}, and further developed in~\cite{ACT2020adaptiveRFE, Pierre2020Fastactive}. In RFE, the agent explores the environment without reward signals. After enough exploration, the reward function is given, and the agent needs to plan a near-optimal policy based on his knowledge collected in exploration. Safety is also important in RFE: We need to not only guarantee that our outputted policy is safe, but also ensure small violations during exploration. 

In recent years,
\cite{ruiquan2022safeRFE} studies RFE with safety constraints. Our work differs from this work in the following aspects: (i) They allow different reward functions during exploration and after exploration, and the agent can directly know the true costs
during training. 
In our work, the cost function is the same during and after exploration, but the agent can only observe the noisy costs of a state when she arrives at that state. (ii) They require prior knowledge of safe baseline policies, while we do not need such an assumption. 
(iii) They consider the zero-expected violation during exploration, while we focus on keeping small step-wise violations.

\section{Proofs of Main Results}
\label{appendix:proof}
\subsection{Proof of Theorem~\ref{thm:safe RL}}
\label{sec:proof of safe RL}
We start the proof with the confidence bound of optimistic estimation:
\begin{lemma}\label{lemma:e1confidence}
    With probability at least $1-\delta$, the empirical optimistic estimation of cost function $\overline{c}^k(s) = \hat{c}^k(s) - \sqrt{\frac{2}{N^k(s)}\log(SK/\delta)}$ are always less than the true cost $c(s)$ for any episode $k \in [K]$. In other words, let the event be 
    \begin{align}\label{event:e1}
        \cE_{1} = \left\{\forall \ k \in [K] \ \mbox{and}\  s \in \cS, \overline{c}^k(s)\le c(s)\right\},
    \end{align}
    then $\Pr(\cE_1^c)\le \delta.$
\end{lemma}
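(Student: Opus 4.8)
The plan is to rewrite the event $\cE_1$ as an \emph{upper}-tail statement about the averaged cost noise, and then combine a sub-Gaussian tail bound with a union bound. First I would recall that each cost observation at a state $s$ has the form $z_i(s) = c(s) + \zeta_i$ with i.i.d.\ zero-mean $1$-sub-Gaussian noise $\zeta_i$, so that the empirical estimate is $\hat{c}^k(s) = c(s) + \frac{1}{N^k(s)}\sum_{i=1}^{N^k(s)} \zeta_i$. Consequently the defining inequality $\overline{c}^k(s) = \hat{c}^k(s) - \sqrt{\frac{2}{N^k(s)}\log(SK/\delta)} \le c(s)$ is exactly equivalent to
\[
\frac{1}{N^k(s)}\sum_{i=1}^{N^k(s)} \zeta_i \;\le\; \sqrt{\frac{2}{N^k(s)}\log(SK/\delta)}.
\]
Thus $\cE_1^c$ is the event that, for some pair $(s,k)$, the averaged noise \emph{exceeds} the bonus. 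It is essential to note that this is an \emph{upper}-tail control on $\frac{1}{n}\sum_i \zeta_i$: because the bonus is \emph{subtracted} in $\overline{c}^k$, optimism can fail only when the noise pushes $\hat{c}^k(s)$ above $c(s)$, so the relevant quantity is $\Pr\big(\frac1n\sum_i\zeta_i \ge t\big)$ and not the lower or the two-sided tail.

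Next, I would fix a state $s$ and a \emph{deterministic} count $n$ and apply the standard Hoeffding bound for $1$-sub-Gaussian variables, namely $\Pr\big(\frac{1}{n}\sum_{i=1}^{n}\zeta_i \ge t\big) \le \exp\!\big(-\tfrac{n t^2}{2}\big)$. Substituting $t = \sqrt{\tfrac{2}{n}\log(SK/\delta)}$ makes the exponent equal to $-\log(SK/\delta)$, giving a per-term failure probability of exactly $\tfrac{\delta}{SK}$. This also explains the precise shape of the bonus: the factor $2$ cancels the $\tfrac12$ in the sub-Gaussian exponent, and the argument $\log(SK/\delta)$ is calibrated so that a union over the $SK$ pairs $(s,k)$ closes at total level $\delta$.

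Finally, I would take the union bound: summing $\tfrac{\delta}{SK}$ over $s\in\cS$ and $k\in[K]$ yields $\Pr(\cE_1^c)\le \delta$, which together with the equivalence of the first step proves the lemma. The one genuinely delicate point, and the main obstacle, is that $N^k(s)$ is not a fixed integer but a random count depending on the realized trajectory, hence on the very noise we are concentrating; so the fixed-$n$ Hoeffding inequality cannot be invoked verbatim at $n=N^k(s)$. I would close this gap with the usual uniform-in-count device: either (i) apply the fixed-$n$ bound at every admissible value $n$ of the counter and take a union over counts, absorbing the resulting logarithmic overhead into the confidence radius, or (ii) invoke an anytime self-normalized martingale tail inequality for the adapted partial sums $\sum_{i=1}^{n}\zeta_i$ at each state, which holds simultaneously for all $n$ and hence at the random value $n=N^k(s)$. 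Once this uniform-in-count control is secured, the union over $(s,k)$ delivers $\Pr(\cE_1^c)\le\delta$ as claimed.
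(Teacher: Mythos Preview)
Your proposal is correct and follows essentially the same route as the paper: a sub-Gaussian Hoeffding bound for each pair $(s,k)$, calibrated so that the per-pair failure probability is $\delta/(SK)$, followed by a union bound over the $SK$ pairs. The paper's own proof is in fact terser than yours---it applies Hoeffding directly at the (random) count $N^k(s)$ and does not address the measurability issue you flag; your uniform-in-count remedy is a genuine technical refinement over the paper, not a departure from its approach.
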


\begin{proof}
    Since we observe $c(s) + \varepsilon$ with 1-subgaussian noise $\varepsilon$, by Hoeffding's inequality~\cite{hoeffding1994probability}, for a fixed episode $k$ and state $s$, the statement 
    \begin{align*}
        |\hat{c}^k(s) - c(s)|\le \sqrt{\frac{2}{N^k(s)}\log\left(\frac{SK}{\delta}\right)}
    \end{align*}
    holds with probability at least $1-\frac{\delta}{SK}$. Taking a union bound on all $s \in \cS$ and $k\in [K]$, we have 
    $$c(s) \ge \hat{c}^k(s) - \sqrt{\frac{2}{N^k(s)}\log\left(\frac{SK}{\delta}\right)} = \overline{c}^k(s),$$ and
    the Lemma~\ref{lemma:e1confidence} has been proved.
\end{proof}

\begin{lemma}
    Let the event $\cE_2$ and $\cE_3$ are 
    \begin{small}
    \begin{align*}
        \cE_2 &= \left\{\forall \ s, a, s', k,h, |\hat{\PP}_h(s'\mid s,a) -\PP_h(s'\mid s,a)|\right.\\&\left. \hspace{5em} \le \sqrt{\frac{2\PP_h(s'\mid s,a)}{N_h^k(s,a)}\log\left(\frac{2S^2AHK}{\delta}\right)}+\frac{2}{3N_h^k(s,a)}\log\left(\frac{2S^2AHK}{\delta}\right)\right\}\\
        \cE_3 &= \left\{\sum_{k=1}^K \sum_{h=1}^H \left(P_h(V_{h+1}^k - V_{h+1}^{\pi^k})(s_h^k,a_h^k) - (V_{h+1}^k - V_{h+1}^{\pi^k})(s_{h+1}^k) \right)
        \le \sqrt{\frac{H^3K}{2}\log(1/\delta)}\right\}
    \end{align*}
    \end{small}
    Then $\Pr(\cE_2^c)\le \delta$, $\Pr(\cE_3^c)\le \delta$.
\end{lemma}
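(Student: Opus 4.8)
\emph{Proof plan.} The plan is to treat the two events separately, each via a concentration inequality followed by a union bound.

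For $\cE_2$, I would first fix a tuple $(s,a,s',h)$ \emph{and} a deterministic sample count $n$. Conditioned on the first $n$ visits to $(s,a)$ at step $h$, the Markov property makes the observed next states i.i.d.\ $\mathrm{Bernoulli}(\PP_h(s'\mid s,a))$, so Bernstein's inequality bounds the deviation of the empirical frequency from $\PP_h(s'\mid s,a)$ by $\sqrt{\tfrac{2\PP_h(s'\mid s,a)(1-\PP_h(s'\mid s,a))}{n}\log(2/\delta')}+\tfrac{2}{3n}\log(2/\delta')$ with probability at least $1-\delta'$. Bounding $\PP_h(s'\mid s,a)(1-\PP_h(s'\mid s,a))\le \PP_h(s'\mid s,a)$ and choosing $\delta'=\delta/(2S^2AHK)$, a union bound over all $s,a,s'\in\cS$, $h\in[H]$, and all counts $n\in\{1,\dots,K\}$ (any fixed $(s,a,h)$ is visited at most once per episode, hence at most $K$ times in total) yields $\Pr(\cE_2^c)\le\delta$; instantiating $n=N_h^k(s,a)$ then gives the bound for the realized, random counts.

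For $\cE_3$, I would use the filtration $\cF_{k,h}$ generated by the interaction history up to and including the choice of $(s_h^k,a_h^k)$. Since the optimistic value functions and the policy $\pi^k$ are fixed at the start of episode $k$, both $V_{h+1}^k$ and $V_{h+1}^{\pi^k}$ are $\cF_{k,h}$-measurable while $s_{h+1}^k\sim\PP_h(\cdot\mid s_h^k,a_h^k)$; hence the summand $X_{k,h}:=\PP_h(V_{h+1}^k-V_{h+1}^{\pi^k})(s_h^k,a_h^k)-(V_{h+1}^k-V_{h+1}^{\pi^k})(s_{h+1}^k)$ has $\E[X_{k,h}\mid\cF_{k,h}]=0$, so $\{X_{k,h}\}$ ordered lexicographically in $(k,h)$ is a martingale difference sequence. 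Because $0\le V_{h+1}^{\pi^k}\le V_{h+1}^k\le H$ (boundedness of the value functions together with optimism of $V^k$), the conditional range of each increment is at most $H$, and the Azuma--Hoeffding inequality over the $KH$ terms gives $\sum_{k=1}^K\sum_{h=1}^H X_{k,h}\le\sqrt{\tfrac{H^3K}{2}\log(1/\delta)}$ with probability at least $1-\delta$, which is exactly $\cE_3$.

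The one delicate point, and what I expect to be the main obstacle, is the stopping-time issue underlying $\cE_2$: the visitation count $N_h^k(s,a)$ is random, so Bernstein cannot be invoked directly ``on $N_h^k(s,a)$ samples''; the remedy is precisely the union over all deterministic counts $n\le K$ above, which is also what produces the $\log(S^2AHK/\delta)$ factor in the bound. Once this is set up, both bounds are routine instances of Bernstein's and Azuma's inequalities.
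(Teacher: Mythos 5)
Your proposal is correct and follows essentially the same route as the paper, whose proof is a one-line appeal to exactly these two tools: Bernstein's inequality (with a union bound over states, actions, steps, and counts) for $\cE_2$, and the Azuma--Hoeffding inequality applied to the martingale difference sequence ordered over $(k,h)$ for $\cE_3$. Your write-up simply supplies the standard details (the union over deterministic visit counts to handle the random $N_h^k(s,a)$, and the measurability/boundedness of the increments) that the paper leaves implicit.
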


\begin{proof}
    This lemma is very standard in analysis of UCBVI algorithm.
    The $\Pr(\cE_2^c)$ can be derived by Bernstein's inequality, and $\Pr(\cE_3^c)\le \delta$ can be derived by Azuma-hoeffding inequality.  
\end{proof}

The following lemma shows the optimistic property we discussed in the paper.

\begin{lemma}
    Under the event $\cE_1$, $\cU_h^k\subseteq \cU_h$ and $A_h^{safe}(s)\subseteq A_h^{k,safe}(s) $ for all $s \in \cU_h^k$ and $h \in [H]$.
\end{lemma}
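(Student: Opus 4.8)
The lemma says that the algorithm's estimates of the dangerous region and of the safe‑action sets are \emph{optimistic}: they never exaggerate the danger. I would prove both inclusions together by backward induction on $h$, from $h=H$ down to $h=1$. Recall the backward definitions: with $\cU_{H+1}:=\emptyset$, set $A_h^{safe}(s):=\{a\in\cA:\PP_h(s'\mid s,a)=0\ \text{for all}\ s'\in\cU_{h+1}\}$ and $\cU_h:=\cU\cup\{s\in\cS:A_h^{safe}(s)=\emptyset\}$; the algorithm's versions $A_h^{k,safe}$ and $\cU_h^k$ are given by exactly the same recursion after replacing $c$ by the optimistic cost $\overline{c}^k$ (so $\cU_H^k=\{s:\overline{c}^k(s)>\tau\}$) and replacing $\PP_h(\cdot\mid s,a)$ by the empirical estimate $\hat{\PP}_h(\cdot\mid s,a)$, so that ``$a$ is safe at $s$'' reads $\hat{\PP}_h(s'\mid s,a)=0$ for all $s'\in\cU_{h+1}^k$. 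Note that the only stochastic ingredient used is Lemma~\ref{lemma:e1confidence}; the comparison of transition supports is deterministic.

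\textbf{Base case and induction.} For $h=H$: if $s\in\cU_H^k$ then $\overline{c}^k(s)>\tau$, and on $\cE_1$ we have $c(s)\ge\overline{c}^k(s)>\tau$, hence $s\in\cU=\cU_H$. Now assume $\cU_{h+1}^k\subseteq\cU_{h+1}$. For the action sets, fix $s\in\cS$ and $a\in A_h^{safe}(s)$; then $\PP_h(s'\mid s,a)=0$ for every $s'\in\cU_{h+1}$, and therefore --- since $\cU_{h+1}^k\subseteq\cU_{h+1}$ --- also for every $s'\in\cU_{h+1}^k$. A transition with zero true probability is never sampled, so $\hat{\PP}_h(s'\mid s,a)=0$ for each such $s'$, i.e.\ $a\in A_h^{k,safe}(s)$. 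Hence $A_h^{safe}(s)\subseteq A_h^{k,safe}(s)$ for all $s$ (in particular for $s\in\cU_h^k$, which is the stated claim). Finally, take $s\in\cU_h^k$: if $\overline{c}^k(s)>\tau$ then $s\in\cU\subseteq\cU_h$ as before; otherwise $A_h^{k,safe}(s)=\emptyset$, and the inclusion just proved forces $A_h^{safe}(s)=\emptyset$, i.e.\ $s\in\cU_h$. This proves $\cU_h^k\subseteq\cU_h$ and completes the induction.

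\textbf{Where the difficulty lies.} There is no deep obstacle; the work is bookkeeping. One has to (i) state the backward recursion so that $\cU_{h+1}$ already encodes reachability of $\cU$ over \emph{all} remaining steps, which is why a single one‑step comparison at level $h$ suffices; (ii) run the induction in the right order, deriving the action‑set inclusion at level $h$ from the state‑set inclusion at level $h+1$ and then feeding it back into the state‑set inclusion at level $h$; and (iii) observe that $\cE_2$ is \emph{not} needed, since the transition half of the argument rests only on the deterministic fact that a zero‑probability transition is never observed (if instead the algorithm declares a transition possible through a confidence lower bound on $\PP_h(s'\mid s,a)$, the same step goes through verbatim, as that bound vanishes whenever $\PP_h(s'\mid s,a)=0$).
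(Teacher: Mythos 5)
Your proof is correct and follows essentially the same route as the paper's: the base case $\cU_H^k\subseteq\cU_H$ from event $\cE_1$, the deterministic observation that the empirical support never contains zero-probability transitions (your ``$\hat{\PP}_h(s'\mid s,a)=0$'' condition is exactly the paper's $\Delta_h^k(s,a)\subseteq\Delta_h(s,a)$), and a backward induction that derives the action-set inclusion at step $h$ from the state-set inclusion at step $h+1$ and feeds it back into the state-set inclusion at step $h$. The only cosmetic difference is that the paper's recursion takes $\cU_h=\cU_{h+1}\cup\{s:\forall a\in\cA,\ \Delta_h(s,a)\cap\cU_{h+1}\neq\emptyset\}$, so the sets are nested by construction, rather than your $\cU\cup\{s:A_h^{safe}(s)=\emptyset\}$; this changes nothing in the inductive argument.
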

\begin{proof}
    Under the event $\cE_1$, for all $s \in \cU_{H}^k$, $c(s)\ge \overline{c}(s) >\tau$ . So $s \in \cU_{H}^*$. Now from our definition of $\Delta_h^K(s,a)$, we can easily have $\Delta^k_h(s,a) \subseteq \Delta_h(s,a)$ for all step $h \in [H]$ because we add the possible next state one by one into estimated set $\Delta_h^K(s,a)$. 
    
    Now we prove the lemma by induction. When $h = H$, $\cU_{H}^k\subseteq \cU_{H}^*$. Assume that $\cU_{h+1}^k \subseteq \cU_{h+1}^*$, then 
    \begin{align*}
        A_h^{safe}(s) &= \{a \in \cA\mid \Delta_h(s,a)\cap \cU_{h+1}^* = \emptyset\}\\
        &\subseteq \{a \in \cA\mid \Delta_h^K(s,a)\cap \cU_{h+1}^* = \emptyset\}\\
        &\subseteq \{a \in \cA\mid \Delta_h^K(s,a)\cap \cU_{h+1}^k = \emptyset\}\\
        & = A_h^{k,safe}(s),
    \end{align*}
    and 
    \begin{align*}
        \cU_h^k &= \cU_{h+1}^k \cup \{s\mid \exists\  a \in \cA, \Delta_h^K(s,a)\cap \cU_{h+1}^k=\emptyset\}\\
        & \subseteq \cU_{h+1}^*\cup\{s\mid \exists\  a \in \cA, \Delta_h(s,a)\cap \cU_{h+1}^*=\emptyset\}\\
        &\subseteq \cU_h.
    \end{align*}
Hence we complete the proof by induction.
\end{proof}
Similar to UCBVI algorithm, we prove that $Q_h^k(s,a)$ and $V_h^k(s)$ are always greater than $Q_h^*(s)$ and $V_h^*(s)$.

\begin{lemma}
    With probability at least $1-\delta$, choosing $b(n,\delta) = 7H\sqrt{\frac{\ln(5SAT/\delta)}{n}}$, we will have 
    \begin{align}
        Q_h^k(s,a)\ge Q_h^*(s,a), \ \ V_h^k(s)\ge V_h^*(s),\nonumber
    \end{align}
    for any state $s \in \cS$, action $a \in \cA$, episode $k$ and the step $h \in [H]$.
\end{lemma}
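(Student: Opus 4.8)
The plan is a backward induction on the step index $h$, run on the intersection of the good event $\cE_1$ from Lemma~\ref{lemma:e1confidence} with one additional concentration event, call it $\cE_4$, that controls $(\hat{\PP}_h^k-\PP_h)V_{h+1}^*$; after rescaling the per‑event failure probabilities, this intersection holds with probability at least $1-\delta$, and on it the earlier containment lemma gives $\cU_h^k\subseteq\cU_h$ and $A_h^{safe}(s)\subseteq A_h^{k,safe}(s)$ for all $h$. The only genuinely new ingredient compared with the standard UCBVI optimism argument is the definition of $\cE_4$: since $V_{h+1}^*$ is a fixed, data‑independent vector with entries in $[0,H]$, the scalar $\hat{\PP}_h^k(\cdot\mid s,a)^\top V_{h+1}^*$ is an empirical average of i.i.d.\ samples of $V_{h+1}^*(s')$, $s'\sim\PP_h(\cdot\mid s,a)$, so a Hoeffding bound together with a union bound over $(s,a,h)$, over episodes $k$, and over the realized visitation count $n=N_h^k(s,a)$ yields $|(\hat{\PP}_h^k-\PP_h)V_{h+1}^*(s,a)|\le b(N_h^k(s,a),\delta)$ with $b(n,\delta)=7H\sqrt{\ln(5SAT/\delta)/n}$, the constant $7$ generously absorbing the union‑bound logarithmic factors (note $T=HK$).

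For the induction itself, the base case at $h=H+1$ is trivial since $V_{H+1}^k\equiv V_{H+1}^*\equiv 0$. For the inductive step, assume $V_{h+1}^k(s)\ge V_{h+1}^*(s)$ for all $s\in\cS$. To prove $Q_h^k(s,a)\ge Q_h^*(s,a)$: if the truncation in the value iteration is active then $Q_h^k(s,a)=H\ge Q_h^*(s,a)$; otherwise, after the known reward term $r_h(s,a)$ cancels,
\[
Q_h^k(s,a)-Q_h^*(s,a)
=\underbrace{\hat{\PP}_h^k\big(V_{h+1}^k-V_{h+1}^*\big)(s,a)}_{\ge 0}
+\underbrace{\big(\hat{\PP}_h^k-\PP_h\big)V_{h+1}^*(s,a)+b\big(N_h^k(s,a),\delta\big)}_{\ge 0},
\]
where the first bracket is nonnegative because $\hat{\PP}_h^k(\cdot\mid s,a)$ is a probability vector and $V_{h+1}^k\ge V_{h+1}^*$ by the inductive hypothesis, and the second is nonnegative on $\cE_4$ by the choice of $b$. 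To propagate to $V_h^k$, observe that any state $s\notin\cU_h^k$ — which by $\cU_h^k\subseteq\cU_h$ on $\cE_1$ includes every state reachable under the optimal feasible policy $\pi^*$ — has $V_h^k(s)=\max_{a\in A_h^{k,safe}(s)}Q_h^k(s,a)$, and since $\pi_h^*(s)\in A_h^{safe}(s)\subseteq A_h^{k,safe}(s)$ on $\cE_1$ we get $V_h^k(s)\ge Q_h^k(s,\pi_h^*(s))\ge Q_h^*(s,\pi_h^*(s))=V_h^*(s)$; for $s\in\cU_h^k$ the convention $V_h^k(s)=H\ge V_h^*(s)$ (or the unrestricted value iteration, whichever the algorithm uses) closes the remaining case. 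This completes the induction.

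I expect the main obstacle to be purely bookkeeping rather than conceptual: getting the bonus $b(n,\delta)$ exactly right, in particular handling the union bound over the random count $N_h^k(s,a)$ via the standard peeling argument over the "first $n$ visits to $(s,a)$ at step $h$" and confirming that the stated constant $7$ and the $\ln(5SAT/\delta)$ factor dominate all the accumulated log terms; and, on the structural side, making sure the safe‑action restriction in the $Q$‑to‑$V$ passage never excludes $\pi^*$, which is exactly what the containment lemma $A_h^{safe}(s)\subseteq A_h^{k,safe}(s)$ under $\cE_1$ provides. Everything else is the routine UCBVI optimism template.
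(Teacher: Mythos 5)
Your proposal is correct and follows essentially the same route as the paper's proof: backward induction on $h$, the decomposition $Q_h^k-Q_h^*=\hat{\PP}_h^k(V_{h+1}^k-V_{h+1}^*)+(\hat{\PP}_h^k-\PP_h)V_{h+1}^*+b(N_h^k(s,a),\delta)$ with a Chernoff--Hoeffding bound absorbing the second term into the bonus, the truncation case $Q_h^k=H$ handled trivially, and the containment $A_h^{safe}(s)\subseteq A_h^{k,safe}(s)$ under $\cE_1$ to pass from $Q$ to $V$ despite the restricted maximization. Your only departures are cosmetic: you make the concentration event and the union bound over visitation counts explicit where the paper simply invokes Chernoff--Hoeffding, and you route the $Q$-to-$V$ step through $\pi_h^*(s)$ rather than through $\max_{a\in A_h^{safe}(s)}Q_h^*(s,a)$ directly, which is equivalent.
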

\begin{proof}
    We proof this lemma by induction. First, consider $h =H+1$, then $Q_{H+1}^*(s,a) = Q_{H+1}^k(s,a)= 0$ and $V_{H+1}^*(s) = V_{H+1}^k(s) = 0$ for all $k \in [K]$. Then our statements hold for $h = H+1$.
    Now fixed an episode $k$ and assume this statement holds for $h+1$. If $Q_h^k(s,a) = H$, our proof is completed. Then assume $Q_h^k(s,a)\le H$ and
    \begin{align*}
        Q_h^k(s,a)-Q_h^*(s,a) &= (\hat{\PP}_h^k)(V_{h+1}-V^*)(s,a) + (\hat{\PP}_h^k-\PP_h)(V^*)(s,a) + b(N_h^k(s,a))\\
        &\le (\hat{\PP}_h^k-\PP_h)(V^*)(s,a) + b(N_h^k(s,a))\\\
        &\le 0
    \end{align*}
    where the first inequality is induction hypothesis, and the second inequality follows naturally by Chernoff-Hoeffding's inequality.
    Then for $s \in \cU_h^k$
    \begin{align*}
        V_h^k(s) = \max_{a \in \cA}Q_h^k(s,a)\ge \max_{a \in \cA}Q^*(s,a) \ge V_h^*(s).
    \end{align*}
    For $s\notin \cU_h^k,$ since we know $\cU_h\subseteq \cU_h^k$, $s\notin \cU_h$. Then 
    \begin{align}
        V_h^k(s) = \max_{a \in A_h^{k,safe}(s)}Q_h^k(s,a) &\ge \max_{a \in A_h^{k,safe}(s)}Q_h^*(s,a)\nonumber\\
        &\ge \max_{a \in A_h^{safe}(s)}Q_h^*(s,a)\label{eq:maxexplain}\\
        & = V_h^*(s).\nonumber
    \end{align}
    where Eq.~\eqref{eq:maxexplain} is because $A_h^{safe}(s)\subseteq A_h^{k,safe}(s)$ by our optimistic exploration mechanism. 
\end{proof}

\paragraph{Regret} Now we can start to prove our main theorem. 
\begin{proof}
Having the previous analysis, our proof is similar to UCBVI algorithm.  Note that the regret can be bounded by UCB:
\begin{align*}
    R(K) = \sum_{k=1}^K (V_1^*(s_1) - V_1^{\pi^k}(s_1)) \le \sum_{k=1}^K (V_1^k(s_1) - V_1^{\pi^k}(s_1)).
\end{align*}
Because the action taking by $\pi^k$ is equal to action taking at episode $k$, we will have 
\begin{align*}
    V_h^k(s_h^k) - V_h^{\pi^k}(s_h^k) = (Q_h^k - Q_h^{\pi^k})(s_h^k, a_h^k), 
\end{align*}
where $a_h^k = \pi_h^k(s_h^k)$. Then 
\begin{align}
    V_h^k(s_h^k) - V_h^{\pi^k}(s_h^k) &= (Q_h^k - Q_h^{\pi^k})(s_h^k, a_h^k)\nonumber\\
    &= (\hat{\PP}_h^k(V_{h+1}^k)-\PP_h(V_{h+1}^{\pi^k}))(s_h^k,a_h^k) + b(N_h^k(s_h^k,a_h^k),\delta)\nonumber\\
    & = (\hat{\PP}_h^k -\PP_h)(V_{h+1}^*)(s_h^k,a_h^k) +(\hat{\PP}_h^k -\PP_h)(V_{h+1}^k- V_{h+1}^{*})(s_h^k,a_h^k) \nonumber\\&\ \ \ \ \ \ \ +P_h(V_{h+1}^k-V_{h+1}^{\pi^k}))(s_h^k,a_h^k) + b(N_h^k(s_h^k,a_h^k),\delta).\label{eq:decompose}
\end{align}
First, by Chernoff-Hoeffding's bound, $(\hat{\PP}_h^k -\PP_h)(V_{h+1}^*)(s_h^k,a_h^k) \le b(N_h^k(s_h^k,a_h^k),\delta).$ Second, by the standard analysis of UCBVI (Theorem 1 in UCBVI), under the event $\cE_{2}$, we will have 
\begin{align*}
    &\ \ \ \ \ (\hat{\PP}_h^k -\PP_h)(V_{h+1}^k- V_{h+1}^{*})(s_h^k,a_h^k) \\&\le \sum_{s' \in \cS}\left( \sqrt{\frac{2\PP_h(s'\mid s_h^k,a_h^k)\iota}{N_h^k(s_h^k,a_h^k))}} +\frac{2\iota}{3N_h^k(s_h^k,a_h^k)} \right)(V_{h+1}^k- V_{h+1}^{*})(s')\\
    &\le \sum_{s'\in \cS}\left( \frac{\PP_h(s'\mid s_h^k,a_h^k)}{H} +\frac{H\iota}{2N_h^k(s_h^k,a_h^k)}+\frac{2\iota}{3N_h^k(s_h^k,a_h^k)} \right)(V_{h+1}^k- V_{h+1}^{*})(s')\\
    &\le \frac{1}{H}\PP_h(V_{h+1}^k - V_{h+1}^{*})(s_h^k,a_h^k) + \frac{2H^2S\iota}{N_h^k(s_h^k,a_h^k)}\\
    &\le \frac{1}{H}\PP_h(V_{h+1}^k - V_{h+1}^{\pi^k})(s_h^k,a_h^k) + \frac{2H^2S\iota}{N_h^k(s_h^k,a_h^k)},
\end{align*}
where the first inequality holds under the event $\cE_2$, and the second inequality is derived by $\sqrt{ab}\le \frac{a+b}{2}$, and the last inequality is because $V_{h+1}^{\pi^k}\le V_{h+1}^*$.
Thus combining with Eq.~\eqref{eq:decompose}, we can get
\begin{align*}
    V_h^k(s_h^k) - V_h^{\pi^k}(s_h^k)\le \left(1+\frac{1}{H}\right)\PP_h(V_{h+1}^k - V_{h+1}^{\pi^k})(s_h^k,a_h^k) + 2b(N_h^k(s_h^k,a_h^k),\delta) + \frac{2H^2S\iota}{N_h^k(s_h^k,a_h^k)}.
 \end{align*}
 Denote 
 \begin{align*}
     \alpha_h^k &=\PP_h(V_{h+1}^k - V_{h+1}^{\pi^k})(s_h^k,a_h^k) - (V_{h+1}^k - V_{h+1}^{\pi^k})(s_h^K,a_h^k).\\
     \beta_h^k &= \frac{H^2S\iota}{N_h^k(s_h^k,a_h^k)}.\\
     \gamma_h^k &= b(N_h^k(s_h^k,a_h^k),\delta).
 \end{align*}
 we have 
 \begin{align*}
      V_h^k(s_h^k) - V_h^{\pi^k}(s_h^k)&\le \left(1+\frac{1}{H}\right)(V_{h+1}^k - V_{h+1}^{\pi^k})(s_{h+1}^k) + \left(1+\frac{1}{H}\right)\alpha_h^k + 2\beta_h^k + 2\gamma_h^k\\&\le\left(1+\frac{1}{H}\right)(V_{h+1}^k - V_{h+1}^{\pi^k})(s_{h+1}^k) + 2\alpha_h^k + 2\beta_h^k + 2\gamma_h^k,
 \end{align*}
 then by recursion, 
 \begin{align*}
     V_1^k(s_1) - V_1^{\pi^k}(s_1)&\le \sum_{h=1}^H\left(1+\frac{1}{H}\right)^H(2\alpha_h^k + 2\beta_h^k+2\gamma_h^k)\\
     &\le e\cdot \sum_{h=1}^H (2\alpha_h^k + 2\beta_h^k+2\gamma_h^k).
 \end{align*}
 Under the event $\cE_3$ with Bernstein's inequality, 
 $\sum_{k=1}^K \sum_{h=1}^H \alpha_h^k \le \sqrt{\frac{H^3K}{2}\log(1/\delta)}$, and 
 \begin{align}
     \sum_{k=1}^K \sum_{h=1}^H \beta_h^k  = \sum_{k=1}^K \sum_{h=1}^H \frac{H^2S\iota}{N_h^k(s_h^k,a_h^k)} \le H^3S^2A\iota \log (HK). 
 \end{align}
 Also,  for $\gamma_h^k$, we can get
 \begin{align*}
     \sum_{k=1}^K \sum_{h=1}^H \gamma_h^k &= \sum_{k=1}^K \sum_{h=1}^H 7H\sqrt{\frac{\iota}{N_h^k(s_h^k,a_h^k)}}\\
     &\le 7H\sqrt{\iota}\sum_{(s,a) \in \cS\times \cA}\sum_{i=1}^{N^k(s,a)}\sqrt{1/i}\\
     &\le 14H\sqrt{\iota}\sum_{(s,a) \in  \cS\times \cA}\sqrt{N^k(s,a)}\\
     &\le 14H\sqrt{\iota SAT}.
 \end{align*}
 Then the regret will be bounded by $O(\sqrt{H^2SAT})$. 
 \paragraph{Violation}
 Now we turn to consider the violation during the process.

 First, if the agent follows the $\pi^k$ at episode $k$, and $s_{h+1}^k \in \Delta_h^K(s_h^k,a_h^k)$ for all step $h \in [H-1]$, from the definition of $\{\cU_h\}_{1\le h\le H+1}$ and $\Delta_h^K(s_h^k,a_h^k)$, all states $s_1^k, s_2^k,\cdots,s_{H}^k \notin \cU_{H}^k$ are safe. Since $\cU_{H}^k = \{s\mid \overline{c}(s)>\tau\}$, 
 for each state $s \in \{s_1^k, s_2^k,\cdots,s_{H}^k\}$, we can have $$c(s)-\tau\le \overline{c}^k(s) + 2\sqrt{\frac{2}{N^k(s)}\log(SK/\delta)}-\tau\le 2\sqrt{\frac{2}{N^k(s)}\log(SK/\delta)},$$ where the first inequality from the event $\cE_1$ and the second inequality is because $s\notin \cU_{H}^k$. Then 
 \begin{align*}
     \sum_{h=1}^{H} (c(s_h^k)-\tau)_{+} \le \sum_{h=1}^{H} 2\sqrt{\frac{2}{N^k(s_h^k)}\log(SK/\delta)}.
 \end{align*}
 Then we consider the situation that $s_{h+1}^k \notin \Delta_h^K(s_h^k,a_h^k)$ for some step $h \in [H]$ at episode $k \in [K]$. In this particular case, we can only bound the violation on this episode by $H$ because the future state are not in control. Fortunately, in this situation we will add at least one new state $s_{h+1}^k$ to $\Delta_h^K(s_h^k,a_h^k)$, then $|\Delta_h^{k+1}(s_h^k,a_h^k)| \ge |\Delta_h^{k}(s_h^k,a_h^k)|+1$, and the summation $$\sum_{(h,s,a) \in [H]\times \cS\times\cA}|\Delta_h^{k+1}(s,a)|\ge \sum_{(h,s,a) \in [H]\times \cS\times\cA}|\Delta_h^{k}(s,a)|+1.$$ 
 Note that for any episode $k$, the summation has a trivial upper bound
 $$\sum_{(h,s,a) \in [H]\times \cS\times\cA}|\Delta_h^{k}(s,a)| \le S^2AH,$$

 this situation will also appears for at most $S^2AH$ episode and leads at most $S^2AH^2$ extra violation.

 Hence the total violation can be upper bounded by 
 \begin{align}
     \sum_{k=1}^K\sum_{h=1}^{H} (c(s_h^k)-\tau)_{+}&\le S^2AH^2+\sum_{k=1}^K \sum_{h=1}^{H} \left(2\sqrt{\frac{2}{N^k(s_h^k)}\log(SK/\delta)}\wedge 1\right)\nonumber\\
     &\le S^2AH^2+ 2\sqrt{2\log(SK/\delta)}\sum_{k=1}^K \sum_{h=1}^{H} \left(\sqrt{\frac{1}{N^k(s_h^k)\vee 1}}\right)\nonumber\\
     &\le S^2AH^2+ 2\sqrt{2\log(SK/\delta)}\sum_{s \in \cS}\sum_{i=1}^{K}\left(\frac{N^{i+1}(s)-N^i(s)}{\sqrt{N^i(s)\vee 1}} \right)\label{eq:sum}\\
     &\le S^2AH^2+ 2\sqrt{8\log(SK/\delta)}\sum_{s \in \cS} \sqrt{N^K(s)}\nonumber\\
     &\le S^2AH^2+ 2\sqrt{8\log(SK/\delta)}\sqrt{ST}\nonumber\\
     & = \widetilde{\cO}(S^2AH^2+\sqrt{ST})\nonumber
 \end{align}
 with probability at least $1-3\delta$. The inequality Eq.~\eqref{eq:sum} is derived by Lemma 19 in \cite{Jaksch2010RL}. By replacing $\delta$ to $\delta/3$ we can complete our proof.

 Moreover, for the unsafe state set $\cU$, recall that $d = \min_{s \in \cU}(c(s)-\tau)>0$. For the state $s_h^k$,  if $s _h^k\notin \cU_H^k$, we have 
 \begin{align*}
     c(s_h^k) - \tau \le \overline{c}^k(s_h^k) + 2\sqrt{\frac{2}{N^k(s_h^k)}\log(SK/\delta)}-\tau \le 2\sqrt{\frac{2}{N^k(s_h^k)}\log(SK/\delta)}.
 \end{align*}
 Thus we can get
 \begin{align*}
     N^k(s_h^k) \le \frac{8}{(c(s)-\tau)^2}\log(SK/\delta).
 \end{align*}
 Now denote $\mathcal{K}$ be the set of episodes $k$ that $s_h^{k+1} \in \Delta_h^k(s_h^k,a_h^k)$.
 Now for each $s \notin \cU$, denote $k_s = \max\{k: \exists \ h \in [H], s = s_h^k\}$, 
 then our total violation can be bounded by 
 \begin{align*}
     \sum_{k=1}^K \sum_{h=1}^H (c(s_h^k)-\tau)_+ &\le S^2AH^2 + \sum_{k \in \mathcal{K}} \sum_{h=1}^H (c(s_h^k)-\tau)_+\\
     &\le S^2AH^2 + \sum_{s \notin \cU}N^{k_s}(s)(c(s)-\tau)_+\\
     &\le S^2AH^2 + \sum_{s \notin \cU}\frac{8}{(c(s)-\tau)_+^2}(c(s_h^k)-\tau)_+\log(SK/\delta)\\
     &\le S^2AH^2 + \frac{8S}{d}\log(SK/\delta).
 \end{align*}
 
 \end{proof}

 \subsection{Proof of Theorem~\ref{thm:safe UCRL}}
 First, apply the same argument in the Appendix 
 \ref{sec:proof of safe RL}, the violation during the exploration phase can be bounded by 

     \begin{align*}
     \sum_{k=1}^K\sum_{h=1}^{H} (c(s_h^k)-\tau)_{+}&\le S^2AH^2+\sum_{k=1}^K \sum_{h=1}^H \sqrt{\frac{2}{N^k(s_h^k)}\log(SK/\delta)}\\
     &\le S^2AH^2+ \sqrt{2\log(SK/\delta)}\sum_{s \in \cS}\sum_{i=1}^{N^K(s)}\sqrt{\frac{1}{i}}\\
     &\le S^2AH^2+ \sqrt{8\log(SK/\delta)}\sum_{s \in \cS} \sqrt{N^K(s)}\\
     & = \widetilde{\cO}(S^2AH^2+\sqrt{ST}).
 \end{align*}

 Now we prove the Algorithm~\ref{alg:safe UCRL} will return $\varepsilon$-safe and $\varepsilon$-optimal policy. 
 Lemma~\ref{lemma:error less than W} shows that $\overline{W}_h^{k}(s_1,\pi_1^{k+1}(s_1))$ bounds the estimation error for any reward function $r$. 
 \begin{lemma}\label{lemma:error less than W}
    For any feasible policy $\pi \in \Pi^k$,  step $h$, episode $k$ and reward function $r$, defining  the estimation error as 
       $ \hat{e}^{k,\pi}_h(s,a;r) = |\hat{Q}_h^{k,\pi}(s,a;r)-Q_h^{k,\pi}(s,a;r)|$.
    Then, with probability at least $1-\delta$, we have 
    \begin{equation*}
        \hat{e}^{k,\pi}_h(s,a;r) \le \overline{W}_h^k(s,a).
    \end{equation*}
\end{lemma}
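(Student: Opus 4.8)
The plan is to prove the bound by backward induction on the step index $h$, mirroring the value-difference decomposition already used in the regret analysis of SUCBVI. I will work on the good event (intersection of $\cE_2$ with the analogous event controlling the reward-free confidence widths) on which all empirical kernels $\hat{\PP}_h^k$ are accurate, and show $\hat{e}_h^{k,\pi}(s,a;r) \le \overline{W}_h^k(s,a)$ \emph{simultaneously} for every reward function $r$, every feasible $\pi \in \Pi^k$ and every $(s,a)$. The base case $h=H+1$ is immediate: $\hat{Q}_{H+1}^{k,\pi}(\cdot;r)=Q_{H+1}^{k,\pi}(\cdot;r)\equiv 0$, so $\hat{e}_{H+1}^{k,\pi}=0=\overline{W}_{H+1}^k$.

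For the inductive step, fix $h$ and assume the claim at $h+1$. Using the Bellman equations for $\hat{Q}^{k,\pi}$ (policy evaluation on the empirical, safe-action-restricted MDP) and for $Q^{k,\pi}$ (on the true MDP), and noting both evaluate the \emph{same} $\pi$, I write
\[
\hat{Q}_h^{k,\pi}(s,a;r) - Q_h^{k,\pi}(s,a;r) = (\hat{\PP}_h^k - \PP_h) V_{h+1}^{\pi}(s,a;r) + \hat{\PP}_h^k\big(\hat{V}_{h+1}^{k,\pi} - V_{h+1}^{\pi}\big)(s,a;r).
\]
For the second term, since $\hat{V}_{h+1}^{k,\pi}(s') - V_{h+1}^{\pi}(s') = \hat{Q}_{h+1}^{k,\pi}(s',\pi_{h+1}(s');r) - Q_{h+1}^{k,\pi}(s',\pi_{h+1}(s');r)$, the induction hypothesis gives $|\hat{V}_{h+1}^{k,\pi} - V_{h+1}^{\pi}|(s') \le \max_{a'}\overline{W}_{h+1}^k(s',a')$, so this term is at most $\hat{\PP}_h^k\big(\max_{a'}\overline{W}_{h+1}^k(\cdot,a')\big)(s,a)$, which is reward-independent. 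For the first (sampling-error) term I bound
\[
\big|(\hat{\PP}_h^k - \PP_h) V_{h+1}^{\pi}(s,a;r)\big| \le \big\|\hat{\PP}_h^k(\cdot\,|\,s,a) - \PP_h(\cdot\,|\,s,a)\big\|_1 \cdot \big\|V_{h+1}^{\pi}(\cdot;r)\big\|_\infty \le H\,\big\|\hat{\PP}_h^k(\cdot\,|\,s,a) - \PP_h(\cdot\,|\,s,a)\big\|_1,
\]
equivalently (using the Bernstein form of $\cE_2$ with $V\in[0,H]^S$) a $\sqrt{SH^2\iota/N_h^k(s,a)}$-type quantity plus an $O(H\iota/N_h^k(s,a))$ lower-order term; either way the bound does not depend on $r$. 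Adding the two contributions and taking the minimum with the trivial bound $H$ (valid since value functions lie in $[0,H]$) reproduces exactly the recursive definition of $\overline{W}_h^k(s,a)$, closing the induction; a union bound over $k\in[K]$ gives probability $1-\delta$.

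The main obstacle is precisely the uniformity over all reward functions $r$: because $V_{h+1}^{\pi}(\cdot;r)$ and $\hat{V}_{h+1}^{k,\pi}(\cdot;r)$ depend on $r$, one cannot apply a concentration inequality to a fixed vector. The resolution — standard in reward-free exploration — is to make every estimate of the sampling-error term $r$-free, by controlling $\|\hat{\PP}_h^k - \PP_h\|_1$ (or the second-order Bernstein width with the crude bound $\|V\|_\infty\le H$) rather than $(\hat{\PP}_h^k - \PP_h)V$ for a specific $V$, so that $\overline{W}_h^k$ genuinely upper-bounds the error for all $r$ at once. A secondary point to verify is that restricting $\hat{Q}^{k,\pi}$ to the estimated safe action sets does not disturb the recursion: this is harmless because we evaluate a \emph{fixed} feasible $\pi \in \Pi^k$ whose actions lie in those sets, so the backup defining $\hat{Q}^{k,\pi}$ coincides with a plain policy-evaluation backup, and the decomposition above goes through verbatim.
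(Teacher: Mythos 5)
Your overall scaffolding (backward induction from $h=H+1$, Bellman decomposition into a sampling-error term plus a propagated term, and making the sampling-error bound independent of $r$ so that a single confidence event covers all reward functions) is the same as the paper's, which runs the induction using the KL-concentration event $\cE_4$ together with Pinsker's inequality to get the $H\sqrt{\gamma(N_h^k(s,a),\delta)/N_h^k(s,a)}$ term. However, there is a genuine gap in your inductive step: you bound the propagated error by $\sum_{s'}\hat{\PP}_h^k(s'\mid s,a)\max_{a'\in\cA}\overline{W}_{h+1}^k(s',a')$ and assert that this "reproduces exactly" the recursion defining $\overline{W}_h^k$. It does not: in the algorithm's recursion the inner maximum at an estimated-safe next state $s'\notin\cU_{h+1}^k$ is taken only over the estimated safe actions $A_{h+1}^{k,safe}(s')$, and $\max_{a'\in\cA}\overline{W}_{h+1}^k(s',a')$ can strictly exceed $\max_{b\in A_{h+1}^{k,safe}(s')}\overline{W}_{h+1}^k(s',b)$, so your upper bound can overshoot $\overline{W}_h^k(s,a)$ and the induction does not close. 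The paper's proof avoids this by exploiting feasibility at exactly this point: for $(s,a)$ with $s\notin\cU_h^k$ and $a\in A_h^{k,safe}(s)$, the empirical kernel $\hat{\PP}_h^k(\cdot\mid s,a)$ is supported on $\Delta_h^k(s,a)$, which is disjoint from $\cU_{h+1}^k$, and since $\pi\in\Pi^k$ is feasible it satisfies $\pi_{h+1}(s')\in A_{h+1}^{k,safe}(s')$ for every such $s'$; hence $\hat{e}_{h+1}^{k,\pi}(s',\pi_{h+1}(s');r)\le\max_{b\in A_{h+1}^{k,safe}(s')}\overline{W}_{h+1}^k(s',b)$, matching the restricted maximum in the definition. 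The unrestricted maximum is used only for the remaining pairs $(s,a)$, where the definition of $\overline{W}_h^k$ also uses it, so the argument requires this two-case treatment. Your closing remark about feasibility only justifies that $\hat{Q}^{k,\pi}$ is a plain policy-evaluation backup; it does not repair the mismatch of maxima, which is the crux.

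A secondary, smaller point: you invoke the per-next-state Bernstein event $\cE_2$ and an $\ell_1$/H\"older bound of order $H\sqrt{S\iota/N_h^k(s,a)}$, whereas the bonus appearing in the definition of $\overline{W}_h^k$ is $H\sqrt{\gamma(N_h^k(s,a),\delta)/N_h^k(s,a)}$ obtained from the KL event via Pinsker. Your route can be made to work, but then you must verify that your sampling-error bound is dominated by the algorithm's actual bonus; as stated, the claim that the recursion is reproduced "exactly" is unjustified on this count as well.
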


First, we prove the Lemma~\ref{lemma:error less than W}. 
We provide an lemma to show a high probability event for concentration.
\begin{lemma}[\cite{ACT2020adaptiveRFE}]
   Define the event as  
    \begin{align*}
        \cE_4 = \left\{\forall k, h, s, a, \mbox{KL}(\hat{\PP}_h^k(\cdot \mid (s,a)),\PP_h^k(\cdot \mid (s,a))\le \frac{\gamma(N_h^k(s,a),\delta)}{N_h^k(s,a)}\right\},
    \end{align*}
    where $\beta(n,\delta) = 2\log(SAHK/\delta)+(S-1)\log(e(1+n/(S-1)))$.
    Then $\Pr(\cE_4^c) \le \delta$.
\end{lemma}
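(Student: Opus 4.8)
The plan is to peel the claim down to a single time-uniform concentration inequality for one state--action pair and then absorb everything else by a union bound. Fix $(h,s,a)\in[H]\times\cS\times\cA$. Every time $(s,a)$ is visited at step $h$ the realized successor state is an independent draw from the categorical law $\PP_h(\cdot\mid s,a)$ on the $S$ symbols of $\cS$, and after $n:=N_h^k(s,a)$ such visits $\hat{\PP}_h^k(\cdot\mid s,a)$ is exactly the empirical distribution of those $n$ draws. So it suffices to establish: for i.i.d.\ categorical samples with true law $p$ and empirical law $\hat p_n$ on $S$ symbols, and any $\delta'\in(0,1)$,
\[
\Pr\Bigl(\exists\, n\ge 1:\ n\,\mathrm{KL}(\hat p_n,p)\ \ge\ \log(1/\delta')+(S-1)\log\!\bigl(e(1+n/(S-1))\bigr)\Bigr)\ \le\ \delta'.
\]
This is precisely the empirical-KL concentration bound of \cite{ACT2020adaptiveRFE}, so at the level of this paper the honest proof is to quote it; I sketch below why it holds.

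The core inequality comes from the method of mixtures. Put a $\mathrm{Dir}(1,\dots,1)$ prior on the probability simplex and define $W_n=\int \prod_{i=1}^n \frac{q(X_i)}{p(X_i)}\,d\mathrm{Dir}(q)$, which is a nonnegative martingale with $W_0=1$, since each per-step likelihood ratio $q(X_i)/p(X_i)$ has unit mean under $p$ for fixed $q$ and mixing over the prior preserves this by Fubini. Evaluating the Dirichlet integral in closed form gives $\log W_n = n\,\mathrm{KL}(\hat p_n,p) - \log B_n$ with $B_n$ a ratio of Gamma functions in the visit counts; Stirling-type estimates on $B_n$ yield $\log W_n \ge n\,\mathrm{KL}(\hat p_n,p) - (S-1)\log(e(1+n/(S-1)))$. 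Ville's maximal inequality then gives $\Pr(\exists\, n:\ W_n\ge 1/\delta')\le\delta'$, which rearranges into the displayed bound.

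Finally I would apply this with $\delta'=\delta/(SAHK)$ and union bound over $(h,s,a)\in[H]\times\cS\times\cA$; note the displayed inequality is already uniform over the number of visits, hence over all episodes $k$, so the extra $\log K$ appearing inside $\beta(n,\delta)=2\log(SAHK/\delta)+(S-1)\log(e(1+n/(S-1)))$ is at most a harmless over-count (e.g.\ if one additionally unions crudely over $k\in[K]$). Summing the failure probabilities over the $SAHK$ terms then yields $\Pr(\cE_4^c)\le\delta$, which is the claim. The only genuinely non-routine ingredient is the time-uniform, dimension-correct control of the empirical KL --- the mixture-martingale/peeling step --- and since it is available off the shelf I would simply invoke \cite{ACT2020adaptiveRFE} rather than reproduce its proof here.
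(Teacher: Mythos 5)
Your proposal is correct and matches the paper's treatment: the paper states this lemma as an import from \cite{ACT2020adaptiveRFE} without reproving it, and your argument is exactly the standard proof behind that result --- a time-uniform empirical-KL deviation bound per $(h,s,a)$ via the Dirichlet mixture martingale and Ville's inequality, followed by a union bound over the $SAH$ pairs. Your side remarks are also accurate, in particular that the time-uniformity in $n$ already covers all episodes $k$, so the extra $\log K$ inside the paper's threshold $2\log(SAHK/\delta)+(S-1)\log(e(1+n/(S-1)))$ is only slack.
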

By Bellman equations,
\begin{align*}
    \hat{Q}_h^{\pi}(s,a;r)-Q_h^{\pi}(s,a;r) = \sum_{s'}(\hat{\PP}_h(s'\mid s,a)-\PP_h(s'\mid s,a))Q_{h+1}^{\pi}(s',\pi(s');r)\nonumber\\
    \ \ \ + \sum_{s'}\hat{\PP}_h(s'\mid s,a)(\hat{Q}_{h+1}^{\pi}(s',\pi(s');r)-Q_{h+1}^{\pi}(s',\pi(s');r)).
\end{align*}
By the Pinkser's inequality and event $\cE_4$, 
\begin{align*}
    \hat{e}_h^{\pi}(s,a;r)\le &\sum_{s'}|\hat{\PP}_h(s'\mid s,a)-\PP_h(s'\mid s,a)|Q_{h+1}^{\pi}(s',\pi(s');r)\nonumber\\
    &\ \ \ + \sum_{s'}\hat{\PP}_h(s'\mid s,a)|\hat{Q}_{h+1}^{\pi}(s',\pi(s');r)-Q_{h+1}^{\pi}(s',\pi(s');r)|\\
    &\le H\Vert \hat{\PP}_h(s'\mid s,a)-\PP_h(s'\mid s,a)\Vert_1 + \sum_{s'}\hat{\PP}_h(s'\mid s,a)\hat{e}_{h+1}^{\pi}(s',\pi_{h+1}(s');r)\\&
    \le H\sqrt{\frac{\gamma(N_h(s,a),\delta)}{N_h(s,a)}}+\sum_{s'}\hat{\PP}_h(s'\mid s,a)\hat{e}_{h+1}^{\pi}(s,\pi_{h+1}(s');r).
\end{align*}
By induction, if  $\hat{e}_{h+1}^{\pi}(s,a;r) \le \overline{W}_{h+1}(s,a)$, for $s \notin \cU_h^k, a \in A_{h}^{k,safe}(s)$, we can have 
\begin{align}
    \hat{e}_h^{\pi}(s,a;r)&\le \min\left\{H, H\sqrt{\frac{\gamma(N_h(s,a),\delta)}{N_h(s,a)}}+\sum_{s'}\hat{\PP}_h(s'\mid s,a)\hat{e}_{h+1}^{\pi}(s,\pi_{h+1}(s');r)\right\}\nonumber\\&\le
    \min\left\{H, H\sqrt{\frac{\gamma(N_h(s,a),\delta)}{N_h(s,a)}}+\sum_{s'}\hat{\PP}_h(s'\mid s,a)\max_{b \in A_{h+1}^{k,safe}(s')}\hat{e}_{h+1}^{\pi}(s', b;r)\right\}\label{ineq:key}\\
    &\le \min\left\{H, H\sqrt{\frac{\gamma(N_h(s,a),\delta)}{N_h(s,a)}}+\sum_{s'}\hat{\PP}_h(s'\mid s,a)\max_{b \in A_{h+1}^{k,safe}(s')}\overline{W}_{h+1}(s', b)\right\}\nonumber\\
    &\le \overline{W}_h(s,a).\nonumber
\end{align}
The second inequality is because $\hat{\PP}_h(s'\mid s,a) = 0$ for all $s' \in \cU_{h+1}$, and then for $s' \notin \cU_{h+1}, \pi_{h+1}(s') \in A_{h+1}^{k,safe}(s')$. The third inequality holds by induction hypothesis, and the last inequality holds by the definition of $\overline{W}_h(s,a)$.

Also, for other state-action pair $(s,a)$, we can get 
\begin{align*}
    \hat{e}_h^{\pi}(s,a;r)&\le \min\left\{H, H\sqrt{\frac{\gamma(N_h(s,a),\delta)}{N_h(s,a)}}+\sum_{s'}\hat{\PP}_h(s'\mid s,a)\hat{e}_{h+1}^{\pi}(s,\pi_{h+1}(s');r)\right\}\\&\le
    \min\left\{H, H\sqrt{\frac{\gamma(N_h(s,a),\delta)}{N_h(s,a)}}+\sum_{s'}\hat{\PP}_h(s'\mid s,a)\max_{b \in A}\hat{e}_{h+1}^{\pi}(s', b;r)\right\}\\
    &\le \min\left\{H, H\sqrt{\frac{\gamma(N_h(s,a),\delta)}{N_h(s,a)}}+\sum_{s'}\hat{\PP}_h(s'\mid s,a)\max_{b \in A}\overline{W}_{h+1}(s', b)\right\}\\
    &\le \overline{W}_h(s,a).
\end{align*}
Hence the Lemma~\ref{lemma:error less than W} is true. Now we assume the algorithm terminates at episode $K+1$, for any feasible policy $\pi \in \Pi^K$, if we define the value function in model $\hat{M}$ as $\hat{V}$, we will get 
\begin{align*}
    |V_1^{\pi}(s_1)-\hat{V}_1^{\pi}(s_1)|  &= |Q_1^{\pi}(s_1,\pi_1(s_1))-\hat{Q}_1^{\pi}(s_1,\pi_1(s_1)) |\\&\le \overline{W}_h^K(s_1,\pi_1(s_1))\\&\le \overline{W}_h^K(s_1,\pi^{K+1}_1(s_1))\\&\le \varepsilon/2.
\end{align*}
where the second inequality is because $\pi^{K+1}$ is the greedy policy with respect to $\overline{W}_h^K$ and $\pi$ is in the feasible policy set $\Pi^K$ at episode $K$. 
Then for any reward function $r$, denote the $\hat{\pi^*} \in \Pi^K$ to be the optimal policy in estimated MDP $\hat{M}$. Recall that $\pi^* \in \Pi^K\subseteq \Pi^*$, we will have 
\begin{align*}
    V_1^{\pi^*}(s_1)-V_1^{\hat{\pi^*}}(s_1) & \le \hat{V}_1^{\pi^*}(s_1) - V_1^{\hat{\pi^*}}(s_1) + \varepsilon/2\\
    &\le \hat{V}_1^{\hat{\pi^*}}(s_1)-V_1^{\hat{\pi^*}}+\varepsilon/2\\
    &\le\varepsilon.
\end{align*}

Now we consider the expected violation. We will show that for all feasible policy $\pi \in \Pi^K$, the expected violation will be upper bounded by $2\overline{W}_1(s_1,\pi_1(s_1)).$

We first consider the event for feasible policy $\pi$
\begin{align*}
    \cP_\pi = \{\forall \ h \in [H-1], s_{h+1}\in \Delta_h^K(s_h,a_h)\mid s_h, a_h\sim \pi\}.
\end{align*}
Assume that the event holds, then because $\pi \in \Pi^K$ is the feasible policy with respect to $\Delta_h^K(s,a)$, we have $s_h\notin \cU_h$ for all $1\le h\le H+1$. Then \begin{align*}
(c(s_h)-\tau)_{+}\le \sqrt{\frac{2}{N^K(s)}\log(SK/\delta)}.
\end{align*}
and under the event $\cP_\pi$
\begin{align}\label{eq:underevent P}
    \EE_{\pi}\left[\sum_{h=1}^{H} (c(s_h)-\tau)_{+}\right]\le \EE_{\pi}\left[\sum_{h=1}^{H} \sqrt{\frac{2}{N^K(s_h)}\log\left(\frac{SK}{\delta}\right)}\right].
\end{align}
 Now define 
 
 \begin{align*}
     &\ \ \ \ \ \overline{F}_h^{\pi}(s,a) \\&= \min\left\{H, \sqrt{\frac{2}{N(s_h)}\log\left(\frac{SK}{\delta}\right)}+ H\sqrt{\frac{\gamma(N_h(s,a),\delta)}{N_h(s,a)}}+\sum_{s'}\hat{\PP}_h(s'\mid s,a) F_{h+1}^\pi(s',\pi_{h+1}(s'))\right\}.
 \end{align*}

 Since $\sqrt{\frac{2}{N(s_h)}\log\left(\frac{SK}{\delta}\right)}\le H\sqrt{\frac{\gamma(N_h(s,a),\delta)}{N_h(s,a)}}$, we have $\overline{F}_h^{\pi}(s,a) \le \overline{W}_h(s,a)$ by recursion. The proof is similar to \ref{lemma:error less than W}.

 Also denote $$F_h^{\pi}(s,a) = \EE_\pi\left[\sum_{h'=h}^H \sqrt{\frac{2}{N^K(s_h)}\log\left(\frac{SK}{\delta}\right)}\ \Bigg|\  s_h=s, a_h = a\right].$$ Next we prove $\overline{F}_h^\pi(s,a)\ge F_h^\pi(s,a).$
    To show this fact, we use the induction. When $h = H+1$, $F_{H+1}^\pi(s,a) = \overline{F}_{H+1}^\pi(s,a) = 0$. Now assume that $\overline{F}_{h+1}^\pi(s,a) \ge F_{h+1}^\pi(s,a)$, then 
    \begin{align*}
        &\ \ \ \ \ \overline{F}_h^\pi(s,a) \\&= \min\left\{H, \sqrt{\frac{2}{N(s)}\log\left(\frac{SK}{\delta}\right)}+ H\sqrt{\frac{\gamma(N_h(s,a),\delta)}{N_h(s,a)}}+\sum_{s'}\hat{\PP}_h(s'\mid s,a) \overline{F}_{h+1}^\pi(s',\pi_{h+1}(s'))\right\}\\
        &\ge \min\left\{H, \sqrt{\frac{2}{N(s)}\log\left(\frac{SK}{\delta}\right)}+ H\sqrt{\frac{\gamma(N_h(s,a),\delta)}{N_h(s,a)}}+\sum_{s'}\hat{\PP}_h(s'\mid s,a) F_{h+1}^\pi(s',\pi_{h+1}(s'))\right\}\\
        &\ge \min\left\{H, \sqrt{\frac{2}{N(s_h)}\log\left(\frac{SK}{\delta}\right)}+ H\Vert \hat{\PP}_h(s'\mid s,a) -\PP_h(s'\mid s,a)\Vert_1 \right.\\ &\hspace{20em}\left.+\sum_{s'}\hat{\PP}_h(s'\mid s,a) F_{h+1}^\pi(s',\pi_{h+1}(s'))\right\}\\
        &\ge \min\left\{H, \sqrt{\frac{2}{N(s_h)}\log\left(\frac{SK}{\delta}\right)}+ \sum_{s'}\PP_h(s'\mid s,a) F_{h+1}^\pi(s',\pi_{h+1}(s'))\right\}\\
        & = F_h^\pi(s,a).
    \end{align*}
    Thus by induction, we know $F_1^\pi(s_1,\pi_1(s_1))\le \overline{F}_1^\pi(s_1,\pi_1(s_1)) \le \overline{W}_1^K(s,\pi_1(s_1))$ because 
    $\sqrt{\frac{2}{N(s)}\log\left(\frac{SK}{\delta}\right)}\le H\sqrt{\frac{\gamma(N_h(s,a),\delta)}{N_h(s,a)}}$.

From now, we have proved that under the condition $\cP_\pi$, the expected violation can be upper bounded by $\overline{W}_1(s,\pi_1(s_1))$. Next we state that $\Pr(\cP_\pi^c)$ is small. Recall that $\cP_\pi = \{\forall \ h \in [H-1], s_{h+1}\in \Delta_h^K(s_h,a_h)\mid \pi\}$, we define 
\begin{align*}G_h^\pi(s,a) = \Pr\{\exists \ h' \in [h,H-1], s_{h'+1}\notin \Delta_h^K(s_{h'},a_{h'})\mid s_h=s, a_h = a\}
\end{align*}
and 
\begin{align*}
    &\ \ \ \ \ \overline{G}_h^\pi(s,a) \\&= \min\left\{H,\frac{SH\log(S^2HA/\delta)}{N_h(s,a)}+H\sqrt{\frac{\gamma(N_h(s,a),\delta)}{N_h(s,a)}}+ \sum_{s'}\hat{\PP}_h(s'\mid s,a) \overline{G}_{h+1}^\pi(s',\pi_{h+1}(s'))\right\}.
\end{align*}
Then by recursion, we can easily show that $\overline{G}_h^\pi(s,a)\le \overline{W}_h^K(s,a)$ by  $\log(S^2HA/\delta) \le \gamma(N_h(s,a),\delta)$.
\begin{lemma}\label{lemma:bound G h pi}
    For all $h \in [H]$ and feasible policy $\pi$, we have $\overline{G}_h^\pi(s,a) \ge HG_h^\pi(s,a)$.
\end{lemma}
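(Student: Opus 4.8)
The plan is to prove $\overline{G}_h^\pi(s,a)\ge H\,G_h^\pi(s,a)$ by backward induction on $h$, exactly paralleling the recursions used for $\overline{W}$ in Lemma~\ref{lemma:error less than W} and for $\overline{F}$ above. The one genuinely new ingredient is controlling, in a single step, the probability that the true transition from $(s,a)$ leaves the observed support set $\Delta_h^K(s,a)$.

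First I would record an auxiliary high-probability event. Let $\cE_5$ be the event that for every $(h,s,a)$ and every $s'\in\cS$ with $\PP_h(s'\mid s,a)\ge \log(S^2HA/\delta)/N_h(s,a)$, the transition $(s,a)\to s'$ at step $h$ was observed at least once during exploration, i.e.\ $s'\in\Delta_h^K(s,a)$. A coupon-collector estimate gives $\Pr(\cE_5^c)\le\delta$: conditioning on the final count $N_h(s,a)=n$, a fixed such $s'$ is missed with probability at most $(1-\log(S^2HA/\delta)/n)^n\le \delta/(S^2HA)$, and a union bound over the at most $S^2HA$ tuples $(h,s,a,s')$ (and, if one is careful about the adaptivity of $n\le K$, over $n$ as well, which only inflates the logarithm) closes it. Under $\cE_5$, the set $\cS\setminus\Delta_h^K(s,a)$ is contained in $\{s':\PP_h(s'\mid s,a)<\log(S^2HA/\delta)/N_h(s,a)\}$, which has at most $S$ elements, so $\PP_h\bigl(\cS\setminus\Delta_h^K(s,a)\mid s,a\bigr)\le S\log(S^2HA/\delta)/N_h(s,a)$.

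Then I would run the induction. At $h=H+1$ both sides vanish. For the inductive step, assume $\overline{G}_{h+1}^\pi\ge H\,G_{h+1}^\pi$ everywhere; if $\overline{G}_h^\pi(s,a)=H$ we are done since $G_h^\pi(s,a)\le 1$. Otherwise, conditioning on $s_{h+1}=s'$ and using that the ``escape'' event is certain once $s'\notin\Delta_h^K(s,a)$ and otherwise equals the same event restarted from $(s',\pi_{h+1}(s'))$ at step $h+1$ (Markov property),
\begin{align*}
G_h^\pi(s,a)=\PP_h\bigl(\cS\setminus\Delta_h^K(s,a)\mid s,a\bigr)+\sum_{s'\in\Delta_h^K(s,a)}\PP_h(s'\mid s,a)\,G_{h+1}^\pi\bigl(s',\pi_{h+1}(s')\bigr).
\end{align*}
Multiplying by $H$: the first term is at most $SH\log(S^2HA/\delta)/N_h(s,a)$ by $\cE_5$; for the second, the induction hypothesis yields $H\,G_{h+1}^\pi(s',\pi_{h+1}(s'))\le\overline{G}_{h+1}^\pi(s',\pi_{h+1}(s'))$, after which enlarging the sum to all $s'$ (nonnegative summands) and replacing $\PP_h$ by $\hat\PP_h$ costs $\Vert\PP_h(\cdot\mid s,a)-\hat\PP_h(\cdot\mid s,a)\Vert_1\cdot\max_{s'}\overline{G}_{h+1}^\pi(s',\cdot)\le H\Vert\PP_h-\hat\PP_h\Vert_1$, which by Pinsker's inequality and event $\cE_4$ is at most $H\sqrt{\gamma(N_h(s,a),\delta)/N_h(s,a)}$. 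Hence
\begin{align*}
H\,G_h^\pi(s,a)\le \frac{SH\log(S^2HA/\delta)}{N_h(s,a)}+H\sqrt{\frac{\gamma(N_h(s,a),\delta)}{N_h(s,a)}}+\sum_{s'}\hat\PP_h(s'\mid s,a)\,\overline{G}_{h+1}^\pi\bigl(s',\pi_{h+1}(s')\bigr),
\end{align*}
and since also $H\,G_h^\pi(s,a)\le H$, the left side is dominated by the minimum defining $\overline{G}_h^\pi(s,a)$, completing the induction.

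The main obstacle is the event $\cE_5$: converting ``every next state with non-negligible probability has been sampled'' into a clean $O(S\log(\cdot)/N_h(s,a))$ bound on the mass of the unsampled states needs care because $N_h(s,a)$ and $\Delta_h^K(s,a)$ are built adaptively during exploration. Once that coupon-collector estimate is in hand, the backward recursion is routine and structurally identical to the $\overline{W}$ and $\overline{F}$ arguments already given.
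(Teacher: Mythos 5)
Your proof is correct, and it rests on the same two pillars as the paper's: the coupon-collector event $\cE_5$ (the paper's Lemma on $\Delta_h(s,a)$), which bounds the unobserved transition mass by $S\log(S^2HA/\delta)/N_h(s,a)$, and the Pinsker/$\cE_4$ step that converts $\sum_{s'}\PP_h(\cdot)\,\overline{G}_{h+1}^\pi$ into $\sum_{s'}\hat{\PP}_h(\cdot)\,\overline{G}_{h+1}^\pi$ at a cost of $H\sqrt{\gamma(N_h(s,a),\delta)/N_h(s,a)}$ (you silently drop Pinsker's $\sqrt{2}$, but the paper does exactly the same, so this is consistent with its conventions). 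Where you diverge is the organization of the induction: you write the escape probability exactly via a one-step Markov decomposition, $G_h^\pi(s,a)=\PP_h(\cS\setminus\Delta_h^K(s,a)\mid s,a)+\sum_{s'\in\Delta_h^K(s,a)}\PP_h(s'\mid s,a)G_{h+1}^\pi(s',\pi_{h+1}(s'))$, and induct directly on the claim $\overline{G}_h^\pi\ge H G_h^\pi$. The paper instead takes a union bound over steps, $\Pr\{\exists h'\}\le\sum_{h'}\Pr\{\cdot\}$, bounds this by the truncated expected sum $g_h^\pi(s,a)=\min\{H,\EE_\pi[\sum_{h'\ge h}SH\log(S^2HA/\delta)/N_{h'}(s_{h'},a_{h'})]\}$, and then sandwiches $H G_h^\pi\le g_h^\pi\le\widetilde{G}_h^\pi\le\overline{G}_h^\pi$, where $\widetilde{G}$ is the analogue of $\overline{G}$ defined with the true kernel. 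Your route is shorter and avoids both auxiliary quantities, at the price of carrying the empirical-kernel correction inside a single induction; the paper's route separates the statistical correction ($\overline{G}\ge\widetilde{G}$) from the probabilistic bound ($\widetilde{G}\ge HG$), which lets it reuse verbatim the template already set up for $\overline{F}$ and $F$. Your parenthetical caution about the adaptivity of $N_h(s,a)$ in establishing $\cE_5$ is well placed; the paper's own proof of that lemma treats the $N_h(s,a)$ draws as if fixed, so you are, if anything, slightly more careful on that point.
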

\begin{proof}
    Define $\widetilde{G}_h^\pi(s,a)$ as 
    \begin{align*}
        \widetilde{G}_h^\pi(s,a) = \min\left\{H,\frac{SH\log(S^2HA/\delta)}{N_h(s,a)}+ \sum_{s'}\PP_h(s'\mid s,a) \widetilde{G}_{h+1}^\pi(s',\pi_{h+1}(s'))\right\}.
    \end{align*}
    We will show $\overline{G}_h^\pi(s,a)\ge \widetilde{G}_h^\pi(s,a)\ge HG_h^\pi(s,a).$
We prove these two inequalities by induction. First, for $h = H+1$, these inequalities holds. Now assume that they hold for $h+1$, then
\begin{align*}
    &\ \ \ \ \ \overline{G}_h^\pi(s,a)  \\&= \min\left\{H,\frac{SH\log(S^2HA/\delta)}{N_h(s,a)}+H\sqrt{\frac{\gamma(N_h(s,a),\delta)}{N_h(s,a)}}+ \sum_{s'}\hat{\PP}_h(s'\mid s,a) \overline{G}_{h+1}^\pi(s',\pi_{h+1}(s'))\right\}\\
    & \ge \min\left\{H,\frac{SH\log(S^2HA/\delta)}{N_h(s,a)}+H\sqrt{\frac{\gamma(N_h(s,a),\delta)}{N_h(s,a)}}+ \sum_{s'}\hat{\PP}_h(s'\mid s,a) \widetilde{G}_{h+1}^\pi(s',\pi_{h+1}(s'))\right\}\\
    &\ge \min\left\{H,\frac{SH\log(S^2HA/\delta)}{N_h(s,a)}+H\Vert \hat{\PP}_h(s'\mid s,a) -\PP_h(s'\mid s,a)\Vert_1\right.\\&\hspace{20em}\left.+ \sum_{s'}\hat{\PP}_h(s'\mid s,a) \widetilde{G}_{h+1}^\pi(s',\pi_{h+1}(s'))\right\}\\
    &\ge \min\left\{H,\frac{SH\log(S^2HA/\delta)}{N_h(s,a)}+ \sum_{s'}\PP_h(s'\mid s,a) \widetilde{G}_{h+1}^\pi(s',\pi_{h+1}(s'))\right\}\\
    & = \widetilde{G}_h^\pi(s,a).
\end{align*}
Now to prove that $\widetilde{G}_h^\pi(s,a)\ge HG_h^\pi(s,a)$, we need the following lemma.
\begin{lemma}\label{lemma:Delta(sa)}
    Fixed a step $h \in [H]$ and state-action pair $(s,a) \in \cS\times \cA$, then with high probability at least $1-\delta$, for any $s' \in \Delta_h(s,a)$ with $P(s'\mid s,a)\ge \frac{\log(S/\delta)}{N_h(s,a)}$, $s' \in \Delta^K_h(s,a)$.
\end{lemma}
\begin{proof}
    Since we sample $s'$ from $P_h(s'\mid s,a)$ for $N_h(s,a)$ times, if $s'\notin \Delta^K_h(s,a)$, then $s'$ are not sampled. The probability that $s'$ are not sampled will be $(1-p)^{N_h(s,a)}$, where $p =\PP_h(s'\mid s,a)$. When $p\ge \frac{\log(S/\delta)}{N_h(s,a)}$, the probability will be at most 
    \begin{equation*}
        (1-p)^{N_h(s,a)}\le \left(1-\frac{\log(S/\delta)}{N_h(s,a)}\right)^{N_h(s,a)}\le e^{-\log(S/\delta)} = \frac{\delta}{S}.
    \end{equation*}
    Taking the union bound for all possible next state $s'$, the lemma has been proved.
\end{proof}
By the Lemma~\ref{lemma:Delta(sa)}, define the event as 
\begin{align*}
    \cE_5 = \left\{\forall \ h,s,a \mbox{and}\ s' \in \Delta_h(s,a)\ \mbox{with}\  P(s'\mid s,a)\ge \frac{\log(S^2HA/\delta)}{N_h(s,a)}: s' \in \Delta_h^K(s,a)\right\}.
\end{align*}
Then $\Pr(\cE_5^c)\le \delta$. Under the event $\cE_5$, at step $h \in [H]$ and state-action pair $(s_h,a_h)$, we have 
\begin{align}\label{ineq:bound probability}
    \PP(s_{h+1}\notin \Delta_h^K(s_h,a_h))\le \frac{S\log(S^2HA/\delta)}{N_h(s_h,a_h)}.
\end{align}
This inequality is because $s_{h+1}\notin \Delta_h^K(s_h,a_h)$ only happens when $P(s'\mid s,a) \le \frac{\log(S^2HA/\delta)}{N_h(s_h,a_h)}$ under the event $\cE_5$.
Now we first decompose the $G_h^\pi(s,a)$ as
\begin{align*}
    H\cdot G_h^\pi(s,a) &= \Pr\{\exists \ h' \in [h,H-1], s_{h'+1}\notin \Delta_h^K(s_{h'},a_{h'})\mid s_h=s, a_h = a, \pi\}\\
    &\le H\sum_{h'=h}^{H-1} \Pr\{s_{h'+1}\notin \hat{Delta}_{h'}^K(s_{h'},a_{h'})\mid s_h = s, a_h = a, \pi\}\\
    &\le \EE_\pi \left[\sum_{h'=h}^{H-1}\frac{SH\log(S^2HA/\delta)}{N_{h'}(s_{h'},a_{h'})}\right].
\end{align*}
The first inequality is to decompose the probability into the summation over step $h$, and the second inequality is derived by Eq.~\eqref{ineq:bound probability}.
Now denote $g_h^\pi(s,a) = \min\left\{H,\EE_\pi \left[\sum_{h'=h}^{H-1}\frac{SH\log(S^2HA/\delta)}{N_{h'}(s_{h'},a_{h'})}\right]\right\}$, then we can easily have $g_h^\pi(s,a)\le \widetilde{G}_h^\pi(s,a)$ by recursion. In fact, if 
\begin{align*}
    &\ \ \ \ \ g_h^\pi(s,a) \\&= \min\left\{H,\EE_{\pi} \left[\sum_{h'=h}^{H-1}\frac{SH\log(S^2HA/\delta)}{N_{h'}(s_{h'},a_{h'})}\ \Bigg|\  (s_h,a_h) = (s,a)\right]\right\}\\
    &= \min\left\{H, \frac{SH\log(S^2HA/\delta)}{N_{h}(s,a)}+\EE_\pi\left[\sum_{h'=h+1}^{H-1}\frac{SH\log(S^2HA/\delta)}{N_{h'}(s_{h'},a_{h'})}\ \Bigg|\  (s_h,a_h) = (s,a)\right]\right\}\\
    & = \min\left\{H, \frac{SH\log(S^2HA/\delta)}{N_{h}(s,a)}\right.\\&\hspace{5em}\left.+\min\left\{H,\EE_\pi\left[\sum_{h'=h+1}^{H-1}\frac{SH\log(S^2HA/\delta)}{N_{h'}(s_{h'},a_{h'})}\ \Bigg|\  (s_h,a_h) = (s,a)\right]\right\}\right\}\\
    &=\min\left\{H, \frac{SH\log(S^2HA/\delta)}{N_{h}(s,a)}+\sum_{s' \in \cS}\PP_h(s'\mid s,a)g_{h+1}^\pi(s',\pi_{h+1}(s'))\right\}
\end{align*}
has the same recursive equation as $\widetilde{G}_h^\pi(s,a)$. Since $g_H^\pi(s,a) = 0\le \widetilde{G}_H^\pi(s,a)$, by induction we can get $g_h^\pi(s,a)\le \widetilde{G}_h^\pi(s,a)$, and then 
\begin{align*}
    H\cdot G_h^\pi(s,a) \le g_h^\pi(s,a)\le \widetilde{G}_h^\pi(s,a)
\end{align*}
holds directly. The Lemma~\ref{lemma:bound G h pi} has been proved.

Now we return to our main theorem. Consider the policy $\pi \in \Pi^K$ is one of feasible policy at episode $K$.
For each possible $\rT_H = \{s_1, a_1, \cdots, s_H, a_H\}$, denote $\Pr_{\pi}(\rT_H)$ be the probability for generating $\rT_H$ following policy $\pi$ and true transition kernel. 
If the event $s_{h+1}\notin \Delta_h^K(s_h,a_h)$ happens, we define the trajectory $\rT_H$ are ``unsafe". Denote the set of all ``unsafe" trajectories are $\cU$, we have 
\begin{align}
    &\EE_\pi\left[\sum_{h=1}^H (c(s_h)-\tau)_{+}\right] \nonumber\\ &\quad=\sum_{\rT_H}\Pr_{\pi}(\rT_H)\left[\sum_{h=1}^H (c(s_h)-\tau)_{+}\right]\nonumber\\
    &\quad\le\sum_{\rT_H \in \cU}\Pr_{\pi}(\rT_H)\left[\sum_{h=1}^H (c(s_h)-\tau)_{+}\right] + \sum_{\rT_H \notin \cU}\Pr_{\pi}(\rT_H)\left[\sum_{h=1}^H (c(s_h)-\tau)_{+}\right]
    \nonumber\\
    &\quad\le H\sum_{\rT_H \in \cU}\Pr_{\pi}(\rT_H) + \sum_{\rT_H \notin \cU}\Pr_{\pi}(\rT_H)\left[\sum_{h=1}^H \sqrt{\frac{2}{N^K(s_h)}\log(SK/\delta)}\right]
   \label{eq:P event}\\
    &\quad\le H\Pr_\pi\{\cP_\pi\} + \sum_{\rT_H}\Pr_{\pi}(\rT_H)\left[\sum_{h=1}^H \sqrt{\frac{2}{N^K(s_h)}\log(SK/\delta)}\right]\nonumber\\
    &\quad\le H\cdot G_1^\pi(s,a) + F_1^\pi(s,a)\label{eq:use F and G}\\
    &\quad\le 2\overline{W}_1^K(s_1,\pi_1(s_1))\nonumber\\
    &\quad\le 2\overline{W}_1^K(s_1,\pi^{K+1}_1(s_1))\nonumber\\
    &\quad\le \varepsilon.\nonumber
\end{align}
The Eq.~\eqref{eq:P event} is because $\rT_H \in \cU$ implies that the event $\cP_\pi$ happens, and we can apply Eq.~\eqref{eq:underevent P}. The Eq.~\eqref{eq:use F and G} is derived by the definition of $F_1^\pi(s,a)$ and $G_1^\pi(s,a)$. 
Until now, we have proved that if the algorithm terminates, all the requirements are satisfied. Now we turn to bound the sample complexity. We first assume that $K\ge SA$

Define the average $q_h^k = \sum_{(s,a)}\PP_h^{\pi^{k+1}}(s,a)\overline{W}_h^k(s,a)$.
Since the $\pi^{k+1}$ are greedy policy with respect to $\overline{W}_h^k(s,a)$, we have 
\begin{align*}
    &\overline{W}_h^k(s,a) \\&\quad= \min\left\{H, M(N_h^k(s,a), \delta) +  \sum_{s'}\hat{\PP}_h^k(s'\mid s,a) \overline{W}_{h+1}^k(s',\pi^{t+1}_{h+1}(s'))\right\}\\
    &\quad\le \min\left\{H, M(N_h^k(s,a), \delta) +  \sum_{s'}\PP_h^k(s'\mid s,a) \overline{W}_{h+1}^k(s',\pi^{t+1}_{h+1}(s'))\right.\\&\quad\hspace{20em}\left.+ H\Vert\hat{\PP}_h^k(s'\mid s,a)-\PP_h^k(s'\mid s,a)\Vert_1\right\}\\
    &\quad\le M'(N_h^k(s,a), \delta) +  \sum_{s'}\PP_h^k(s'\mid s,a) \overline{W}_{h+1}^k(s',\pi^{t+1}_{h+1}(s'))
\end{align*}
where $M'(n,\delta) =  3H\left(\sqrt{\frac{2\beta(n,\delta)}{n}}\wedge 1\right)+SH\left(\frac{\beta(n,\delta)}{n}\wedge 1\right)$.
Now 
\begin{align*}
    q_h^k &= \sum_{(s,a)}\PP_h^{\pi^{k+1}}(s,a)\overline{W}_h^k(s,a)\\
    & \le \sum_{(s,a)}\PP_h^{\pi^{k+1}}(s,a)M'(N_h^k(s,a),\delta) + \sum_{(s',s,a)}\PP_h^{\pi^{k+1}}(s,a)\PP_h^k(s'\mid s,a) \overline{W}_{h+1}^k(s',\pi_{h+1}^{k+1}(s'))\\
    & = \sum_{(s,a)}\PP_h^{\pi^{k+1}}(s,a)M'(N_h^k(s,a),\delta) + q_{h+1}^k.
\end{align*}
Thus 
\begin{align*}
    \varepsilon/2 <  q_1^k\le  \sum_{h=1}^H \sum_{(s,a)}\PP_h^{\pi^{k+1}}(s,a)M'(N_h^k(s,a),\delta), \forall \ k \in [K-1],
\end{align*}
where the first equality is because $q_1^k = \overline{W}_h^k(s_1,\pi^{k+1}(s_1))>  \varepsilon/2$. Sum over $k \in [K-1]$, we can get
\begin{align*}
    K\varepsilon/2\le \sum_{k=0}^{K-1}\sum_{h=1}^H \sum_{(s,a)}\PP_h^{\pi^{k+1}}(s,a)M'(N_h^k(s,a),\delta).
\end{align*}
Define the pseudo-count as $\overline{N}_h^k(s,a) = \sum_{i=1}^k\PP_h^{\pi^{i}}(s,a)$
and the event about the pseudo-count like \cite{ACT2020adaptiveRFE}:
\begin{align*}
    \cE_{cnt} = \left\{\forall \ k, h,s,a: N_h^k(s,a) \ge \frac{1}{2}\overline{N}_h^k(s,a)-\log(SAH/\delta) \right\}.
\end{align*}
Then $\Pr\{\cE_{cnt}\}\le \delta$. The proof is provided in \cite{ACT2020adaptiveRFE}. Also, the Lemma 7 in \cite{ACT2020adaptiveRFE} show that we can change $N_h^k(s,a)$ to $\overline{N}_h^k(s,a)$ up to a constant.
\begin{lemma}[\cite{ACT2020adaptiveRFE}]\label{lemma:N to barN}
    Under the event $\cE_{cnt}$, for any $s \in \cS, a \in \cA$ and $k$, if $\beta(x,\delta)/x$ is non-increasing for $x\ge 1$ and $\beta(x,\delta)$ is increasing, we have 
    \begin{align*}
        \frac{\gamma(N_h^k(s,a),\delta)}{N_h^k(s,a)}\wedge 1 \le  \frac{4\gamma(\overline{N}_h^k(s,a),\delta)}{\overline{N}_h^k(s,a)\vee 1}.
    \end{align*}
\end{lemma}
The proof of Lemma~\ref{lemma:N to barN} can be found in \cite{ACT2020adaptiveRFE}. 
Then we can have 
\begin{align}
    K\varepsilon/2&\le \sum_{k=0}^{K-1}\sum_{h=1}^H \sum_{(s,a)}\PP_h^{\pi^{k+1}}(s,a)M'(N_h^k(s,a),\delta)\\
    &\le \underbrace{3H\sum_{h=1}^H \sum_{k=0}^{K-1}\sum_{(s,a)} (\overline{N}_h^{k+1}(s,a) - \overline{N}_h^{k}(s,a))\sqrt{\frac{2\gamma(N_h^k(s,a),\delta)}{N_h^k(s,a)}\wedge 1}}_\text{(a)} \nonumber\\&\ \ \ \ \ \ \ \ \ \ + \underbrace{SH\sum_{h=1}^H \sum_{k=0}^{K-1}\sum_{(s,a)} (\overline{N}_h^{k+1}(s,a) - \overline{N}_h^{k}(s,a))\left(\frac{\gamma(N_h^k(s,a),\delta)}{N_h^k(s,a)}\wedge 1\right)}_\text{(b)}.
\end{align}
For part $(a)$, we upper bound it by
\begin{align}
    \mbox{(a)} &\le
    3H\sum_{h=1}^H \sum_{k=0}^{K-1}\sum_{(s,a)} (\overline{N}_h^{k+1}(s,a) - \overline{N}_h^{k}(s,a))\sqrt{\frac{8\gamma(\overline{N}_h^k(s,a),\delta)}{\overline{N}_h^k(s,a)\vee 1}}
    \nonumber\\ 
    &\le 3\sqrt{2}H\sqrt{\gamma(K,\delta)}\sum_{h=1}^H \sum_{k=0}^{K-1}\sum_{(s,a)}\frac{(\overline{N}_h^{k+1}(s,a) - \overline{N}_h^{k}(s,a))}{\sqrt{\overline{N}_h^k(s,a)\vee 1}}\label{eq:lemma gap }\\
    &\le 3(2+\sqrt{2})H\sqrt{\gamma(K,\delta)}\sum_{h=1}^H \sum_{(s,a)}\sqrt{\overline{N}_h^{K}(s,a)\vee 1}\nonumber\\
    &\le 3(2+\sqrt{2})H\sqrt{\gamma(K,\delta)}\sum_{h=1}^H \sum_{(s,a)}(1+\sqrt{\overline{N}_h^K(s,a)})\nonumber\\
    &\le 3(2+\sqrt{2})H^2\sqrt{\gamma(K,\delta)}(SA+\sqrt{SAK})\label{eq:cauchy technique}\\
    &\le 6(2+\sqrt{2})H^2\sqrt{\gamma(K,\delta)}\sqrt{SAK},\nonumber
\end{align}
where Eq.~\eqref{eq:lemma gap } is because Lemma 19 in \cite{Jaksch2010RL}, Eq.~\eqref{eq:cauchy technique} is derived by $\sum_{(s,a)}\overline{N}_h^K(s,a) = K$ and Cauchy's inequality,
and the last inequality is because we assume $K\ge SA$.

Also for part $(b)$, 
\begin{align}
    \mbox{(b)}&\le SH\sum_{h=1}^H \sum_{k=0}^{K-1}\sum_{(s,a)} (\overline{N}_h^{k+1}(s,a) - \overline{N}_h^{k}(s,a))\frac{4\gamma(\overline{N}_h^k(s,a),\delta)}{\overline{N}_h^k(s,a)\vee 1}\nonumber\\
    &\le SH\gamma(K,\delta)\sum_{h=1}^H \sum_{(s,a)}\sum_{k=0}^{K-1}\frac{(\overline{N}_h^{k+1}(s,a) - \overline{N}_h^{k}(s,a))}{\overline{N}_h^k(s,a) \vee 1}\nonumber\\
    &\le SH\gamma(K,\delta)\sum_{h=1}^H \sum_{(s,a)}4\log(\overline{N}_h^K(s,a)+1)\label{eq:lemma gap fast}\\
    &\le 4S^2AH^2\gamma(K,\delta)\log (K+1)\nonumber,
\end{align}
where the Eq.~\eqref{eq:lemma gap fast} is because the Lemma 9 in \cite{Pierre2020Fastactive}.
Now from the upper bound of $(a)$ and $(b)$, we can get
\begin{align}
    K\varepsilon/2 \le 6(2+\sqrt{2})H^2\sqrt{\gamma(K,\delta)}\sqrt{SAK}+4S^2AH^2\gamma(K,\delta)\log (K+1).\nonumber
\end{align}

Recall that $\gamma(n,\delta) = 2\log(SAHK/\delta)+(S-1)\log(e(1+n/(S-1)))\ge 1$, by some technical transformation, performing Lemma~\ref{lemma:transform technique} we can get
\begin{align*}
    K = \widetilde{\cO}\left(\left(\frac{H^4SA}{\varepsilon^2}+\frac{S^2AH^2}{\varepsilon}\right)\left(\log\left(\frac{1}{\delta}\right)+S\right)\right),
\end{align*}
under the event $\cE_{cnt}, \cE_1, \cE_4,\cE_5$. Hence the theorem holds with probability at least $1-4\delta$, and we can prove the theorem by replacing $\delta$ to $\delta/4$.

\end{proof}
\subsection{Proof of Theorem~\ref{thm:lower bound of violation}}
\begin{proof}


    We construct n MDP models with the same transition model but different safety cost. For each model, the states consist of a tree with degree $A$ and layer $H/3$. For each state inside the tree (not a leaf), the action $a_i$ will lead to the $i-$th branch of the next state. For leaf nodes, they are absorbing states, and we call them $s_1, s_2,\cdots, s_n$. Then $n = A^{H/3} = \frac{S(A-1)+1}{A}\ge \frac{S}{2}$.
\begin{figure}[H]
        \centering
        \includegraphics[scale = 0.8]{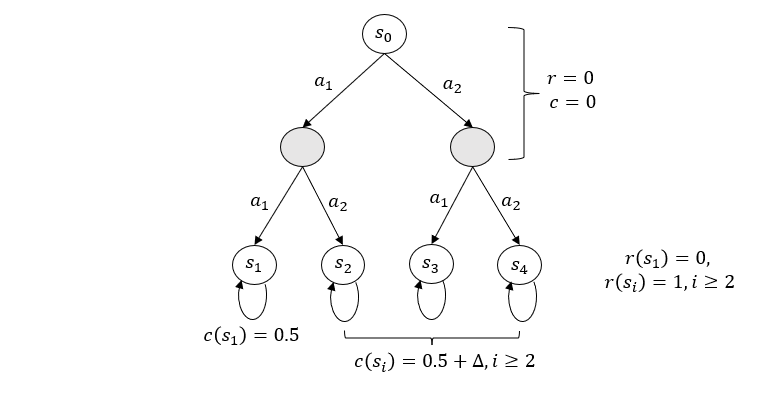}
        \caption{MDP instance $\tau_1$}
        \label{fig:lowerbound1 instance tau1}
    \end{figure}
    \begin{figure}[H]
        \centering
        \includegraphics[scale = 0.8]{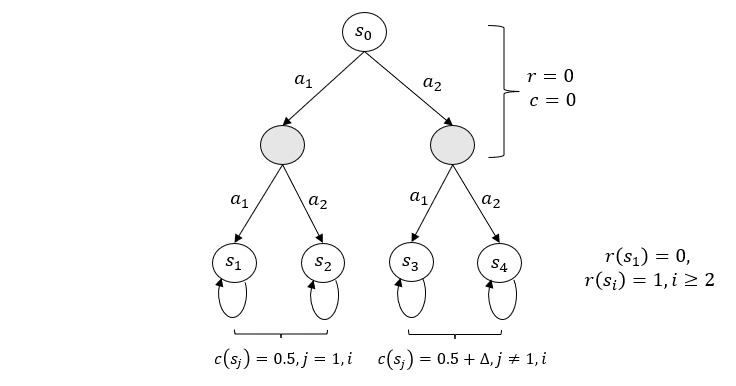}
        \caption{MDP instance $\tau_i (i=2)$}
        \label{fig:lowerbound1 instance taui}
    \end{figure}
    Now we first define the instance $\tau_1$ as follows: For each state inside the tree, the safety cost and reward are all 0. For leaf node, the cost function for state $s_i$ are $c(s_i) = 1/2+\Delta\mathbb{I}\{i\neq 1\}$. The reward function are $r(s_i) = \II\{i\neq 1\}$, which implies that only state $s_1$ has the reward 0 and other states $s_2,\cdots,s_n$ have reward 1.
    Now we define the instance $\tau_j(2\le j\le n)$. For state inside the tree, the safety cost and reward are equal to the instance $\tau_1$. For leaf node, the cost function for state $s_i$ are $c(s_i) = 1/2+\Delta\mathbb{I}\{i\neq 1, j\}$, $r(s_i)$ unchanged. Then for $\tau_j$, the state $s_1$ and $s_j$ (and initial state $s_0$) are safe. Then the safe optimal policy is to choose $a_j$ at the first step. Thus the safe optimal reward is $H$.
    The MDP instances are shown in Figure~\ref{fig:lowerbound1 instance tau1} and Figure~\ref{fig:lowerbound1 instance taui}.

    Denote $H'= \frac{2}{3}H$ are the rest of length when we arrive at the leaf states $s_1,\cdots,s_n$ .
    Let $R_K(\pi,\tau)$ represents the regret till step episode $K$ for instance $\tau$ and policy $\pi$, and $C_K(\pi,\tau)$ represents the violation. 
    For $\tau_1$, we can have $C_K(\pi, \tau_1)\ge \mathbb{P}_{\tau_1}(t_1(K)\le K/2)\cdot \frac{KH'\Delta}{2},$ where $t_i(K)$ means the times to choose the path to $s_i$. Also, $R_K(\pi, \tau_i) \ge \mathbb{P}_{\tau_i}(t_1(K)>K/2)\frac{KH'}{2}$.

    Applying Bertagnolle-Huber inequality, we have 
    \begin{align*}
        \mathbb{P}_{\tau_1}(t_1(K)\le K/2) + \mathbb{P}_{\tau_i}(t_1(K)>K/2) \ge \frac{1}{2}\exp\{-D( \mathbb{P}_{\tau_1}, \mathbb{P}_{\tau_i})\}.
    \end{align*}
    We choose $i$ such that $i = \arg\min_i \mathbb{E}_{\tau_1}[t_i(K)]$, then by $\sum_{i\neq 1}\mathbb{E}_{\tau_1}[t_i(K)]\le K$, $$\mathbb{E}_{\tau_1}[t_i(K)]\le \frac{K}{n-1}.$$
    By the definition of $ \mathbb{P}_{\tau_1}$and $ \mathbb{P}_{\tau_i}$, 
    \begin{align*}
        D( \mathbb{P}_{\tau_1}, \mathbb{P}_{\tau_i}) &=H' \mathbb{E}_{\tau_1}[t_i(K)]D(\mathcal{N}(1/2+\Delta,1), \mathcal{N}(1/2, 1)) \\&\le H'\mathbb{E}_{\tau_1}[t_i(K)]\cdot 4\Delta^2\\
        &\le \frac{4KH'\Delta^2}{n-1}.
    \end{align*}
    Thus choose $\Delta = \sqrt{n-1/4KH'}\le \frac{1}{2}$, (need $n\le T$) 
    \begin{align*}
        \mathbb{P}_{\tau_1}(t_1(K)\le K/2) + \mathbb{P}_{\tau_i}(t_1(K)>K/2) \ge \frac{1}{2}\exp\{-D( \mathbb{P}_{\tau_1}, \mathbb{P}_{\tau_i})\}\ge \frac{1}{2e}.
    \end{align*}
    If $\mathbb{E}_{\pi}(R_K(\pi, \tau))\le \frac{T}{24} = \frac{KH}{24} \le \frac{KH'}{16}$ for all $\tau$, then 
    \begin{align*}
        \mathbb{P}_{\tau_i}(t_1(K)>K/2)\le \frac{2R_K(\pi, \tau_i)}{KH'}\le \frac{1}{8}.
    \end{align*}
    Thus $$\mathbb{P}_{\tau_1}(t_1(K)\le K/2)\ge \frac{1}{2e}-\frac{1}{8}:= C$$ and
    \begin{align*}
        C_K(\pi, \tau_1)\ge \frac{CKH'\Delta}{2} = \frac{C\sqrt{(n-1)KH'}}{4} = \Omega(\sqrt{SHK}) = \Omega(\sqrt{SHK}),
    \end{align*}
    where $n \ge \frac{S}{2}$.
\end{proof}

\subsection{Proof of Theorem~\ref{thm:lower bound regret}}
\begin{proof}
    Construct the MDP $\tau_1, \tau_2,\cdots,\tau_n$ as follows: For each instance, the states consist of a tree with degree $A-1$ and layer $H/3$. For each state inside the tree (not a leaf), the action $a_i, 1\le i\le A-1$ will lead to the $i$-th branch of the next state. For the leaf nodes, denote them as $s_1,s_2,\cdots,s_n$, where $n = (A-1)^{H/3} = \frac{(S-3)(A-2)+1}{A-1} = \Omega(S)$ and these states will arrive at the final absorbing state $s_A$ and $s_B$ with reward $r(s_A)=0$ and $r(s_B)=1$ respectively.
    Also, the action $a_A$ will always lead the agent to the unique unsafe state $s_U$ with $r(s_U) = 0.5+\Delta'$ and $c(s_U)=1$ for some parameter $\Delta'$ in the states except two absorbing states.
    For all states except the unsafe state, the safety cost is 0.
    
    Now for the instance $\tau_1$, the transition probability between the leaf states and absorbing states are 
    \begin{align*}
        \left\{\begin{array}{c}
            p(s_A\mid s_1,a) = 0.5-\Delta  \\
            p(s_B\mid s_1,a) = 0.5+\Delta\\
            p(s_A\mid s_i,a) = p(s_B\mid s_1,a) = 0.5 \ \ \ \ i\ge 2
        \end{array}\right.
    \end{align*}
     For instance $\tau_j$, the transition probability are 
    \begin{align*}
        \left\{\begin{array}{c}
            p(s_A\mid s_1,a) = 0.5-\Delta  \\
            p(s_B\mid s_1,a) = 0.5+\Delta\\
            p(s_A\mid s_j,a) = 0.5-2\Delta  \\
            p(s_B\mid s_j,a) = 0.5+2\Delta\\
            p(s_A\mid s_i,a) = p(s_B\mid s_1,a) = 0.5 \ \ \ \ i\neq 1,j
        \end{array}\right.
    \end{align*}
    \begin{figure}[t]
        \centering
        \includegraphics[scale=0.8]{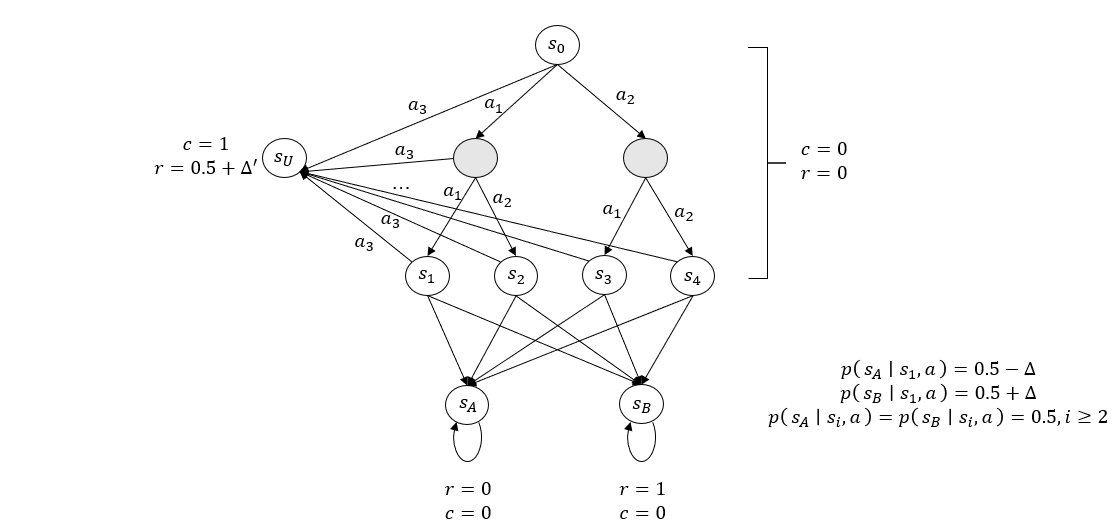}
        \caption{MDP instance $\tau_1$}
        \label{fig:lowerbound2 tau1}
    \end{figure}
    \begin{figure}[t]
        \centering
        \includegraphics[scale=0.8]{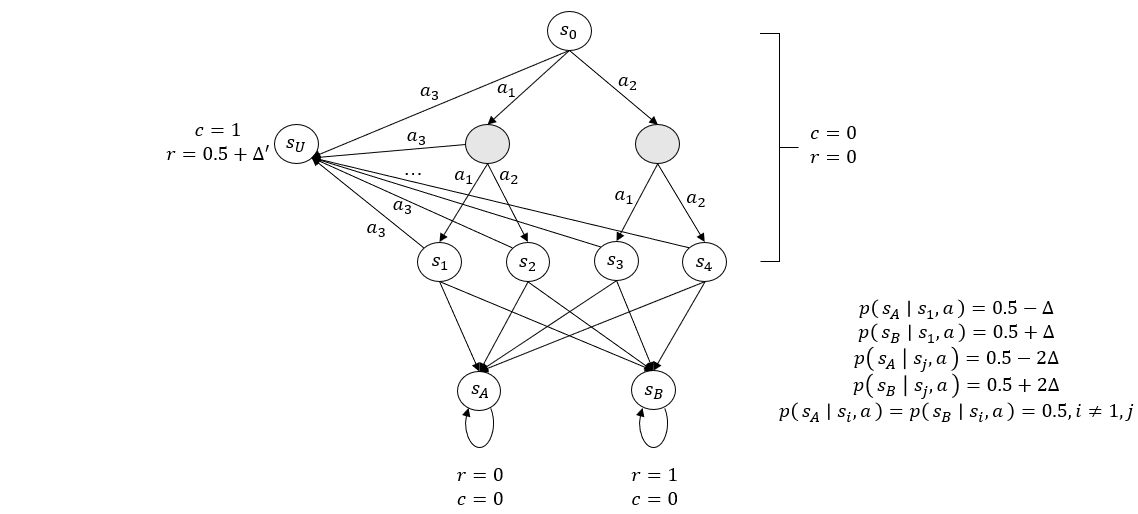}
        \caption{MDP instance $\tau_i$}
        \label{fig:lowerbound2 taui}
    \end{figure}

Then if we do not consider the constraint, choosing $a_A$ and getting into the unsafe state $s_U$ is always the optimal action, 
then define $\mbox{gap}_{h,s,a} = V_h^*(s) - Q_h^*(s,a)$, then the minimal gap \cite{nips2019gapdependent} is $$\mbox{gap}_{\min} = \min_{h,s,a}\{\mbox{gap}_{h,s,a}>0:\mbox{gap}_{h,s,a}\}    
                   \ge H'((0.5+\Delta')-(0.5+2\Delta))\ge H'(\Delta'-2\Delta).$$ Thus the regret will be at most $O(\frac{S^2A\cdot poly(H)\log T}{\Delta'-\Delta})$ by classical gap-dependent upper bound \cite{nips2019gapdependent}. As we will show later, we will choose $\Delta = \Theta(T^{-1/2})$ and $\Delta' = \Theta(T^{\frac{\alpha-1}{2}})$, hence the regret will be 
                   \begin{align*}
                    O\left(\frac{S^2A\cdot poly(H)\log T}{\Delta'-\Delta}\right) = \widetilde{\cO}\left(T^{\frac{1-\alpha}{2}}\right).
                   \end{align*}

Now if we consider the safe threshold $\tau = 0$, the violation will be the number of time to arrive the unsafe state. Now consider any algorithm, if it achieves $O(T^{1-\alpha})$ violation, they will arrive at unsafe state at most $O(T^{1-\alpha})$ times. Since denote $T_i(K), 1\le i\le n+1$ as the number of times when agent arrive at state $s_i$ till episode $K$. Then $\mathbb{E}_{\pi}[T_{n+1}(K)] \le CT^{1-\alpha}$. Note that each episode we will arrive at least one of $s_i, 1\le i\le n+1$ once, we know $\sum_{i=1}^{n+1}T_i(K) \ge K$, and 
$\sum_{i=1}^n T_i(K) \ge K-C\cdot T^{1-\alpha}\ge K/2$ when $K\ge (2C)^{1/\alpha}H^{\frac{1-\alpha}{\alpha}}$.

Now similar to proof of Theorem~\ref{thm:lower bound of violation}, let $C_K(\pi,\tau)$ represents the violation with algorithm $\pi = \{\pi_1,\cdots,\pi_K\}$. Denote $H'=\frac{2}{3}H$. For $\tau_1$, we have 
\begin{align*}R_K(\pi, \tau_1)&\ge \mathbb{P}_{\tau_1}(T_1(K)\le K/4) \frac{KH'\Delta}{4}-(\Delta'-\Delta)\mathbb{E}_{\pi}[T_{n+1}(K)]\\
&\ge \mathbb{P}_{\tau_1}(T_1(K)\le K/4) \frac{KH'\Delta}{4} - (\Delta'-\Delta)CT^{1-\alpha}\end{align*}

\begin{align*}R_K(\pi, \tau_i)&\ge \mathbb{P}_{\tau_i}(T_1(K)> K/4) \frac{KH'\Delta}{4}-(\Delta'-2\Delta)\mathbb{E}_{\pi}[T_{n+1}(K)]\\
&\ge \mathbb{P}_{\tau_i}(T_1(K)> K/4) \frac{KH'\Delta}{4} - (\Delta'-2\Delta)CT^{1-\alpha}\end{align*}

Also, we have 
\begin{align*}
    \mathbb{P}_{\tau_1}(T_1(K)\le K/4)+\mathbb{P}_{\tau_i}(T_1(K)> K/4)\ge \frac{1}{2}\exp\{-D(\mathbb{P}_{\tau_1}, \mathbb{P}_{\tau_i})\}
\end{align*}
Now WLOG we have $\mathbb{E}_{\tau_1}[T_i(K)]\le \frac{K}{2(n-1)}$
\begin{align*}
    D(\mathbb{P}_{\tau_1},\mathbb{P}_{\tau_i}) \le \mathbb{E}_{\tau_1}[T_i(K)]2\Delta^2\le \frac{K}{2(n-1)}\cdot 2\Delta^2 = \frac{K\Delta^2}{n-1}
\end{align*}
Choose $\Delta = \sqrt{\frac{n-1}{K}}$, then 
\begin{align*}
    \mathbb{P}_{\tau_1}(T_1(K)\le K/4)+\mathbb{P}_{\tau_i}(T_1(K)> K/4)\ge \frac{1}{2e}.
\end{align*}
Then at least one of $\tau_1$ and $\tau_i$ (assume $\tau_1$) have 
\begin{align*}
    R_K(\pi, \tau_1)\ge \frac{H'\sqrt{K(n-1)}}{8e} - C(\Delta'-2\Delta)T^{1-\alpha}
\end{align*}

Choose $\Delta' = T^{\frac{\alpha-1}{2}}$.  Then assume $T\ge 2^{\frac{2}{1-\alpha}}$ so that $c\le \frac{1}{2}.$
\begin{align*}
    R_K(\pi, \tau_1)\ge \frac{H'\sqrt{K(n-1)}}{8e} - CT^{(1-\alpha)/2} = \Omega(H\sqrt{SK}) = \Omega(\sqrt{HST}).
\end{align*}
\end{proof}
\section{Safe Markov Games}\label{appendix:safe markov games}
\subsection{Definitions}
In this section, we provide an extension for our general framework of Safe-RL-SW. We solve the safe zero-sum two-player Markov games~\cite{yu2021provably}. 

In zero-sum two-player Markov games, there is an agent and  another adversary who wants to make the agent into the unsafe region and also minimize the reward.  The goal of agent is to maximize the reward and avoid the unsafe region with the existence of adversary. To be more specific, define the action 
 space for the agent and adversary are $\cA$ and $\cB$ respectively. In each episode the agent starts at an initial state $s_1$. At each step $h \in [H]$, the agent and adversary observe the state $s_h$ and taking action $a_h \in \cA$ and $b_h \in \cB$ from the strategy $\mu_h(\cdot \mid s)$ and $\upsilon_h(\cdot \mid s)$ simultaneously. Then the agent receive a reward $r_h(s,a,b)$ and the environment transitions to the next state $s_{h+1}\sim P(s_{h+1}\mid s,a,b)$ by the transition kernel $P$. 

 One motivation of this setting is that, when a robot is in a dangerous environment, the robot should learn a completely safe policy even with the disturbance from the environment.  
Like the single-player MDP, we define the value function and Q-value function:
\begin{align*}
    V_h^{\mu, \upsilon}(s) &= \EE_{\pi}\left[\sum_{h'=h}^H r_{h'}(s_{h'}, a_{h'}, b_{h'})\Bigg| s_h = s, \mu, \upsilon\right].\\
    Q_h^{\mu, \upsilon}(s,a,b) &= \EE_{\pi}\left[\sum_{h'=h}^H r_{h'}(s_{h'}, a_{h'}, b_{h'})\Bigg| s_h = s, a_h = a, b_h= b, \mu, \upsilon\right].
\end{align*}
The cost function is a bounded function defined on the states $c(s) \in [0,1], s \in \cS$, and the unsafe states are defined $\cU_H =\{s \mid c(s)>\tau\}$ for some fixed threshold $\tau$.
Similar to the Section~\ref{sec:model and analysis of safe RL}, we define the possible next states for state $s$, step $h$, action pair $(a,b)$ as 
\begin{align*}
    \Delta_h(s,a,b) = \{s'\mid\PP_h(s'\mid s,a,b)>0\}.
\end{align*}

Hence we can define the unsafe states for step $h$ recursively as:
\begin{align*}
    \cU_h = \cU_{h+1}\cup \{s\mid \ \forall \in \cA, \exists\ b \in \cB, \Delta_{h}(s,a,b)\cap \cU_{h+1} \neq \emptyset\}.
\end{align*}
Then to keep safety, the agent cannot arrive at the state $s \in \cU_h$ at the step $1,2,\cdots,h$.
The safe action for the agent $A_h^{safe}(s)$ should avoid the unsafe region no matter what action the adversary takes.
\begin{align*}
    A_h^{safe}(s) = \{a \in \cA\mid \ \forall \ b \in \cB,  \Delta_h(s,a,b)\cap \cU_{h+1} = \emptyset\}.
\end{align*}
Hence a feasible policy should keep the agent in safe state $s \notin \cU_h$ at step $h$, and always chooses $a \in A_h^{safe}(s)$ to avoid the unsafe region regardless of the adversary. 
Analogous to the safe RL in Section~\ref{sec:model and analysis of safe RL}, assume the the optimal Bellman equation can be written as 
\begin{align*}
    Q^*_h(s,a,b) &= r(s,a) + \PP_h(s'\mid s,a,b)V^*_{h+1}(s').
    \\
    V^*_h(s) &= \max_{a \in A_h^{safe}(s)}\min_{b \in \cB} Q^*_h(s,a,b).
\end{align*}
If we denote $r(s,a,b) = -\infty$ for unsafe states $s \in \cU_H$, 
fixed an strategy $\upsilon$ for the adversary, the {\em best response} of the agent can be defined as $\arg\max_{\mu} V_1^{\mu, \upsilon}(s_1)$. Similarly, a {\em best response} of the adversary given agent's strategy $\mu$ can be defined as $\arg\min_{\upsilon}V_1^{\mu, \upsilon}(s_1)$. By the minimax equality, for any step $h \in [H]$ and $s \in \cS$, 
\begin{align*}
    \max_\mu \min_\upsilon V_h^{\mu, \upsilon}(s) =  \min_\upsilon \max_\mu V_h^{\mu, \upsilon}(s).
\end{align*}
A policy pair $(\mu^*, \upsilon^*)$ satisfy that $V_h^{\mu^*, \upsilon^*}(s) = \max_\mu \min_\upsilon V_h^{\mu, \upsilon}(s)$ for all step $h \in [H]$ is called a {\em Nash Equilibrium}, and the value $V_h^{\mu^*, \upsilon^*}(s)$ is called the {\em minimax value}. It is known that minimax value is unique for Markov games, and the agent's goal is make the reward closer to the minimax value.

Assume the agent and adversary executes the policy $(\mu_1,\upsilon_1),  (\mu_2,\upsilon_2),\cdots, (\mu_K, \upsilon_K)$ for episode $1,2,\cdots,K$, the regret is defined as 
\begin{align*}
    R(K) = \sum_{k=1}^K V_1^{\mu^*, \upsilon^*}(s_1) - V_1^{\mu_k, \upsilon_k}(s_1).
\end{align*}
Also, the actual state-wise violation is defined as 
\begin{align*}
    C(K) = \sum_{k=1}^K\sum_{h=1}^H  (c(s_h^k)-\tau)_{+}.
\end{align*}
\subsection{Algorithms and Results}
The main algorithm is presented in Algorithm~\ref{alg:markov}. The key idea is similar to the Algorithm~\ref{alg:safe RL}. However, in each episode we need to calculate an optimal policy $\pi^k$ by a sub-procedure ``Planning". This sub-procedure calculates the Nash Equilibrium for the estimated MDP. Note that there is always a mixed  Nash equilibrium strategy $(\mu, \upsilon)$, and the ZERO-SUM-NASH function in ``Planning" procedure calculates the Nash equilibrium for a Markov games. 

In the sub-procedure ``Planning", we calculate the $(Q_h^k)'(s,a,\cdot) = -\infty$ if there exists at least one adversary action $b$ such that $\Delta(s,a,b)\cap \cU_{h+1}\neq \emptyset$. In fact, the agent cannot take these actions at this time, hence by setting $(Q_h^k)'(s,a,\cdot)$ for these actions $a$, the support of Nash Equilibrium policy $\mu_h^k$ will not contains $a$. Otherwise, the minimax value is $-\infty$, and it means that $a$ is now in $\cU_h^k$, then at this time the agent can chooes the action arbitrarily
\begin{algorithm}[t]
\begin{algorithmic}[1]
\caption{Safe Markov Games}
	\label{alg:markov}
 \STATE Input: $\Delta_h^1(s,a,b) = \emptyset, N_h^{1}(s,a,b), N_h^1(s,a,b,s') = 0.$
 
 \FOR{$k=1,2,\cdots,K$}
	    \STATE Receive the initial state $s_1$. 
	    
	    \STATE Update the estimation $\hat{c}(s)$ based on history and get optimistic estimation $\bar{c}(s) = \hat{c}(s) - \beta(N^k(s), \delta)$ for each state $s$, where $\beta$ is the bouns function.

     \STATE Define $\cU_{H}^k = \{s\mid \bar{c}(s) > \tau\}$.

     \STATE Based on $\Delta_h^K(s,a)$, calculate $\cU_h^k = \cU_{h+1}^k\cup \{s\mid \forall a \in A, \Delta_k(s,a,b)\cap \cU_{h+1}^k\neq\emptyset\}$.
     \STATE $\hat{\PP}_h(s'\mid s,a,b) = \frac{N_h^k(s,a,b,s')}{N_h^k(s,a,b)}$.
     \STATE $\pi_h^k = $Planning$(k, r, \{\cU^k_h\}_{h \in [H]}, \Delta_h^K(s,a,b), N^k(s,a,b), \hat{\PP})$.
     
     \FOR{$h = 1,2,\cdots,H$}
        \STATE Take action $a_h^k$ such that $(a_h^k, b)\sim \pi_h^k(\cdot,\cdot\mid s)$ and arrive state $s_{h+1}^k\sim P(s\mid s_h^k, a_h^k, b_h^k)$, where $b_h^k$ is the action of the adversary.
        \STATE $\Delta_h^{k+1}(s_h^k,a_h^k, b_h^k) = \Delta_h^k(s_h^k,a_h^k,b_h^k) \cup \{s_{h+1}^k\}$.
     \ENDFOR
     \STATE Update $N_h^{k+1}(s,a,b)$, $ N_h^{k+1}(s,a,b,s'), \Delta_h^{k+1}(s,a,b)$ for all $(s,a,b)$.
     \ENDFOR
\end{algorithmic}
\end{algorithm}

\begin{algorithm}[t]
\begin{algorithmic}[1]
	\caption{Planning$(k, r, \{\cU_h\}_{h \in [H]}, \Delta(s,a,b), N, \hat{\PP})$}
	\label{alg:planning}
	\STATE Input $k, r, \{\cU_h\}, \Delta(s,a,b), N, \hat{\mathbb{P}}$.
 
	\FOR{$h = H,H-1,\cdots,1$}

            \FOR{$(s,a,b) \in \cS\times \cA\times \cB$}
                \STATE $\beta(N_h^k(s,a,b),\delta) \leftarrow 7H\sqrt{\log(5SABT/\delta)/N_h^k(s,a,b)}$.

                \STATE $Q_h^k(s,a,b) = \min\{(r_h + \sum_{s'}\hat{\PP}_h^k(s'\mid s,a,b)V_{h+1}(s'))+\beta(N_h^k(s,a,b),\delta), H\}$.
            \ENDFOR

        \FOR{$s \in S$}
            \FOR{$a \in A$}
            \IF{$\exists b, \Delta_h(s,a,b) \cap \cU_{h+1}\neq \emptyset$}\STATE $(Q_h^k)'(s,a,\cdot) = -\infty$.
            \ELSE
            
            \STATE $(Q_h^k)'(s,a,\cdot) = Q_h^k(s,a,\cdot)$, $a \in A_h^k(s)$.
            \ENDIF
            
        \ENDFOR
        \STATE $\mu_h^k, \upsilon_h^k = \mbox{ZERO-SUM-NASH}((Q_h^k)'(s,\cdot,\cdot)).$
        
        \STATE $V_h^k(s) = \EE_{a\sim \mu_h^k(\cdot\mid s), b\sim \upsilon_h^k(\cdot \mid s)}(a,b)Q_h(s,a,b)$.
        \ENDFOR
         \ENDFOR
        \end{algorithmic}
\end{algorithm}

Our main result are presented in the following theorem, which shows that our algorithm achieves both $O(\sqrt{T})$ regret and $O(\sqrt{ST})$ state-wise violation.
\begin{theorem}
With probability at least $1-\delta$, the Algorithm~\ref{alg:markov} have regret and state-wise violation 
\begin{align*}
    R(T) &= O(H^3\sqrt{SABT})\\
    C(T) &= O(\sqrt{ST}+S^2ABH^2),
\end{align*}
where $A=|\cA|$, $B = |\cB|$ and $S = |\cS|.$
\end{theorem}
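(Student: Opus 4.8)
The plan is to mirror the proof of Theorem~\ref{thm:safe RL} in Appendix~\ref{sec:proof of safe RL}, replacing single-agent value iteration by the minimax (Nash) value iteration carried out by the \emph{Planning} subroutine and upgrading every counting argument from $\cS\times\cA$ to $\cS\times\cA\times\cB$. First I would set up three high-probability events. The event $\cE_1=\{\forall k,s:\ \bar c^k(s)\le c(s)\}$ follows from Hoeffding's inequality applied to the $1$-sub-Gaussian cost noise together with a union bound over $s\in\cS$ and $k\in[K]$, exactly as in Lemma~\ref{lemma:e1confidence}. A Bernstein-type transition concentration event, now indexed by tuples $(s,a,b,s')$, controls $|\hat\PP_h(s'\mid s,a,b)-\PP_h(s'\mid s,a,b)|$. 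An Azuma--Hoeffding event controls the martingale $\sum_{k,h}\big(\PP_h(V_{h+1}^k-V_{h+1}^{\mu_k,\upsilon_k})(s_h^k,a_h^k,b_h^k)-(V_{h+1}^k-V_{h+1}^{\mu_k,\upsilon_k})(s_{h+1}^k)\big)$. Each holds with probability at least $1-\delta$.

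Next I would establish the structural facts under $\cE_1$. Since $\Delta_h^K(s,a,b)$ only ever contains next-states that were actually observed, $\Delta_h^K(s,a,b)\subseteq\Delta_h(s,a,b)$; a downward induction on $h$, with base case $\cU_H^k=\{s:\bar c^k(s)>\tau\}\subseteq\{s:c(s)>\tau\}=\cU_H$, then yields $\cU_h^k\subseteq\cU_h$ and $A_h^{safe}(s)\subseteq A_h^{k,safe}(s)$ for all $h$, using the recursions $\cU_h=\cU_{h+1}\cup\{s:\forall a\,\exists b,\ \Delta_h(s,a,b)\cap\cU_{h+1}\neq\emptyset\}$ and $A_h^{safe}(s)=\{a:\forall b,\ \Delta_h(s,a,b)\cap\cU_{h+1}=\emptyset\}$: shrinking $\Delta$ and shrinking $\cU_{h+1}$ both enlarge the estimated safe-action set. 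I would then prove optimism, $Q_h^k(s,a,b)\ge Q_h^*(s,a,b)$ and $V_h^k(s)\ge V_h^*(s)$, by induction on $h$ with the bonus $b(n,\delta)=7H\sqrt{\log(5SABT/\delta)/n}$. The $Q$-step is the standard Chernoff--Hoeffding argument; the $V$-step uses that the minimization over $b$ (equivalently over $\upsilon$) preserves the pointwise inequality while the agent's maximization is now taken over the \emph{larger} set $A_h^{k,safe}(s)\supseteq A_h^{safe}(s)$, so the Nash value of the optimistic game dominates the true minimax value.

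With optimism in hand the regret bound follows the UCBVI template: $R(K)\le\sum_k\big(V_1^k(s_1)-V_1^{\mu_k,\upsilon_k}(s_1)\big)$, and since at each step both $V_h^k$ and $V_h^{\mu_k,\upsilon_k}$ are expectations over $(a,b)\sim(\mu_h^k,\upsilon_h^k)$ of their respective $Q$-functions, $V_h^k(s_h^k)-V_h^{\mu_k,\upsilon_k}(s_h^k)=\EE_{a,b}[(Q_h^k-Q_h^{\mu_k,\upsilon_k})(s_h^k,a,b)]$. Decomposing the $Q$-difference into the bonus term, the transition-estimation error against $V^*$, and a lower-order term, applying the $(1+1/H)^H\le e$ contraction, and summing over $k,h$ gives a bonus contribution $\widetilde{\cO}(H\sqrt{SABT})$, a transition-error contribution $\widetilde{\cO}(H^3S^2AB)$, and a martingale contribution $\widetilde{\cO}(\sqrt{H^3K})$, which combine to the stated $\widetilde{\cO}(H^3\sqrt{SABT})$. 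For the violation I would split each episode into two cases as in Theorem~\ref{thm:safe RL}: if $s_{h+1}^k\in\Delta_h^K(s_h^k,a_h^k,b_h^k)$ for all $h$, then every visited state lies outside $\cU_H^k$, so by $\cE_1$, $c(s)-\tau\le\bar c^k(s)+2\sqrt{(2/N^k(s))\log(SK/\delta)}-\tau\le 2\sqrt{(2/N^k(s))\log(SK/\delta)}$; otherwise the episode contributes at most $H$, but it also enlarges $\sum_{h,s,a,b}|\Delta_h^K(s,a,b)|\le S^2ABH$, so this case occurs at most $S^2ABH$ times, contributing at most $S^2ABH^2$ in total. Combining with the pigeonhole bound $\sum_{k,h}\sqrt{1/(N^k(s_h^k)\vee 1)}=\widetilde{\cO}\big(\sum_s\sqrt{N^K(s)}\big)=\widetilde{\cO}(\sqrt{ST})$ (Cauchy--Schwarz plus Lemma~19 of \cite{Jaksch2010RL}) yields $C(T)=\widetilde{\cO}(\sqrt{ST}+S^2ABH^2)$.

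The main obstacle is the optimism step in the game setting: one must verify that the minimax structure interacts correctly with the optimistic enlargement of the safe-action set, i.e.\ that handing the maximizing agent extra (optimistically-safe) actions while leaving the adversary's action set unchanged can only raise the Nash value, and that the mixed-strategy equilibrium produced by \emph{zero-sum-Nash} satisfies $\min_\upsilon\EE_{a\sim\mu,b\sim\upsilon}[Q(s,a,b)]=\min_b\EE_{a\sim\mu}[Q(s,a,b)]$, so that the pointwise comparison of $Q$-functions transfers to the values and to the regret decomposition. Once this is pinned down, the remaining ingredients are routine relabellings of the single-agent argument with $A$ replaced by $AB$.
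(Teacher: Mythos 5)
Your proposal is correct and follows essentially the same route as the paper: optimism of $Q_h^k,V_h^k$ via the enlarged estimated safe-action sets $A_h^{k,safe}(s)\supseteq A_h^{safe}(s)$ interacting with the Nash/minimax value iteration, a UCBVI-style regret decomposition with Bernstein transition concentration and the $(1+1/H)^H\le e$ telescoping, and the identical two-case violation argument where the $\Delta_h^K(s,a,b)$ sets can grow at most $S^2ABH$ times. The only point worth making explicit (which the paper dispatches in one line) is that the adversary's realized strategy $\upsilon_k$ need not be the planned Nash response, so one bounds the regret by assuming $\upsilon_k$ is the best response to $\mu_k$ and uses $V_h^k=\min_{\upsilon}\DD_{\mu_k\times\upsilon}Q_h^k\le \DD_{\mu_k\times\upsilon_k}Q_h^k$ rather than the equality you state.
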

\subsection{Proofs}
\begin{proof}
    We first prove that, with probability at least $1-\delta$, $Q_h^k(s,a,b)\ge Q_h^*(s,a,b)$ and $V_h^k(s)\ge V_h^*(s)$ for any episode $k \in [K].$
    We prove this fact by induction. Denote $\DD_{\mu \times \upsilon}Q(s) = \EE_{a\sim \mu(\cdot \mid s), b \sim \upsilon(\cdot \mid s)}Q(s,a,b)$. The statement holds for $h = H+1$. Suppose the bounds hold for $Q-$values in the step $h+1$, now we consider step $h$.

    \begin{align*}
        V_{h+1}^k(s) &= \mathbb{D}_{\pi_h^k}Q_{h+1}^k(s)\\
        & = \max_{supp(\mu) \subseteq A_h^{k,safe}(s)}\mathbb{D}_{\mu \times v_h^k} Q_{h+1}^k(s)\\
        & \ge \max_{\mu \in supp(\mu)\subseteq A_h^k(s)}\mathbb{D}_{\mu \times v_h^k} Q_{h+1}^*(s)\\
        &\ge  \max_{\mu \in supp(\mu)\subseteq A_h^*(s)}\mathbb{D}_{\mu \times v_h^k} Q_{h+1}^*(s)\\
        &\ge \min_v\max_{\mu \in supp(\mu)\subseteq A_h^*(s)}\mathbb{D}_{\mu\times v}Q_{h+1}^*(s)\\
        & = V_{h+1}^*(s),
    \end{align*}
    where $A_h^{k,safe}(s)$ is all safe action for max-player at episode $k$ and step $h$ it estimated, $A_h^*(s)$ is the true safe action, and $supp(\mu) = \{a \in \cA\mid \mu(a)>0\}$ From our algorithm, we know $A_h(s)\subseteq A_h^k(s)$ for all time if the confidence event always holds.

    Then 
    \begin{align*}
        Q_h^k(s,a,b) - Q_h^*(s,a,b) &= (\hat{\mathbb{P}}_h^k V_{h+1}^k - \mathbb{P}_h V_{h+1}^* + \beta_h^k)(s,a,b)\\
        &\ge (\hat{\mathbb{P}}_h^k - \mathbb{P}_h)(V_{h+1}^*)(s,a,b) + \beta(N_h^k(s,a,b),\delta)\\
        &\ge 0.
    \end{align*}
    The last inequality is because of the Chernoff-Hoeffding's inequality. 
    Thus define event $$\cE_6 = \left\{\ \forall k,h,s,a,b, \ Q_h^k(s,a,b)\ge Q_h^*(s,a,b), V_h^k(s)\ge V_h^*(s)\right\}.$$
    Then $\Pr\{\cE_6^c\}\le \delta.$

    Now we start to bound the regret. 
    \begin{align*}
        R(K) &= \sum_{k=1}^K V_1^{\mu^*, \upsilon^*}(s_1) - V_1^{\mu_k, \upsilon_k}(s_1)\\
        &\le \sum_{k=1}^K V_1^k(s_1) - V_1^{\mu_k, \upsilon_k}(s_1).
    \end{align*}
    Now we first assume $\upsilon_k$ is the best response of $\mu_k$. Otherwise our regret can be larger. We calculate the regret by \begin{align}&\ \ \ \ \ V_h^k(s_h^k)-V_h^{\mu_k, \upsilon_k}(s_h^k) \nonumber\\&= \DD_{\pi^k}(Q_h^k)(s_h^k) - \DD_{\mu_k\times \upsilon_k}Q_h^{\mu_k\times\upsilon_k}(s_h^k)\nonumber\\
    &\le \DD_{\mu_k\times \upsilon_k}(Q_h^k)(s_h^k) - \DD_{\mu_k\times \upsilon_k}Q_h^{\mu_k\times\upsilon_k}(s_h^k)\nonumber\\
    & = \DD_{\mu_k\times \upsilon_k}(Q_h^k-Q_h^{\mu_k\times \upsilon_k})(s_h^k)\nonumber\\
    &= (Q_h^k-Q_h^{\mu_k\times \upsilon_k})(s_h^k, a_h^k, b_h^k) + \alpha_h^k\nonumber\\
    &= (\hat{\PP}_h-\PP_h)V_{h+1}^k(s_h^k,a_h^k,b_h^k) + \PP_h\left(V_{h+1}^k-V_{h+1}^{\mu_k\times\upsilon_k}\right)(s_h^k, a_h^k, b_h^k) + \beta(N_h^k(s_h^k,a_h^k,b_h^k),\delta) \nonumber+\alpha_h^k\\
    &\le \left(V_{h+1}^{k}-V_{h+1}^{\mu_k\times\upsilon_k}\right)(s_{h+1}^k) + \alpha_h^k + \gamma_h^k + 2\beta(N_h^k(s_h^k,a_h^k,b_h^k),\delta) \nonumber\\&\hspace{20em}+ (\hat{\PP}_h-\PP_h)(V_{h+1}^k - V_{h+1}^*)(s_h^k,a_h^k,b_h^k),\nonumber
    \end{align}
    where $$\alpha_h^k = \PP_h\left(V_{h+1}^k-V_{h+1}^{\mu_k\times\upsilon_k}\right)(s_h^k, a_h^k, b_h^k) - \left(V_{h+1}^{k}-V_{h+1}^{\mu_k\times\upsilon_k}\right)(s_{h+1}^k),$$ $$\gamma_h^k = \left(\DD_{\mu_k\times \upsilon_k}(Q_h^k-Q_h^{\mu_k\times \upsilon_k})(s_h^k)-(Q_h^k-Q_h^{\mu_k\times \upsilon_k})(s_h^k)\right).$$
    The last inequality is because $(\hat{\PP_h}-\PP_h)V_{h+1}^*(s_h^k, a_h^k,b_h^k)\le \beta(N_h^k(s_h^k,a_h^k,b_h^k),\delta)$ by Chernoff-Hoeffding's inequality.

    Now by the similar analysis in the previous section, with probability at least $1-\delta$, the event 
    
    \begin{align*}
        \cE_7 &= \left\{\ \forall s, a, b,s', k, h, |\hat{\PP}_h(s'\mid s,a,b) -\PP_h(s'\mid s,a,b)|\right.\\&\hspace{5em}\left.\le \sqrt{\frac{2\PP_h(s'\mid s,a,b)}{N_h^k(s,a)}\log \left(\frac{2S^2ABHK}{\delta}\right)}+ \frac{2}{3N_h^k(s,a)}\log \left(\frac{2S^2ABHK}{\delta}\right)\right\}.
    \end{align*}
    
    holds with probability at least $1-\delta$ by Bernstein's inequality. Then for $\iota = \log \left(\frac{2S^2ABHK}{\delta}\right)$
    \begin{align*}
        &(\hat{\PP}_h-\PP_h)(V_{h+1}^k - V_{h+1}^*)(s_h^k,a_h^k,b_h^k) \\&\quad\le \sum_{s'}\left(\sqrt{\frac{2\PP_h(s'\mid s,a,b)}{N_h^k(s,a)}\iota}+ \frac{2}{3N_h^k(s,a)}\iota\right)(V_{h+1}^k - V_{h+1}^*)(s')\\
        & \quad\le \sum_{s'}\left(\frac{\PP_h(s'\mid s_h^k,a_h^k)}{H}+\frac{H\iota}{2N_h^k(s_h^k,a_h^k)}+\frac{2\iota}{3N_h^k(s_h^k,a_h^k)}\right)(V_{h+1}^k - V_{h+1}^*)(s')\\
        &\quad\le \frac{1}{H}\PP_h(V_{h+1}^k-V_{h+1}^{*})(s_h^k,a_h^k) + \frac{2H^2S\iota}{N_h^k(s_h^k,a_h^k)}\\
        &\quad\le \frac{1}{H}\PP_h(V_{h+1}^k-V_{h+1}^{\mu_k\times\upsilon_k})(s_h^k,a_h^k) + \frac{2H^2S\iota}{N_h^k(s_h^k,a_h^k)}.
    \end{align*}
    The last inequality is because we assume $\upsilon_k$ is the best response of $\mu_k$ and then $V_{h+1}^*(s_h^k,a_h^k)\ge V_{h+1}^{\mu_k\times\upsilon_k}(s_h^k,a_h^k).$
    Now we can get 
    \begin{align*}
        &V_h^k(s_h^k)-V_h^{\mu_k\times \upsilon_k}\\&\quad\le \left(1+\frac{1}{H}\right)(V_{h+1}^k(s_h^k)-V_{h+1}^{\mu_k\times \upsilon_k}) + 2\alpha_h^k + 2\gamma_h^k + 2\beta(N_h^k(s_h^k,a_h^k,b_h^k),\delta)+\frac{2H^2S\iota}{N_h^k(s_h^k,a_h^k)}
    \end{align*}
    and then 
    \begin{align*}
        V_1^k(s_1) - V_1^{\mu_k\times \upsilon_k}&\le \sum_{h=1}^H \left(1+\frac{1}{H}\right)^H (2\alpha_h^k + 2\gamma_h^k + 2\beta(N_h^k(s_h^k,a_h^k,b_h^k),\delta))+\frac{2H^2S\iota}{N_h^k(s_h^k,a_h^k)}\\
        &\le \sum_{h=1}^H e\cdot (2\alpha_h^k + 2\gamma_h^k + 2\beta(N_h^k(s_h^k,a_h^k,b_h^k),\delta))+\frac{2H^2S\iota}{N_h^k(s_h^k,a_h^k)}.
    \end{align*}
    By Azuma-hoeffding's inequality, with probability at least $1-2\delta$, $\sum_{k=1}^K \sum_{h=1}^H \alpha_h^k \le O(\sqrt{H^3K})$, $\sum_{k=1}^K \sum_{h=1}^H \gamma_h^k = O(\sqrt{H^3K})$ and $$\sum_{k=1}^K \sum_{h=1}^H \frac{2H^2S\iota}{N_h^k(s_h^k,a_h^k)} = O(\log K).$$
    Note that 
    \begin{align*}\sum_{k=1}^K \sum_{h=1}^H\beta(N_h^k(s_h^k,a_h^k,b_h^k),\delta)& = \sum_{k=1}^K \sum_{h=1}^H 7H\sqrt{\frac{\iota}{N_h^k(s_h^k,a_h^k,b_h^k)}}\\
    &\le O(H\iota \sum_{(h,s,a,b) \in [H]\times \cS\times \cA\times \cB}\sqrt{N_h^k(s_h^k,a_h^k,b_h^k)}\\
    &\le \widetilde{\cO}(\sqrt{H^3SABT}).\end{align*}
    For the violation, the argument is similar to the Theorem~\ref{thm:safe RL}. During the learning process, the $s_{h+1}^k\notin \Delta_h^K(s_h^k,a_h^k,b_h^k)$ can appear at most $S^2ABH$ times because each time the summation
    \begin{equation*}
        \sum_{(h,s,a,b)\in [H]\times \cS\times \cA\times \cB }|\Delta_h^K(s,a,b)|
    \end{equation*}
    will increase at least 1, and it has a upper bound $S^2ABH$.
    Thus it will lead to at most $S^2ABH^2$ regret. For other situations, the total violation can be upper bounded by $\widetilde{\cO}(\sqrt{ST})$. Thus the final violation bound is $\Tilde{O}(S^2ABH^2+\sqrt{ST}).$
    
\end{proof}

\section{Technical Lemmas}
\subsection{Lemma~\ref{lemma:transform technique}}
\begin{lemma}\label{lemma:transform technique}
    If \begin{align*}
        K\varepsilon/2 \le a\sqrt{K}\sqrt{\gamma(K,\delta)} + b\gamma(K,\delta)\log(K+1),
    \end{align*}
    where $\gamma(K,\delta) = 2\log(SAHK/\delta)+(S-1)\log(e(1+K/(S-1)))$, then 
    \begin{align*}
        K = \widetilde{\cO}\left(\left(\frac{a^2}{\varepsilon^2}+\frac{b}{\varepsilon}\right)\log\left(\frac{1}{\delta}\right)+\frac{Sb}{\varepsilon}\right),
    \end{align*}
    where $\widetilde{\cO}(\cdot)$ ignores all the term $\log S, \log A, \log H, \log 1/\varepsilon$ and $\log \log(1/\delta).$
\end{lemma}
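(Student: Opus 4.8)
The plan is to read the hypothesis as an implicit upper bound on $K$ and invert it. The right-hand side grows like $\sqrt{K}$ times a factor logarithmic in $K$, plus a factor logarithmic in $K$ times $\log(K+1)$, so the strategy has three steps: (a) replace $\gamma(K,\delta)$ by a clean bound that is logarithmic in $K$; (b) split on which of the two terms on the right carries at least half of $\tfrac{\varepsilon}{2}K$, obtaining in either case a crude inequality $K \le D\log^2(K+2)$; (c) resubstitute to remove the residual $\log K$. For (a): since $\log\!\big(e(1+K/(S-1))\big) \le 1+\log(K+1)$ when $S \ge 2$ and $\log(SAHK/\delta) = \log(SAH)+\log(1/\delta)+\log K$, one gets $\gamma(K,\delta) \le c_1\,\Lambda\,\log(K+2)$ for an absolute constant $c_1$, where $\Lambda := \log(SAH/\delta) + S$; note $\Lambda = \widetilde{\cO}(\log(1/\delta)+S)$ since $\log(SAH)$ is lower-order. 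Write $\iota_K := \log(K+2)$, so $\gamma(K,\delta) \le c_1\Lambda\iota_K$.

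For (b): if $a\sqrt{K}\sqrt{\gamma(K,\delta)} \ge \tfrac{\varepsilon}{4}K$, then squaring and rearranging gives $K \le 16a^2\gamma(K,\delta)/\varepsilon^2 \le 16c_1(a^2/\varepsilon^2)\Lambda\,\iota_K$; otherwise $b\,\gamma(K,\delta)\log(K+1) \ge \tfrac{\varepsilon}{4}K$, so $K \le 4b\,\gamma(K,\delta)\log(K+1)/\varepsilon \le 4c_1(b/\varepsilon)\Lambda\,\iota_K^2$. In both cases $K \le D\,\iota_K^2$ with $D := c_2\big(a^2/\varepsilon^2 + b/\varepsilon\big)\Lambda$ for an absolute constant $c_2$.

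For (c): I would first establish the very crude bound $K \le \mathrm{poly}(S,A,H,1/\varepsilon,1/\delta)$ — e.g. from the hypothesis together with the fact that $\gamma(K,\delta) \le \sqrt{K}$ once $K$ exceeds a fixed polynomial — so that $\iota_K = \log(K+2) = \widetilde{\cO}(1)$, using that $\log a$ and $\log b$ are lower-order (as they are in all applications of the lemma). Feeding $\iota_K = \widetilde{\cO}(1)$ back into $K \le D\iota_K^2$ then unwinds to $K = \widetilde{\cO}(D) = \widetilde{\cO}\big((a^2/\varepsilon^2 + b/\varepsilon)(\log(1/\delta)+S)\big)$, matching the statement. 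The one mildly delicate point — and the only place I expect to spend care rather than routine algebra — is this final bookkeeping: one must verify that the resubstitution genuinely drives $\log(K+2)$ down to a quantity the $\widetilde{\cO}$ convention is permitted to suppress (a $\log$ of $S,A,H,1/\varepsilon$, or a $\log\log(1/\delta)$), and does not smuggle back a full factor of $\log(1/\delta)$ or $S$; the preliminary polynomial bound on $K$ is precisely what rules this out.
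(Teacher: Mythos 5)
Your steps (a) and (b) are essentially the paper's argument: the paper likewise bounds $\gamma(K,\delta)\le 2S\log(4K)+\log(SAH/\delta)$ for $K\ge 4$ and splits into cases according to which term of the right-hand side (and which part of $\gamma$) dominates, arriving at self-referential inequalities of the form $K\le Q\log^2(4K)$. The divergence, and the one genuine gap, is in your step (c). You propose to kill the residual $\log(K+2)$ by first proving a crude bound $K\le \mathrm{poly}(S,A,H,1/\varepsilon,1/\delta)$ and claiming this makes $\iota_K=\widetilde{\cO}(1)$. But a bound that is \emph{polynomial in $1/\delta$} gives $\log(K+2)\gtrsim \log(1/\delta)$, and the $\widetilde{\cO}$ convention here suppresses only $\log\log(1/\delta)$, not $\log(1/\delta)$. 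Feeding $\iota_K\asymp\log(1/\delta)$ back into $K\le D\iota_K^2$ with $D=c_2(a^2/\varepsilon^2+b/\varepsilon)(S+\log(SAH/\delta))$ leaves parasitic factors up to $\log^2(1/\delta)$, i.e.\ a bound like $(a^2/\varepsilon^2+b/\varepsilon)(S+\log(1/\delta))\log^2(1/\delta)$, which overshoots the claim. So the preliminary polynomial bound is not "precisely what rules this out" --- it is exactly where the extra $\log(1/\delta)$ re-enters. The fix is standard and is what the paper does: invert the self-referential inequality directly via the auxiliary fact that $K\le Q\log^2(4K)$ (with $Q\ge 4$) implies $K=O(Q\log^2 Q)$ (the paper's Lemma on the "small technique"); since $\log Q$ involves only $\log a,\log b,\log S,\log(1/\varepsilon)$ and $\log\log(1/\delta)$, all of which the convention may suppress (granting, as both you and the paper implicitly do, that $\log a,\log b$ are lower-order), the residual logarithm is genuinely absorbed. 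Equivalently, your route works if the crude a priori bound depends on $\delta$ only through $\log(1/\delta)$, which a short AM--GM argument from the hypothesis does give --- but that has to be said; "$\mathrm{poly}(1/\delta)$" does not suffice.

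A secondary remark: by lumping $\Lambda=S+\log(SAH/\delta)$ you end with $K=\widetilde{\cO}\bigl((a^2/\varepsilon^2+b/\varepsilon)(S+\log(1/\delta))\bigr)$, which contains an $Sa^2/\varepsilon^2$ term absent from the lemma's literal conclusion. This is not really your error: the paper's own Subcase 1 produces the same $\widetilde{\cO}(Sa^2/\varepsilon^2)$ term, and the theorem invoking the lemma only uses the lumped form $(a^2/\varepsilon^2+b/\varepsilon)(\log(1/\delta)+S)$; the mismatch lies in the lemma statement. Still, be aware that your version proves the lumped bound, not the statement as written.
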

\begin{proof}
If $K\le 4$, the lemma is trivial. Now assume $K\ge 4$, then $e(1+K)\le 4K$ 
    and \begin{align*}
        \gamma(K,\delta) &\le 2\log(SAHK/\delta)+S\log(e(1+K))\\&\le 2\log(SAHK/\delta)+S\log(4K) \\&\le 2S\log(4K)+\log(SAH/\delta).
    \end{align*}
Then
\begin{align*}
        K\varepsilon/2 &\le a\sqrt{K}\sqrt{\gamma(K,\delta)} + b\gamma(K,\delta)\log(K+1)\\
        &\le 2\max\{a\sqrt{K}\sqrt{\gamma(K,\delta)}, b\gamma(K,\delta)\log(K+1)\}.
    \end{align*}
\paragraph{Case 1:}If $K\varepsilon/2\le 2a\sqrt{K}\sqrt{\gamma(K,\delta)}$, we can get
\begin{align*}
    K&\le \frac{16a^2}{\varepsilon^2}(2S\log(4K)+\log(SAH/\delta))\\
    &\le 2\max\left\{\frac{32Sa^2}{\varepsilon^2}\log(4K), \frac{16a^2}{\varepsilon^2}\log(SAH/\delta)\right\}.
\end{align*}
\paragraph{Subcase 1:}If $K\le \frac{64Sa^2}{\varepsilon^2}\log(4K)$
by Lemma~\ref{lemma:small technique}, we can get $K = \widetilde{\cO}(\frac{Sa^2}{\varepsilon^2})$.
\paragraph{Subcase 2:}If $K\le \frac{16a^2}{\varepsilon^2}\log(SAH/\delta)$, we complete the proof.
\paragraph{Case 2:}If $K\varepsilon/2\le 2b\gamma(K,\delta)\log(K+1)$,
 we can get
 \begin{align*}
     K &\le \frac{4b}{\varepsilon}\log(4K)(2S\log(4K) + \log(SAH/\delta))\\
     &\le 2\max\left\{\frac{4b}{\varepsilon}\log(4K)2S\log(4K), \frac{4b}{\varepsilon}\log(4K)\log(SAH/\delta)\right\}.
 \end{align*}
 \paragraph{Subcase 3:}
 If $K\le \frac{8bS}{\varepsilon}\log^2(4K)$
By Lemma~\ref{lemma:small technique}, we can get $K = \widetilde{\cO}(\frac{8bS}{\varepsilon})$.

\paragraph{Subcase 4:} If $K\le \frac{8b}{\varepsilon}\log(4K)\log(SAH/\delta)$, then by Lemma~\ref{lemma:small technique}, we can get $K = \widetilde{\cO}(\frac{b}{\varepsilon}\log(1/\delta))$

From the four subcases, we complete the proof of Lemma~\ref{lemma:transform technique}.

\end{proof}
\subsection{Lemma~\ref{lemma:small technique}}
\begin{lemma}\label{lemma:small technique}
    If $K\le Q\log^2(4K)$ with $Q\ge 4$, then we have $K = \widetilde{\cO}(Q)$.
\end{lemma}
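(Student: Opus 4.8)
The plan is a two-stage bootstrapping argument: first extract a crude polynomial bound $K \le Q^2$ from the hypothesis, and then feed this back in to replace $\log^2(4K)$ by $\log^2 Q$ up to absolute constants.

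First I would dispose of the trivial regime: fix an absolute constant $K_0$ (to be determined) and observe that if $K \le K_0$ then, since $Q \ge 4$, we have $K \le K_0 = \widetilde{\cO}(Q)$ and there is nothing to prove. So assume $K \ge K_0$. The key elementary estimate is that $\log^2(4K) \le \sqrt{K}$ for all $K \ge K_0$, where $K_0$ is an absolute constant; this is just the quantitative form of $\log^2 x = o(\sqrt{x})$, and any explicit value (checked via monotonicity of $\sqrt{K}/\log^2(4K)$ for large $K$) works. Combining this with the hypothesis $K \le Q\log^2(4K)$ gives $K \le Q\sqrt{K}$, hence $\sqrt{K} \le Q$, i.e. the crude bound $K \le Q^2$.

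Next I would substitute this back into the hypothesis. From $K \le Q^2$ and $Q \ge 4$ we get $\log(4K) \le \log(4Q^2) = \log 4 + 2\log Q \le 3\log Q$, since $\log 4 \le \log Q$. Plugging this into $K \le Q\log^2(4K)$ once more yields
\[
K \le Q\log^2(4K) \le 9\,Q\log^2 Q = \widetilde{\cO}(Q),
\]
which is exactly the claim (the $\log^2 Q$ factor is absorbed by $\widetilde{\cO}(\cdot)$, as $\log Q$ only contributes $\log S$, $\log A$, $\log H$, $\log(1/\varepsilon)$-type terms in the applications).

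The argument is essentially routine; the only mild obstacle is pinning down the absolute constant $K_0$ past which $\log^2(4K) \le \sqrt{K}$ holds, but any explicit choice suffices and the $\widetilde{\cO}$ notation makes its precise value irrelevant.
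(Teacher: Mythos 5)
Your proof is correct, and it takes a genuinely different route from the paper's. The paper inverts the relation directly: it defines $f(x) = x/\log^2(4x)$, checks that $f$ is increasing, and verifies by an explicit expansion that $f(c_2 Q\log^2 Q)\ge Q$ for a constant $c_2$ chosen to satisfy the self-referential condition $c_2 \ge 4\log^2(4c_2)+6$ (using $\log^2 Q\le Q$ for $Q\ge 4$ along the way), whence $K\le c_2 Q\log^2 Q$. You instead bootstrap: the crude estimate $\log^2(4K)\le \sqrt{K}$ (valid past an absolute threshold $K_0$, with the regime $K\le K_0$ trivially absorbed) turns the hypothesis into $K\le Q\sqrt{K}$, hence $K\le Q^2$, and substituting this back gives $\log(4K)\le 3\log Q$ and so $K\le 9Q\log^2 Q$. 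Both arguments land on the same bound $K = O(Q\log^2 Q) = \widetilde{\cO}(Q)$, with the $\log^2 Q$ factor absorbed exactly as the paper does. The trade-off is minor: the paper's route yields a concrete (implicitly defined) constant $c_2$ and needs the monotonicity check of $f$, while yours is more elementary arithmetic at the cost of an unspecified absolute constant $K_0$ from the inequality $\log^2(4K)\le\sqrt{K}$ — which is harmless, since the $\widetilde{\cO}$ notation makes its value irrelevant. Your step $\log 4\le\log Q$ correctly uses $Q\ge 4$, and the two-pass substitution is sound.
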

\begin{proof}
    Define $f(x) = \frac{x}{\log^2(4x)}$, then $f'(x) = \frac{\log^2(4x)-\log(4x)}{\log^4(4x)}>0$ when $4x\ge 3$, and $f(x)$ is increasing over $x\ge 1$. Then we only need to prove there exists a constant $c_2>0$ such that  
    \begin{align*}
        f(c_2Q\log^2 Q)\ge Q. 
    \end{align*}
    In fact, if this inequality holds, any $K$ such that $K\le Q\log^2(4K)$ should have $f(K)\le Q\le f(c_2Q\log^2Q)$, and then $K\le c_2Q\log^2Q = \widetilde{\cO}(Q)$.
    To prove this inequality, observe that 
    \begin{align*}
        Q\log^2(4c_2Q\log^2 Q) &= Q(\log 4c_2Q + \log\log^2 Q)^2 \\&\le 2Q\log^2(4c_2Q) + 2Q(\log\log^2 Q)^2\\
        &\le 4Q\log^2 (4c_2) + 4Q\log^2(Q)+2Q\log^2 Q\\
        &\le (4\log^2(4c_2)+6)Q\log^2Q.
    \end{align*}
    where the third inequality is because $\log^2Q \le Q$ for $Q\ge 4$.
    So if we choose $c_2$ to satisfy $c_2\ge 4\log^2(4c_2)+6$, we have $Q\log^2(4c_2Q\log^2 Q)\le c_2Q\log^2 Q$, which implies $f(c_2Q\log^2 Q)\ge Q$.
\end{proof}

\section{Detailed Analysis of Assumption~\ref{assum:existence of feasible pol}}\label{appendix:detailed analysis of assum}
In this section, we provide a more detailed analysis of the equivalence between $s_1\notin \cU_1$ and the existence of feasible policy. 
Indeed, if the agent gets into some state $s \in \cU_h$ at step $h$, then any action can lead her to an unsafe next state $s' \in \cU_{h+1}$. Recursively, any action sequence $a_h,a_{h+1},\cdots,a_{H-1}$ can lead the agent to unsafe states $s_{h+1} \in \cU_{h+1},\cdots,s_{H} \in \cU_{H}$. As a result, the agent cannot avoid getting into an unsafe state. Therefore, all feasible policies $\pi$ must satisfy that, at any step $h$, the probability for the agent in the state $s \in \cU_h$ should be zero under policy $\pi$. 
\begin{align}
    \PP_h^{\pi}(s) = 0, \ \ \forall \ s \in \cU_h.
\end{align}
Recall that \begin{align*}
    A_h^{safe}(s) = \{a \in \cA\mid \Delta_h(s,a)\cap \cU_{h+1}=\emptyset\}.
\end{align*}
From these definitions, a feasible policy $\pi$ should satisfy that $\pi_h(s) \in A_h^{safe}(s)$ for any safe state $s \notin \cU_h$. Moreover, for any safe state $s_h \notin \cU_h$, there exists at least one safe action $a_h \in A_h^{safe}(s_h)$, and all possible next states $s_{h+1} \in \Delta(s_h,a_h)$ satisfy $s_{h+1}\notin \cU_{h+1}$. Recursively, there exists at least one feasible action sequence $a_h \in A_h^{safe}(s_h), \cdots, a_H \in A_H^{safe}(s_H)$ which satisfies that $s_{h'} \notin \cU_{h'}$ for $h+1\le h'\le H$. Hence the assumption for the existence of feasible policy can be simplified to $s_1\notin \cU_1$.

\section{Experiment Setup}\label{appendix:experiment}

We perform each algorithm  for $20000$ episodes in a grid environment (\cite{chow2018nipslycontrol,wei2022triple}) with $25$ states and $4$ actions (four directions), and  report the average reward and violations across runs. 
In the experiment, we note that the algorithm Optpess-bouns~\citep{exploration2020CMDP} needs to solve a linear programming with size $O(SAH)$ for each episode, which is extremely slow. Thus we only compare all algorithms in a relatively small environment.
In the Safe-RL-SW experiments, we choose $\delta = 0.005$ and binary $0$-$1$ cost function. We set the safety threshold as 0.5 for both the CMDP algorithms and the SUCBVI algorithm.  

For the Safe-RFE-SW experiments, we compare our algorithm SRF-UCRL with a state-of-the-art RFE algorithm RF-UCRL~\citep{ACT2020adaptiveRFE}  in an MDP with 11 states and 5 actions. We choose $\delta = 0.005$ and safety threshold of SRF-UCRL as 0.5. We run 500 episodes 100 times, and then report the average reward in each episode and cumulative step-wise violation. Also at each episode, we calculate the expected violation for the outputted policy and plot the curves. 
All experiments are conducted  with AMD Ryzen 7 5800H with Radeon Graphics and a 16GB unified memory.

}
\end{document}